\documentclass[11pt,letterpaper, logo]{mystyle}

\usepackage[utf8]{inputenc} 
\usepackage[T1]{fontenc}    
\usepackage[bottom]{footmisc} 
\usepackage{hyperref}       
\usepackage{url}
\usepackage[percent]{overpic}            
\expandafter\def\expandafter\UrlBreaks\expandafter{\UrlBreaks\do\-} 
\usepackage{booktabs}       
\usepackage{amsfonts}       
\usepackage{microtype}      
\usepackage{lipsum}         
\usepackage{graphicx}
\usepackage{enumitem}
\usepackage[numbers,sort]{natbib}
\usepackage{titletoc}     
\usepackage{float}        
\usepackage{amsmath}
\usepackage{multirow}
\usepackage{makecell}
\usepackage{tabularx}

\usepackage{subcaption}
\usepackage{wrapfig}
\usepackage{algorithm}
\usepackage{algorithmic}

\newtheorem{theorem}{Theorem}
\newtheorem{assumption}[theorem]{Assumption}

\newtheorem{lemma}[theorem]{Lemma}
\newtheorem{corollary}[theorem]{Corollary}
\newtheorem{proposition}[theorem]{Proposition}
\newtheorem{remark}[theorem]{Remark}

\usepackage{tcolorbox}
\tcbuselibrary{breakable}
\usepackage{tikz}
\usetikzlibrary{calc}
\definecolor{remarkblue}{RGB}{31,78,140}
\definecolor{gaingreen}{RGB}{25,107,36}
\newcommand{\gain}[1]{\,{\footnotesize\bfseries\textcolor{gaingreen}{$\uparrow$\textbf{#1}}}}
\definecolor{dropred}{RGB}{165,49,49}
\newcommand{\drop}[1]{\,{\footnotesize\textcolor{dropred}{$\downarrow$#1}}}
\newtheorem*{restatethm}{Theorem}
\newtcolorbox{thmbox}{breakable, colback=remarkblue!7,
  colframe=remarkblue!90, boxrule=1.0pt, arc=2pt,
  left=7pt, right=7pt, top=4pt, bottom=4pt,
  before skip=6pt, after skip=6pt}
\newtcolorbox{remarkbox}{breakable, colback=remarkblue!5,
  colframe=remarkblue!85, boxrule=0.7pt, arc=2pt,
  left=7pt, right=7pt, top=4pt, bottom=4pt,
  before skip=6pt, after skip=6pt}
\newtcolorbox{promptbox}[1]{breakable, colback=gray!4,
  before upper={\setlength{\parindent}{0pt}},
  colframe=gray!55!black, boxrule=0.6pt, arc=3pt,
  left=7pt, right=7pt, top=5pt, bottom=5pt,
  title={#1}, fonttitle=\small\bfseries,
  colbacktitle=gray!15, coltitle=black,
  before skip=8pt, after skip=8pt}

\newcommand{\ours}{WMRL}

\graphicspath{ {./figures/} }

\setlist[itemize]{leftmargin=12pt}

\runningtitle{Scaling Automatic Research Agents via World Models}

\title{
\textbf{Scaling Automatic Research Agents via World Models}
}

\date{\vspace{-3ex}}

\makeatletter
\def\thm@space@setup{\thm@preskip=3pt \thm@postskip=3pt}
\makeatother

\begin{document}

\author{
    \normalfont
    \begin{minipage}{0.98\linewidth}
    \centering
    \setlength{\parskip}{0pt}
    \textbf{Xiyuan Yang}$^{1\dagger}$,
    \textbf{Sheikh Sarwar}$^{2}$,
    \textbf{Jingru Cheng}$^{2}$,
    \textbf{Zhan Shi}$^{2}$,
    \textbf{Duanshun Li}$^{2}$,
    \par\vspace{2pt}
    \textbf{Huiyuan Chen}$^{2}$,
    \textbf{Haiyang Zhang}$^{2}$,
    \textbf{Xing Fan}$^{2}$,
    \textbf{Chenlei Guo}$^{2}$,
    \textbf{Jingrui He}$^{1*}$,
    \textbf{Zhenyu Liao}$^{2*}$
    \par\vspace{7pt}
    $^{1}$University of Illinois Urbana-Champaign \quad
    $^{2}$Amazon \quad
    $^{*}$Corresponding authors
    \end{minipage}
}

\begin{abstract}

\textbf{\large Abstract:}
Automating empirical research is a long-standing direction of AI. Recent automatic research (AutoResearch) agents bring this goal within reach, as modern LLMs show the capability to independently implement solutions and learn from the execution outcomes. 
Behind these gains, post-training (especially RL) plays a central role. 
In this paper, we identify a fundamental tension when scaling RL for these agents: the two components of every AutoResearch trajectory (agent generation and environment execution) scale in very different manners, 
since all generation shares compute through batching, while each execution occupies its exclusive sandbox and real machine time.
As a result, the environment execution dominates the training cost and becomes the bottleneck as trajectories grow.
To resolve this tension, we propose \emph{World Model RL} (\ours{}), which replaces environment execution with a world model to remove this bottleneck. 
Additionally, the world model can be imperfect, as its rewards are corrupted by bias and noise. Therefore, we further equip WMRL with two mitigations, \emph{Online Debiasing} and \emph{Inverse-Variance Denoising}, which offset the bias and suppress the noise respectively.
Theoretically, we prove that both mitigations of \ours{} strictly improve the convergence guarantee.
Empirically, WMRL accelerates training by $3$--$4\times$ on various tasks at different agent scales, while exceeding the performance of standard RL baselines. Moreover, our post-trained 4B and 9B agents outperform much larger open-weight agents of 48B and 120B on held-out benchmarks. Beyond AutoResearch, WMRL also transfers to post-training embodied VLA policies, which demonstrates the generalizability of our method.

\par\vspace{9pt}\noindent
\normalfont\textbf{Project Page:}\ \url{https://xiyuanyang45.github.io/WMRL/}

\end{abstract}
\maketitle
{\let\thefootnote\relax\footnotetext{$^{\dagger}$Work done during an internship at Amazon.}}

\section{Introduction}
\label{sec:introduction}

\begin{figure}[b!]
    \centering
    \includegraphics[width=\textwidth]{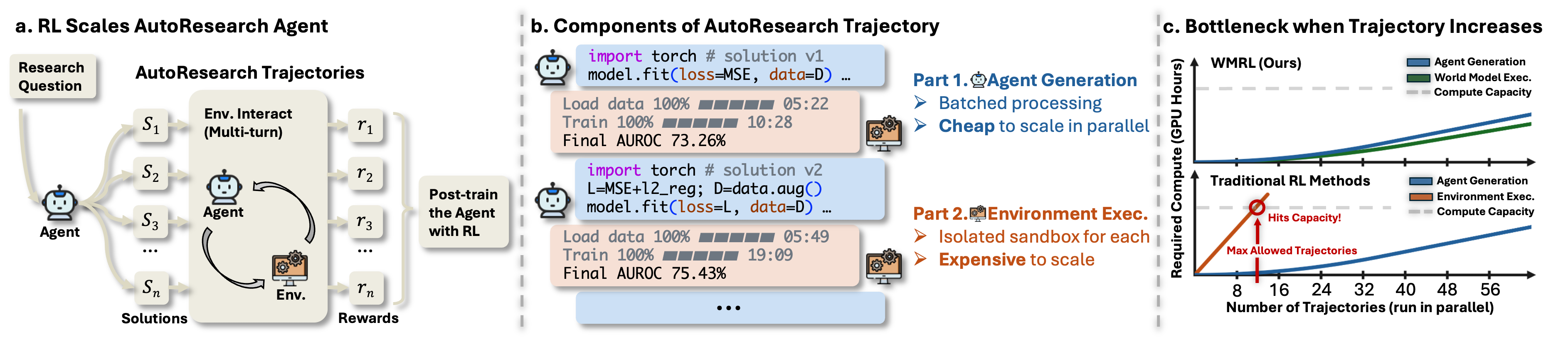}
    \caption{
    \textbf{AutoResearch trajectories scale asymmetrically, making environment execution the bottleneck, and our \ours{} removes it.}
    \textbf{(a)} Per research question, the agent proposes a group of solutions $S_i$, graded by execution into rewards $r_i$, and RL demands a massive number of such trajectories.
    \textbf{(b)} Per trajectory, generation amortizes compute via batching, but execution needs one isolated sandbox per solution.
    \textbf{(c)} Execution compute thus hits capacity first in traditional RL, whereas \ours{} scales without this limit.
    }
    \label{fig:teaser}
\end{figure}

An Automatic Research (AutoResearch) agent is a language model that independently conducts empirical research~\citep{lu2024ai,gottweis2025aicoscientist,schmidgall2025agentlab}. 
Given a research question, it formulates an idea, implements the experiment, analyzes the outcome, and iterates~\citep{yao2023react,yao2023tot,shinn2023reflexion}. 
Such agents have already proven capable across various domains. In the natural sciences, for example, they design chemical syntheses and propose reaction conditions that are validated by wet-lab experiments later~\citep{boiko2023autonomous,bran2024chemcrow,szymanski2023alab}; in machine learning and data science, they explore real datasets and build training pipelines that outperform human experts~\citep{jiang2025aide,guo2024dsagent,huang2024mlagentbench}.
In an AutoResearch task, the agent iteratively generates and executes solutions. These interactions form trajectories, and the execution outcomes provide rewards.
With both trajectories and rewards in place, AutoResearch is a natural fit for reinforcement learning (RL)~\citep{schulman2017ppo,shao2024deepseekmath,lambert2024tulu3,guo2025deepseekr1}, a promising direction to further improve these capabilities.

Like most of RL's successes, training strong AutoResearch agents demands scale, i.e., massive online trajectories collected during training~\citep{guo2025deepseekr1}.
However, we identify a fundamental tension underneath this demand: the two components of a trajectory (i.e., agent generation and environment execution) scale in different manners as the trajectory volume grows.
On the generation side, rollouts are served by modern inference backends (e.g., vLLM~\citep{kwon2023vllm} and SGLang~\citep{zheng2024sglang}), 
where batching techniques allow concurrent trajectories to share compute, making the cost of additional trajectories negligible.
In contrast, execution cost cannot be amortized in this way. Every candidate solution must run in an isolated sandbox that loads the data and trains models on real GPUs~\citep{qiang2025mledojo,sun2026sweworld}, so each additional trajectory incurs the full cost, and the total cost grows linearly with the number of trajectories.
This asymmetry makes environment execution the dominant bottleneck when scaling RL for AutoResearch agents.
Such a bottleneck motivates two research questions:

\begin{center}
\textbf{(i)} \emph{Can we replace the expensive environment (bottleneck) with a fast and scalable signal?}\\[2pt]
\textbf{(ii)} \emph{There is no free lunch, so what does this signal cost? (And how do we pay the bill?)}\\
\end{center}

The answer to question~\textbf{(i)} is to introduce a world model~\citep{ha2018world,hafner2025mastering}, which takes the environment information and the agent solution as input, and produces a simulation of real execution result as output. 
Such a simulation (though potentially flawed) takes only a few forward passes without real execution, so it batches and amortizes across trajectories just like generation. In RL training, we let the agent interact with the world model instead of the real environment, and receive rewards from the predicted outcomes.
As a result, the expensive execution now scales as gracefully as generation and no longer bounds the scale of RL.

We answer question~\textbf{(ii)} by modeling the world model's imperfection as a bias and a noise on the reward, and tracing both through the policy gradient into the convergence bound.
Specifically, we consider the world model's output as an erroneous signal that deviates from real execution by  a bias term $b$ with $|b| \le B$ and a zero-mean noise $\xi$ with standard deviation $\sigma$.
As shown in Theorem~\ref{thm:wm-conv}, the bias and the noise hinder
the final convergence through two extra error terms of $O(B^{2})$ and $O(\sigma^{2})$ respectively.
To reduce both terms, we introduce an \emph{Online Debiasing} mechanism that recasts the world model outputs to offset the bias, and an \emph{Inverse-Variance Denoising} mechanism to minimize the variance; we further ground both mechanisms in Theorem~\ref{thm:ours-conv} with a strictly improved convergence guarantee.

Our contributions are summarized as follows:
\begin{itemize}[leftmargin=*]
    \item We introduce the world model into the RL training of AutoResearch agents, which replaces the expensive environment execution with a few forward passes. It removes the critical scaling bottleneck and accelerates overall training by $3$--$4\times$ (Section~\ref{subsec:wm-env}).

    \item We design two correction mechanisms, Online Debiasing and Inverse-Variance Denoising, to counteract the bias and the noise of the world model. With both mechanisms, the accelerated training matches or even exceeds the performance of training with the real environment (Section~\ref{subsec:anchor}).

    \item We theoretically ground the entire framework. An imperfect world model introduces two error terms into the convergence bound (Section~\ref{subsec:theory-cost}), and our two mechanisms provably reduce both, yielding a strictly improved convergence guarantee (Section~\ref{subsec:theory-fix}).

    \item Through experiments on various AutoResearch tasks across two agent scales, we validate both the efficiency and the performance gains (Section~\ref{subsec:exp-main}); we further extend our method to VLA post-training tasks to demonstrate its generalizability (Section~\ref{subsec:exp-vla}).
\end{itemize}

\section{Related Work}
\label{sec:related_work}

\paragraph{AutoResearch agents and their post-training.}
Language model agents now carry out substantial parts of empirical
research, from autonomous chemical experimentation~\citep{boiko2023autonomous}
and open-ended scientific discovery~\citep{lu2024ai} to machine
learning engineering, where agents explore the space of solution
code~\citep{jiang2025aide,novikov2025alphaevolve,yamada2025aiscientistv2}, curate data as
agentic data scientists~\citep{kulikov2026autodata,bertran2026many}, and pursue
recursive self-improvement~\citep{yang2026frontis,recursive2026firststeps}.
A line of benchmarks measures this
ability on real Kaggle-style competitions, including
MLE-Bench~\citep{chan2024mlebench}, its interactive superset
MLE-Dojo~\citep{qiang2025mledojo}, DSBench~\citep{jing2024dsbench}, and
MLGym~\citep{nathani2025mlgym}, and software engineering agents follow the
same execution-driven recipe on real repositories~\citep{jimenez2024swebench,yang2024sweagent,wang2025openhands},
with training environments built at scale~\citep{pan2025swegym}. Beyond prompting, RL post-training further improves such agents, typically with
group-relative objectives~\citep{shao2024deepseekmath} served by
large-scale RL systems~\citep{sheng2024hybridflow}, and synthetic
tasks can scale the training corpus~\citep{xie2026quest}. In these
pipelines the reward comes from executing the agent's solution,
so training inherits the execution bottleneck of
Section~\ref{sec:introduction}, the problem we address.
Such advances refine how rewards are consumed~\citep{yu2025dapo,qu2026pope}, whereas we change
where they come from, so the two are orthogonal and compose.

\paragraph{Learned reward signals and their errors.}
Replacing an expensive ground truth with a learned signal is a
recurring pattern, from reward models in RLHF~\citep{ouyang2022training}
to LLM judges of model outputs~\citep{zheng2023judging}, and world
models that simulate an environment for control have long powered
model-based RL~\citep{ha2018world,bruce2024genie,hafner2025mastering}. Such world models are
now scaled to foundation models of the physical world~\citep{nvidia2025cosmos} or
written as executable code by LLMs~\citep{dainese2024codewm,tang2024worldcoder},
and learned surrogates of the execution environment have recently been
explored for software agents~\citep{sun2026sweworld}. The errors of
such proxies are equally well documented, as optimizing against an
imperfect reward model degrades true performance once the proxy is
over-trusted~\citep{gao2023scaling,coste2024rewardensembles}, and the theory of SGD with biased
gradients shows that a systematic gradient error puts a floor on
convergence that no amount of training removes~\citep{ajalloeian2020convergence}.
Prior remedies mostly recalibrate the proxy offline~\citep{zadrozny2002calibration,guo2017calibration}
or constrain the policy from exploiting it. In off-policy RL, a related
line corrects the distribution mismatch between the replay buffer and
the target policy with learned correction ratios~\citep{nachum2019dualdice,li2021offpolicy},
a finer object than the reward statistics we treat. We instead keep a small
stream of ground truth inside the training loop, correct the bias and
the variance of the proxy online, and quantify both corrections
directly in the convergence bound.

\section{Method}
\label{sec:method}

In this section, we introduce our \ours{} as follows. First, we formalize the standard RL training procedure and define the notation (Section \ref{subsec:prelim}). 
Next, we answer the question~\textbf{(i)} by incorporating the world model into the RL pipeline as a fast and scalable signal (Section \ref{subsec:wm-env}). 
We then answer the question~\textbf{(ii)} by revealing the additional error terms in the RL convergence, and then providing our mitigations (Section \ref{subsec:anchor}), which offset these error terms introduced by the world model.

\subsection{Preliminaries}
\label{subsec:prelim}

Typically, an AutoResearch task provides the agent with the research question, data format and the output requirements. 
With this context, the agent policy $\pi_\theta$ interacts with an environment (e.g., an isolated Docker container)~\citep{qiang2025mledojo,jing2024dsbench} over multiple turns to iteratively refine its solution until reaching a predefined criterion.
Such interactions form a trajectory $\tau$, and the solution receives a score $r(\tau)\in[0,1]$ that measures its quality.

To enhance the agent policy, a wide range of works leverage RL to maximize the expected score $J(\theta) := \mathbb{E}_{\tau\sim\pi_\theta}[r(\tau)]$ with group-relative policy optimization (GRPO)~\citep{shao2024deepseekmath}\footnote{We adopt GRPO as the most common objective for agentic RL at scale~\citep{sheng2024hybridflow,guo2025deepseekr1}, while the specific RL objective is orthogonal to our contribution, which acts mostly on the reward side rather than on the update rule.} as follows:
For each task, GRPO samples a group of $n \ge 2$ independent trajectories
$\tau_1,\dots,\tau_n$ in parallel, and then gives the grade $r(\tau_i)$ of each trajectory by executing the final solution in the environment.
The scores are then normalized within the group into advantages $A_i := r(\tau_i) - \frac{1}{n}\sum_{j} r(\tau_j)$, which yield the gradient estimate as:
\begin{equation}
\hat g \;:=\; \frac{1}{n}\sum_{i=1}^{n} A_i\, s_i,
\qquad s_i := \nabla_\theta \log \pi_\theta(\tau_i),
\label{eq:grpo}
\end{equation}
where $s_i$ is the raw policy gradient, and the parameter is updated by $\theta_{t+1} = \theta_t + \gamma\,\hat g_t$ over $T$ steps with step size $\gamma$.
Throughout training, every step costs online rewards, and each of the rewards requires executing a solution in a new environment with exclusive GPU assignments.

\begin{figure}[t]
\centering
\begin{overpic}[width=\textwidth]{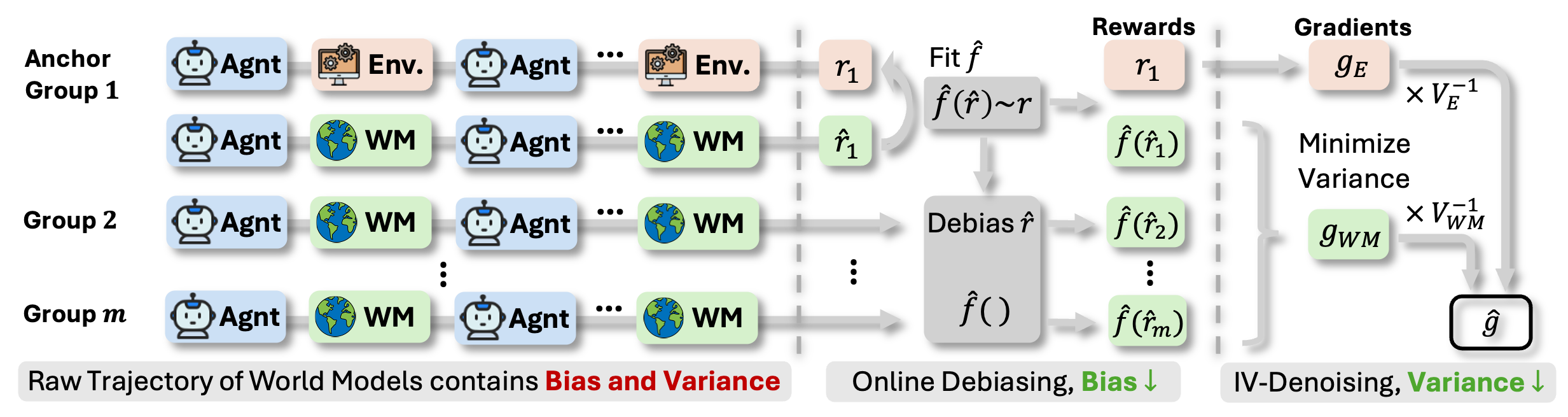}
  \put(62.7,9.7){\makebox(0,0){\scriptsize\hyperref[eq:isotonic]{Eq.~(\ref*{eq:isotonic})}}}
  \put(87.4,6.8){\makebox(0,0){\scriptsize\hyperref[eq:ivw]{Eq.~(\ref*{eq:ivw})}}}
\end{overpic}
\caption{
\textbf{\ours{} corrects the world model rewards in two steps.} Each
row is one of the $m$ groups in a batch. Every group is
graded by the world model into biased, noisy scores $\hat r$, and
anchor groups are also graded by real execution. The monotone map
$\hat f$ is fit on the score pairs (\emph{Online Debiasing}) to remove the bias, and $g_E$ and $g_{WM}$ are fused
by \emph{Inverse-Variance Denoising} to lower the update variance.
}
\label{fig:method}
\end{figure}

\subsection{World Model as an Environment}
\label{subsec:wm-env}

To provide fast and scalable environment signals (asked in question~\textbf{(i)}), we introduce a world model to replace the expensive execution, which constitutes the backbone of \ours{}.
In our setting, we adopt as the world model a general language model that simulates the environment (with potential errors), instantiated with the same backbone as the agent and queried for the execution outcome.
It takes the same task context and an agentic solution as input, and predicts the execution outcome as output, from which an estimated score $\hat r(\tau_i)$ is read off in place of the true $r(\tau_i)$. 
Since this interface is identical to that of the real environment, the pipeline in Section~\ref{subsec:prelim} runs unchanged with the estimated scores as:
\begin{equation}
\hat A_i \;:=\; \hat r(\tau_i) - \frac{1}{n}\sum_{j} \hat r(\tau_j),
\label{eq:wm-grpo}
\end{equation}
and the gradient estimate of Equation~\eqref{eq:grpo} is computed with $\hat A_i$ in place of $A_i$.
In effect, the parameters now ascend the surrogate objective $\mathbb{E}_{\tau\sim\pi_\theta}[\hat r(\tau)]$ rather than $J(\theta)$.
As a result, the execution part of a trajectory now scales as gracefully as the generation side, which directly removes the execution bottleneck.
However, we note that this replacement is not free, as an imperfect predicted outcome can deviate from the real execution outcome.
We characterize and mitigate such deviation in Section~\ref{subsec:anchor} next.

\subsection{Anchor Signal for Error Correction}
\label{subsec:anchor}

We now answer question~\textbf{(ii)}. For any world model, the deviation of its predicted score can be decomposed into a systematic part and a random part as follows:
\begin{equation}
\hat r(\tau) \;=\; r(\tau) + b(\tau) + \xi(\tau),
\label{eq:error-structure}
\end{equation}
where the bias $b(\tau) := \mathbb{E}[\hat r(\tau) \mid \tau] -
r(\tau)$ collects the systematic part that survives averaging, and the noise
$\xi(\tau)$ is the zero-mean remainder. Since the scores are naturally bounded, we write
$B := \sup_{\tau} |b(\tau)|$ for the maximal magnitude of the bias and
$\sigma := \sup_{\tau} \mathrm{std}(\xi(\tau))$ for the maximal
standard deviation of the noise. 
The remark below summarizes how the two parts affect the convergence.

\begin{remarkbox}
\begin{remark}[The price of the world model replacement]
\label{rem:price}
With real execution rewards, standard RL converges with a bound
$J^\star - \mathbb{E}[J(\theta_T)] \le \varepsilon(T)$ on the gap to
the optimal score $J^\star := \sup_\theta J(\theta)$. With
world model rewards, the same analysis gives
\begin{equation*}
J^\star - \mathbb{E}[J(\theta_T)] \;\le\; \varepsilon(T)
\;+\; O({\color{dropred}B^{2}}) \;+\; O({\color{dropred}\sigma^{2}}),
\end{equation*}
where $\varepsilon(T)$ is the standard convergence term
(Theorem~\ref{thm:wm-conv}).
We next resolve the $O(B^{2})$ and $O(\sigma^{2})$ terms.
\end{remark}
\end{remarkbox}

Our goal is to remove or reduce the two additional errors (the $O(B^{2})$ and $O(\sigma^{2})$ term) in
Remark~\ref{rem:price}. Removing them starts with estimating them, and
by Equation~\eqref{eq:error-structure} both are defined against the
true score $r(\tau)$, which the world model itself does not provide.
Since the truth can only come from environment execution, we take a small step back and reintroduce a marginal amount of execution for this estimation (which we prove to be enough in Section~\ref{sec:theory}).
\ours{} therefore keeps a thin stream of ground truth during training, which we call the \emph{anchor signal}. A small fraction of the groups (\emph{anchor groups}) is graded by both the world model and real execution,\footnote{In practice, anchor groups make up about $10\%$ of all groups.} and the resulting score pairs $\mathcal{P} = \{(\hat r_j, r_j)\}$ feed the two mechanisms below (Figure~\ref{fig:method}).

We first correct the \textbf{bias} by \emph{Online Debiasing}, a
monotone distribution recalibration.
The training score pairs reveal how the world model score drifts from the
truth, so we fit a mapping function over all of them,
\begin{equation}
\hat f \;=\; \operatorname*{arg\,min}_{f}\;
\sum_{(\hat r_j,\, r_j) \in \mathcal{P}}
\big(f(\hat r_j) - r_j\big)^{2},
\label{eq:isotonic}
\end{equation}
where $f$ ranges over monotone functions solved by isotonic regression~\citep{zadrozny2002calibration}. 
With this mapping, all world model scores are then recast into $\hat f(\hat r(\tau_i))$ before the advantages are formed, and $\hat f$ is refit as new pairs arrive on each step to track the drift over training. 

For the \textbf{noise} part, since it cannot be estimated pointwise, we
turn to suppress it by \emph{Inverse-Variance Denoising}, which fuses
the two reward streams to reach a lower variance. In each step the $m$
group indices split into the anchor set $\mathcal{G}_E$ and the world
model set $\mathcal{G}_{WM}$, and every group $k$ gives an estimate
$\hat g_k$ of the same policy gradient, with variance $V_{E}$ or
$V_{WM}$ by how it was graded. The question is therefore how to
combine them, as one stream is scarce but free of world model noise and
the other abundant but noisy. Following Lemma~\ref{lem:ivw}, the
minimal-variance combination weights each stream by the inverse of its
variance and attains a variance strictly below either stream alone, so
we combine them as
\begin{equation}
\hat g \;=\;
\frac{\ V_{E}^{-1}\, g_E \;+\; V_{WM}^{-1}\, g_{WM}\ }
{\ V_{E}^{-1} |\mathcal{G}_E|
\;+\; V_{WM}^{-1} |\mathcal{G}_{WM}|\ },
\qquad
g_E := \!\!\sum_{k \in \mathcal{G}_E}\!\! \hat g_k,
\quad
g_{WM} := \!\!\sum_{k \in \mathcal{G}_{WM}}\!\! \hat g_k,
\label{eq:ivw}
\end{equation}
where the stream gradients $g_E$ and $g_{WM}$ sum the group
estimates $\hat g_k = \frac{1}{n}\sum_{i=1}^{n} A_{k,i}\, s_{k,i}$,
each applying Equation~\eqref{eq:grpo} to group $k$ with the true
scores on $\mathcal{G}_E$ and the calibrated scores on
$\mathcal{G}_{WM}$.

Equation~\eqref{eq:ivw} is the conceptual form of the fusion, and it
attains the minimal variance once $V_{E}$ and $V_{WM}$ are given. In
implementation, Equation~\eqref{eq:ivw} reduces to the simple form
$\hat g = \big(\rho\, g_E + g_{WM}\big)\big/\big(\rho\,|\mathcal{G}_E| +
|\mathcal{G}_{WM}|\big)$, with only a single unknown left, the ratio
$\rho := V_{WM}/V_{E}$. To derive this ratio, we first show in
Section~\ref{sec:theory} that $\rho = 1 + c\,\sigma^{2}$, in which the
reward noise $\sigma^{2}$ is measurable and the constant $c$ is
estimable. We then measure $\sigma^{2}$
by the mean squared residual $\hat\eta^{2}$ between the calibrated and
the true scores on the anchor groups, and estimate $c$ by the
reciprocal of one such residual recorded at the end of a warmup phase,
$c := 1/\hat\eta^{2}_{\mathrm{cal}}$. The anchor groups then enter
Equation~\eqref{eq:ivw} with the weight $\rho = 1 +
\hat\eta^{2}/\hat\eta^{2}_{\mathrm{cal}}$ and the world model groups
with the weight one, a rule that carries no free parameter. We further
justify the optimality of this estimate both theoretically and
empirically in Remark~\ref{rem:plugin}, as the leading-order term of the estimation error vanishes exactly by construction and the remaining higher-order excess measures a few percent at most in our runs.
Both mitigations are validated empirically in Section~\ref{sec:experiment} and analyzed in Section~\ref{sec:theory}.

\subsection{Discussion}
\label{subsec:discussion}

We note that the proposed denoising weights are not only theoretically sound
(Section~\ref{sec:theory}) but also empirically interpretable. On the
anchor groups, the tracked residual $\hat\eta^{2}$
measures how far the calibrated predictions fall from the true scores,
and thereby audits the world model throughout training.
When the audit worsens, the anchor weight in
Equation~\eqref{eq:ivw} rises. Consider a task whose outcome hinges on
randomness that no reading of the solution reveals. In this case, $\hat\eta^{2}$ remains large and the gradient is
carried by the anchor groups, so \ours{} degrades toward the standard
GRPO of Section~\ref{subsec:prelim} instead of learning from a
corrupted signal. This fallback follows from the measurement itself rather than a
mixing ratio fixed in advance.

Additionally, although the design targets the execution bottleneck of AutoResearch,
it is not limited to this scenario. The construction requires only that rewards are expensive to execute yet predictable
from the artifacts the agent produces, and that a small stream of
ground truth stays available for anchoring. Post-training embodied
policies is one such instance, where real rollouts are slow while
learned simulators are cheap, and we validate this transfer on VLA
tasks in Section~\ref{sec:experiment}.

\section{Theoretical Analysis}
\label{sec:theory}

In this section, we first analyze the additional error terms that an imperfect world model introduces, and then justify our mitigation by a convergence analysis. 
We first bound the convergence of training on world model rewards alone in Section~\ref{subsec:theory-cost} (Theorem~\ref{thm:wm-conv}), and then bound the convergence of \ours{} in Section~\ref{subsec:theory-fix} (Theorem~\ref{thm:ours-conv}), and compare the two term by term. We put the full proofs and auxiliary lemmas in Appendix~\ref{app:proofs}.

\subsection{Setup}
\label{subsec:theory-setup}

We analyze the RL training procedure of Section~\ref{subsec:prelim},
namely $T$ ascent steps with step size $\gamma$ on the gradient
estimator of Equation~\eqref{eq:grpo}, and we measure progress by the expected gap
to the optimal score, $J^\star - \mathbb{E}[J(\theta_T)]$ with
$J^\star := \sup_\theta J(\theta)$ and $\Delta_0 := J^\star -
J(\theta_0)$. The analysis rests on one standard assumption.

\begin{assumption}[Regularity]
\label{ass:reg}
$J$ is $L$-smooth and satisfies the gradient domination condition
$\|\nabla J(\theta)\|^{2} \ge 2\mu\,(J^\star - J(\theta))$ for some
$\mu > 0$, and the log-likelihood gradient is bounded by
$\|\nabla_\theta \log \pi_\theta(\tau)\| \le M$ for all $\tau$.
\end{assumption}

Everything the world model contributes enters through the two
quantities of Section~\ref{subsec:anchor}, the bias magnitude $B$ and
the noise standard deviation $\sigma$ of
Equation~\eqref{eq:error-structure}, and through the two group
variances $V_{E}$ and $V_{WM}$ of
Equation~\eqref{eq:ivw}. The gradient estimator of
Equation~\eqref{eq:grpo} is linear in the rewards, so the two error
parts act on the gradient separately. The bias is deterministic and
shifts its mean. The noise is zero-mean and only inflates the variance
of a world model group, to $V_{WM} = (1 + c\,\sigma^{2})\,V_{E}$ with
$c = 4M^{2}/(n V_{E})$, as announced in Section~\ref{subsec:anchor}. Only the product $c\,\sigma^{2}$ enters
the bounds, and Section~\ref{subsec:anchor} supplies it by measuring
its varying part and normalizing its scale, so neither is needed on
its own.
Throughout we take $\gamma \le 1/(8L)$.

\subsection{The Cost of World Model Rewards}
\label{subsec:theory-cost}

We first consider the setting of Section~\ref{subsec:wm-env}, where
every group is graded by the world model and no real execution takes
place, so the rewards carry the bias and the noise of
Equation~\eqref{eq:error-structure} at every step. Since the gradient estimator
built on Equation~\eqref{eq:wm-grpo} is linear in the rewards, the two
deviations reach the gradient in different ways, the bias shifting its
mean and the noise inflating its variance, and they therefore enter the
convergence bound as the two separate terms announced in
Remark~\ref{rem:price}.

\begin{thmbox}
\begin{theorem}[Convergence with world model rewards]
\label{thm:wm-conv}
Under Assumption~\ref{ass:reg}, GRPO trained on world model rewards
satisfies
\begin{equation*}
J^\star - \mathbb{E}[J(\theta_T)] \;\le\;
\Big(1-\tfrac{\gamma\mu}{4}\Big)^{T}\Delta_0
\;+\;
\underbrace{O\big({\color{dropred}M^{2}B^{2}}\big)}_{\text{world model bias}}
\;+\;
\underbrace{O\big({\color{dropred}\gamma\, V_{WM}}\big)}_{\text{world model variance}} ,
\end{equation*}
where $V_{WM} = (1 + c\,\sigma^{2})\,V_{E}$ carries
the noise, and the variance term is presented as $O({\color{dropred}\sigma^{2}})$ in
Remark~\ref{rem:price}, and the hidden constants depend only on $L$
and $\mu$. We put the full proof in Appendix~\ref{app:proof-wm}.
\end{theorem}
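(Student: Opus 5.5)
The plan is the standard biased-SGD argument under the Polyak--Łojasiewicz (gradient-domination) condition of Assumption~\ref{ass:reg}. The world-model estimator $\hat g_t$ built from Equation~\eqref{eq:wm-grpo} is split into three parts: the true gradient, a deterministic bias term, and a zero-mean noise term. Concretely, we write $\hat g_t = \nabla J(\theta_t) + \beta_t + \zeta_t$ with $\beta_t := \mathbb{E}[\hat g_t\mid\theta_t] - \nabla J(\theta_t)$ and $\mathbb{E}[\zeta_t\mid\theta_t]=0$.

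\textbf{Step 1: bound the bias.} Since $\hat g$ is linear in the rewards, substituting $\hat r = r + b + \xi$ into Equation~\eqref{eq:grpo} shows that the bias contribution is $\frac1n\sum_i (b_i - \bar b)\, s_i$, taken in expectation.
\begin{itemize}
\item Each centered bias satisfies $|b_i-\bar b|\le 2B$, and $\|s_i\|\le M$.
\item Hence $\|\beta_t\|\le 2MB$, provided the GRPO estimator with true rewards is treated as unbiased for $\nabla J$. For this we use that the group-mean baseline is unbiased up to the usual $(n-1)/n$ factor, which is absorbed into constants.
\end{itemize}

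\textbf{Step 2: bound the noise.} The noise part $\frac1n\sum_i(\xi_i-\bar\xi)s_i$ contributes second moment at most $4M^2\sigma^2/n$. Added to the true-reward sampling variance $V_E$, this gives $\mathbb{E}\|\zeta_t\|^2\le V_{WM}=(1+c\sigma^2)V_E$ with $c=4M^2/(nV_E)$, matching the setup. Cross terms vanish because $\xi$ is zero-mean given $\tau$.

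\textbf{Step 3: one-step descent.} $L$-smoothness gives the ascent inequality
\[
J(\theta_{t+1})\ge J(\theta_t)+\gamma\langle\nabla J,\hat g_t\rangle-\tfrac{L\gamma^2}{2}\|\hat g_t\|^2 .
\]
Taking conditional expectation, we handle the terms as follows.
\begin{itemize}
\item Young's inequality gives $\langle\nabla J,\beta_t\rangle\ge -\tfrac14\|\nabla J\|^2-\|\beta_t\|^2$.
\item We expand $\mathbb{E}\|\hat g_t\|^2\le 2\|\nabla J\|^2+2\|\beta_t\|^2+V_{WM}$.
\item With $\gamma\le 1/(8L)$, the quadratic term consumes only a small fraction of $\|\nabla J\|^2$.
\end{itemize}
This leaves
\[
\mathbb{E}J(\theta_{t+1})\ge J(\theta_t)+\tfrac{\gamma}{2}\|\nabla J\|^2-2\gamma\|\beta_t\|^2-\tfrac{L\gamma^2}{2}V_{WM}
\]
for suitable constants.

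\textbf{Step 4: recursion and unrolling.} Applying $\|\nabla J\|^2\ge 2\mu(J^\star-J)$ (keeping a factor $\tfrac14$ to match the stated rate) yields
\[
\Delta_{t+1}\le(1-\tfrac{\gamma\mu}{4})\Delta_t+C_1\gamma M^2B^2+C_2L\gamma^2V_{WM}.
\]
Unrolling and summing the geometric series $\sum_k(1-\gamma\mu/4)^k\le 4/(\gamma\mu)$ gives the floor $O(M^2B^2/\mu)+O(L\gamma V_{WM}/\mu)$. These constants depend only on $L$ and $\mu$, which is the claimed form.

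\textbf{Main obstacle.} The delicate part is Step~1: making sure the bias shift is $O(MB)$ in gradient norm, and that the true-reward GRPO estimator's own mean mismatch with $\nabla J$ does not leak in. If exact unbiasedness fails, I would define $J$ through the baseline-corrected objective, or absorb the $(n-1)/n$ scaling into $\mu$. Tracking the constants so that the contraction rate is exactly $\gamma\mu/4$ is routine bookkeeping given $\gamma\le1/(8L)$.
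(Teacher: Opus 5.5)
Your route is the paper's own. The paper also splits the world-model estimate by linearity into a clean part, a bias part $g_b=\tfrac1n\sum_i(b_i-\bar b)s_i$ and a noise part (Lemma~\ref{lem:bridge}). It bounds $\|g_b\|\le 2MB$ and the noise second moment by $4M^2\sigma^2/n$, then runs smoothness, Young with constant $\tfrac14$, gradient domination and unrolling (Lemma~\ref{lem:descent}). However, your choice $\beta_t:=\mathbb{E}[\hat g_t\mid\theta_t]-\nabla J(\theta_t)$ breaks two of your bounds.

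First, the clean part has mean $\kappa\nabla J$ with $\kappa=1-\tfrac1n$. So $\beta_t=-\tfrac1n\nabla J(\theta_t)+\mathbb{E}[g_b\mid\theta_t]$, and $\|\beta_t\|\le 2MB$ is false. As written, your floor $2\gamma\|\beta_t\|^2$ would carry a $B$-independent $\|\nabla J\|^2/n^2$. The repair is not to absorb anything into $\mu$ or redefine $J$. Instead, keep $\kappa\in[\tfrac12,1)$ on the signal and let only the bias part be the perturbation. The progress term is then at least $\gamma(\kappa-\tfrac14)\|\nabla J\|^2-\gamma D^2$, and the quadratic term costs at most $L\gamma^2\|\nabla J\|^2\le\tfrac{\gamma}{8}\|\nabla J\|^2$. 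The surviving coefficient $\gamma(\kappa-\tfrac14-L\gamma)\ge\tfrac{\gamma}{8}$ gives exactly the rate $\tfrac{\gamma\mu}{4}$ after gradient domination. Your coefficient $\tfrac{\gamma}{2}$ exists only because $\kappa$ was dropped.

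Second, and this one you did not flag, $g_b$ is deterministic only given the sampled trajectories, not given $\theta_t$. Moving only its conditional mean into $\beta_t$ pushes $g_b-\mathbb{E}[g_b\mid\theta_t]$, and its covariance with the clean part, into $\zeta_t$. Then $\mathbb{E}\|\zeta_t\|^2\le V_E+4M^2\sigma^2/n=V_{WM}$ does not follow. Under Assumption~\ref{ass:reg} alone, if $r(\tau)$ is constant while $b(\tau)$ is not, the clean estimate is identically zero, yet $g_b$ is in general a nondegenerate random vector.

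The paper never centers $g_b$. It keeps $g_b$ whole as a perturbation $d_t$ with $\|d_t\|\le D=2MB$ almost surely. Then $\mathrm{Var}(\hat g_t\mid\theta_t)\le 2V_{WM}+2D^2$ and $\mathbb{E}_t\|\hat g_t\|^2\le 2\|\nabla J\|^2+4D^2+2V_{WM}$. The extra cost $2L\gamma^2D^2\le\tfrac14\gamma D^2$ merges into the bias floor, and unrolling gives exactly $\tfrac{32M^2}{\mu}B^2+\tfrac{4L\gamma}{\mu}V_{WM}$.

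Finally, the $1/n$ in $4M^2\sigma^2/n$ needs the $\xi_i$ to be conditionally uncorrelated across the group. The paper takes independence from Assumption~\ref{ass:scale}; zero conditional mean alone does not remove the $i\ne j$ cross terms.
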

\end{thmbox}

The first term is the standard geometric decay of the noise-free RL setting and
vanishes as training proceeds. The bias term contains neither $T$ nor
$\gamma$, hence no amount of training and no choice of step size
removes it, and the achievable score stays capped by the bias of the
world model. The variance term grows with the noise through $V_{WM}$ and likewise
inflates the final error, so both terms call for correction.

\subsection{The Effect of Our Mitigation}
\label{subsec:theory-fix}

We now analyze \ours{}, whose two corrections from
Section~\ref{subsec:anchor} improve one term of
Theorem~\ref{thm:wm-conv} each. Both corrections rely on the anchor
signal, where real execution grades a small fraction of the groups and
yields score pairs throughout training. With these pairs, the Online
Debiasing of Equation~\eqref{eq:isotonic} keeps shrinking the residual
bias, at a speed captured by a constant $T_0$ given in
Appendix~\ref{app:conditions}. The Inverse-Variance Denoising of
Equation~\eqref{eq:ivw} then fuses the two gradient streams, so every
update carries less noise than either stream alone. In the statement
below, $\widetilde{O}$ hides logarithmic factors.

\begin{thmbox}
\begin{theorem}[Convergence of \ours{}]
\label{thm:ours-conv}
Under Assumption~\ref{ass:reg} and the conditions of
Appendix~\ref{app:conditions}, \ours{} satisfies with high probability
\begin{equation*}
J^\star - \mathbb{E}[J(\theta_T)] \;\le\;
\Big(1-\tfrac{\gamma\mu}{4}\Big)^{T}\Delta_0
\;+\;
\underbrace{\widetilde{O}\Big(\frac{{\color{dropred}M^{2}B^{2}}}{{\color{gaingreen}1 + T/T_0}}\Big)}_{\text{contractive bias}}
\;+\;
\underbrace{O\Big(\frac{{\color{dropred}\gamma\, V_{WM}}}
{{\color{gaingreen}1 + V_{WM}/V_{E}}}\Big)}_{\text{reduced variance}} .
\end{equation*}
We put the full proof in Appendix~\ref{app:proof-ours}.
\end{theorem}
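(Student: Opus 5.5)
The proof follows the one-step descent template behind Theorem~\ref{thm:wm-conv}, with two quantities replaced: the bias of the update at step $t$, and its variance. Write the fused update of Equation~\eqref{eq:ivw} as $\hat g_t = \nabla J(\theta_t) + \beta_t + \zeta_t$. Here $\beta_t$ is the deterministic shift left by the calibrated scores on $\mathcal{G}_{WM}$, and $\zeta_t$ is zero-mean with $\mathbb{E}\|\zeta_t\|^2 \le V_t$. By $L$-smoothness and $\gamma \le 1/(8L)$,
\begin{equation*}
J(\theta_{t+1}) \ge J(\theta_t) + \tfrac{\gamma}{2}\|\nabla J(\theta_t)\|^2 - \gamma\|\beta_t\|^2 - L\gamma^2 V_t ,
\end{equation*}
using Young's inequality on the cross term $\langle \nabla J, \beta_t\rangle$. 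Gradient domination then gives the recursion
\begin{equation*}
\Delta_{t+1} \le \bigl(1-\tfrac{\gamma\mu}{4}\bigr)\Delta_t + \gamma\|\beta_t\|^2 + L\gamma^2 V_t .
\end{equation*}
Unrolling it, the first term of the statement comes from $\Delta_0$. What remains is to bound the two weighted sums $\gamma\sum_t (1-\gamma\mu/4)^{T-1-t}\|\beta_t\|^2$ and $L\gamma^2\sum_t (1-\gamma\mu/4)^{T-1-t} V_t$.

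\textbf{Variance term.} I would invoke Lemma~\ref{lem:ivw}. With $\rho = V_{WM}/V_E$ in Equation~\eqref{eq:ivw}, the per-group variance of the fused estimator is the inverse-variance combination
\begin{equation*}
\Bigl(|\mathcal{G}_E|V_E^{-1} + |\mathcal{G}_{WM}|V_{WM}^{-1}\Bigr)^{-1}
\end{equation*}
suitably normalized. After the normalization, this is at most $V_{WM}V_E/(V_E+V_{WM}) = V_{WM}/(1+V_{WM}/V_E)$, taking the worst-case split with one effective unit per stream. I need to check this normalization carefully against the paper's convention for $V_{WM}$ in Theorem~\ref{thm:wm-conv}. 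The plug-in estimate of $\rho$ from $\hat\eta^2$ perturbs this only at second order, by Remark~\ref{rem:plugin}, so it can be absorbed into the constant. Summing the geometric weights turns $L\gamma^2 V/(\gamma\mu/4)$ into $O(\gamma V_{WM}/(1+V_{WM}/V_E))$.

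\textbf{Bias term.} This is the main obstacle. The task is to show that after $t$ steps of Online Debiasing the residual bias of $\hat f_t\circ\hat r$ satisfies, with high probability,
\begin{equation*}
\sup_\tau |\hat f_t(\hat r(\tau)) - \mathbb{E}[r\mid \hat r]| \lesssim B\,\sqrt{\log(T/\delta)}\,(1+t/T_0)^{-1/2} .
\end{equation*}
Since $\beta_t$ is linear in the residual bias, $\|\beta_t\|^2 \le 4M^2\cdot(\text{residual})^2$, because centring doubles the sup and $\|s_i\|\le M$. This gives $\|\beta_t\|^2 = \widetilde O(M^2B^2/(1+t/T_0))$.

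To prove the rate, I would take from the conditions of Appendix~\ref{app:conditions} that $\mathbb{E}[r\mid\hat r]$ is monotone in $\hat r$ and that the anchor pairs accumulate at a constant rate, so $|\mathcal{P}_t| \asymp t$. I would then use a uniform deviation bound for isotonic least squares, such as a chaining or VC bound over bounded monotone functions. This yields an $L_2$ error of order $\widetilde O(|\mathcal{P}_t|^{-1/2})$ relative to the scale $B$; a union bound over the $T$ refits supplies the logarithm. The drift of the policy distribution across steps must be handled, and I would try to control it by the condition in Appendix~\ref{app:conditions} defining $T_0$. Converting $L_2$ to the sup, or to the on-policy $L_2$ error that actually enters $\beta_t$, is the delicate point. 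I would first attempt to use only the on-policy $L_2$ norm, since Cauchy--Schwarz through the score function only requires that norm.

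Finally, I would sum the bias contribution:
\begin{equation*}
\gamma\sum_t (1-\gamma\mu/4)^{T-1-t}\,\frac{M^2B^2}{1+t/T_0} .
\end{equation*}
The weights concentrate on the last $O(1/(\gamma\mu))$ steps, so the sum is $\widetilde O\bigl(M^2B^2/(1+T/T_0)\bigr)$, provided $T_0 \gtrsim 1/(\gamma\mu)$, which is again part of the appendix conditions. Combining the three pieces gives the statement.
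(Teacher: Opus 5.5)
\textbf{What is sound.} Your descent recursion and your variance step are correct and essentially the paper's. The paper applies Lemma~\ref{lem:descent} with $\kappa_t = 1-\tfrac1n$. You dropped this factor, which comes from the group-mean baseline, and you did not let $\mathrm{Var}(\hat g_t)$ absorb the randomness of the bias part; both fixes cost only constants. The variance of Equation~\eqref{eq:ivw} is exactly $(|\mathcal{G}_E|/V_E + |\mathcal{G}_{WM}|/V_{WM})^{-1} \le (1/V_E + 1/V_{WM})^{-1}$, with no further normalization. Your final summation matches the paper's split at $T/2$, and it does not need $T_0 \gtrsim 1/(\gamma\mu)$. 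In the paper $T_0 := c_f^2/B^2$ is just a definition, and Appendix~\ref{app:conditions} contains no drift condition for you to borrow.

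\textbf{The gap: the recalibration rate.} This step carries the whole contraction, and your tool cannot deliver it. A uniform deviation bound over bounded monotone functions cannot give $|\mathcal{P}_t|^{-1/2}$. That class has infinite pseudo-dimension and metric entropy of order $1/\epsilon$. The worst-case $L_2$ rate of isotonic least squares is $|\mathcal{P}_t|^{-1/3}$, which would give a $T^{-2/3}$ rather than $T^{-1}$ decay of the bias term. You also leave open the $L_2$-to-sup step and the transfer from the anchor-pool distribution to the current policy.

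\textbf{Your residual is not the quantity that enters $\beta_t$.} By Lemma~\ref{lem:bridge}(ii), the shift is driven by the error conditional on $\tau$, $\mathbb{E}[\hat f_t(\hat r)\mid\tau] - r(\tau)$. Suppose even that $\hat f_t = \mathbb{E}[r\mid\hat r]$ exactly. The term $\mathbb{E}\big[\mathbb{E}[r\mid\hat r]\mid\tau\big] - r(\tau)$ is then a regression-to-the-mean shrinkage. It depends on $r(\tau)$, survives within-group centering, does not decay in $t$, and is in general nonzero even when $B=0$. So $\|\beta_t\|^2 \le 4M^2(\text{residual})^2$ fails for your residual.

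\textbf{The missing idea.} Both issues are resolved by the discrete structure behind Assumption~\ref{ass:scale} and Lemma~\ref{lem:recal}. True scores take finitely many levels $\ell_1<\dots<\ell_K$. The bias is a monotone distortion $\mathbb{E}[\hat r\mid\tau] = \phi(r(\tau))$ whose adjacent levels stay separated by $\Delta_\phi>0$. Each level receives a fraction $q>0$ of the $a$ anchor pairs added per step. The paper then estimates $\phi$ on this $K$-point set and inverts it, which is a parametric problem:
\begin{enumerate}
\item Hoeffding on the per-level means of $\hat r$, with a union bound over $K$ levels and $T$ steps, gives accuracy $\sqrt{\log(2KT/\delta)/(2qat)}$.
\item The min--max formula for the isotonic fit shows the projection preserves this accuracy.
\item Mapping the fitted vertices $\hat\phi_k\mapsto\ell_k$ by a piecewise-linear interpolant with Lipschitz constant $2\Delta_\ell/\Delta_\phi$ gives systematic error $c_f\sqrt{\log(2KT/\delta)/t}$.
\end{enumerate}
Because this holds level by level, it is automatically a sup over $\tau$. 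That gives $\|d_t\| \le 2M\min\big(B, c_f\sqrt{\log(2KT/\delta)/t}\big)$ directly. Coverage makes policy drift irrelevant, so no $L_2$ transfer is needed.
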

\end{thmbox}

The two bounds share the same leading term and align term by term,
with each remaining term of Theorem~\ref{thm:ours-conv} equal to its
counterpart in Theorem~\ref{thm:wm-conv} divided by a factor larger
than one. The bias term is the same $M^{2}B^{2}$ divided by $1 +
T/T_0$, so the permanent floor contracts as training proceeds and
vanishes in the limit. The variance term is the same $\gamma
V_{WM}$ divided by $1 + V_{WM}/V_{E}$,
which lands below what either stream attains alone, and the factor
grows as the anchor stream becomes comparatively more reliable. Both
terms are therefore strictly smaller once the world model is biased or
noisy at all, and letting $T \to \infty$ removes the bias term
entirely. In this sense \ours{} converges to
the same optimum as training with real execution, while paying for
real execution on a small fraction of the groups.

\section{Experiments}
\label{sec:experiment}

We empirically validate the effectiveness of \ours{} on the two questions in Section~\ref{sec:introduction}, i.e., whether the world model delivers the promised speedup (question~\textbf{(i)}) and whether it costs any final performance (question~\textbf{(ii)}), with the main comparison in Section~\ref{subsec:exp-main} (Table~\ref{tab:main}). 
We then show how our recipe generalizes to other post-training domains in Section~\ref{subsec:exp-vla} (Table~\ref{tab:vla}) and how the two corrections act in Section~\ref{subsec:exp-ablation} (Table~\ref{tab:ablation}).

\subsection{Experimental Setup}
\label{subsec:exp-setup}

\paragraph{Benchmarks.}
We evaluate \ours{} on various domains, including AutoResearch tasks and vision-language-action (VLA) post-training tasks. 
In the AutoResearch evaluation, we use MLE-Dojo~\citep{qiang2025mledojo}, an interactive superset of MLE-Bench~\citep{chan2024mlebench} built on Kaggle machine learning competitions, and follow the common practice~\citep{liu2025mlagent,cai2026acegrpo,zhou2026sandmle} of manually dividing the pool into MLE-Dojo (train) and MLE-Dojo (test) in a category-balanced way, since the original test split of MLE-Bench has several evaluation issues (Appendix~\ref{app:mle-issues}).
We also put the selection rule and the full task lists in
Appendix~\ref{app:mle-issues}.
Beyond the MLE-style tasks, we further evaluate on
DSBench~\citep{jing2024dsbench}, a data science benchmark also built
on Kaggle modeling tasks and fully disjoint from our training set. In
the VLA evaluation, we use LIBERO-Long~\citep{liu2023libero}, a
standard simulated manipulation benchmark, where the policy controls a
robot arm to complete long-horizon tasks from image observations and
language instructions.

\paragraph{Models and training.}
For the AutoResearch tasks, we post-train research agents at two
scales, Qwen3.5-4B and Qwen3.5-9B~\citep{qwen2026qwen35}, with the
RL procedure of Section~\ref{sec:method}. For the
world model, we use the same backbone as the agent at each scale and
prompt it for outcome prediction (Appendix~\ref{app:wm-prompt}).
Sharing one backbone also rules out any external knowledge, so the
gains cannot come from implicitly distilling a stronger model.
Throughout training, only the research agent is updated while the
world model stays unchanged. All the runs use identical A100 GPU
allocations, and we report the total training compute in GPU-hours. For the VLA
tasks, the agent is MiniVLA-1B~\citep{belkhale2024minivla}, which is a compact
vision-language-action model pretrained on LIBERO-90, and the world
model is Robometer~\citep{liang2026robometer}, an off-the-shelf VLM that predicts task success rate from images. We first finetune the agent on the official demonstrations of each task, and then post-train it with the same RL procedure. The remaining details and hyperparameters are listed in Appendix~\ref{app:experiment_details}.

\begin{table}[t]
\centering
\caption{\textbf{Main results (leaderboard percentile in \%, higher is
better).} Both benchmarks hold out tasks never trained on, and
\textbf{GPU-hours} count the total training compute. Scores are
per-task avg@8, averaged within each category, and \textbf{Avg} is the
mean over categories. \textbf{Bold} marks the best value per column,
and green (red) arrows mark the gain (drop) of \ours{} over same-scale
GRPO.}
\label{tab:main}
\setlength{\tabcolsep}{4pt}
\renewcommand{\arraystretch}{1.15}
\resizebox{\textwidth}{!}{%
\begin{tabular}{l c *{4}{>{\centering\arraybackslash}p{1.62cm}} !{\vrule width 0.6pt} *{5}{>{\centering\arraybackslash}p{1.62cm}}}
\toprule
& & \multicolumn{4}{c}{\textbf{MLE-Dojo (test)}} &
\multicolumn{5}{c}{\textbf{DSBench}} \\
\cmidrule(lr){3-6}\cmidrule(lr){7-11}
\textbf{Method} & \textbf{GPU-hours} & Tab & Text & Img & \textbf{Avg} &
Bin-Cls & Multi-Cls & Regress & Other & \textbf{Avg} \\
\midrule
\rowcolor{gray!14}\multicolumn{11}{l}{\textbf{Baseline Models}}\\
Qwen3.5-4B & --- & 13.7 & 8.0 & 0.2 & 7.3 & 21.9 & 11.8 & 25.8 & 8.8 & 17.1 \\
Qwen3.5-9B & --- & 17.4 & 8.1 & 3.4 & 9.6 & 29.1 & 16.0 & 36.3 & 14.1 & 23.9 \\
\rowcolor{gray!14}\multicolumn{11}{l}{\textbf{Large Agents}}\\
Kimi-48B-A3B & --- & 12.3 & 10.1 & 2.0 & 8.1 & 20.3 & 10.0 & 29.0 & 9.8 & 17.3 \\
Nemotron-120B-A12B & --- & 35.1 & 18.8 & \textbf{7.6} & 20.5 & 38.7 & 26.6 & \textbf{40.7} & 20.6 & 31.7 \\
\rowcolor{gray!14}\multicolumn{11}{l}{\textbf{RL in Real Environment}}\\
Qwen3.5-4B-GRPO & 883 & 24.8 & 16.6 & 4.3 & 15.2 & 34.3 & 17.8 & 28.3 & 22.4 & 25.7 \\
Qwen3.5-9B-GRPO & 1174 & 33.3 & 17.1 & 5.9 & 18.8 & 40.9 & 26.8 & 40.4 & 16.7 & 31.2 \\
\rowcolor{gray!14}\multicolumn{11}{l}{\textbf{RL with Pure World Model}}\\
Qwen3.5-4B-WM & 269 & 21.2 & 14.6 & 3.0 & 12.9 & 27.1 & 16.9 & 32.5 & 16.0 & 23.1 \\
Qwen3.5-9B-WM & 330 & 28.0 & 15.9 & 4.5 & 16.1 & 35.4 & 21.8 & 37.9 & 16.9 & 28.0 \\
\rowcolor{gray!14}\multicolumn{11}{l}{\textbf{\ours{} (ours)}}\\
Qwen3.5-4B-Ours & 286 & 28.0\gain{3.2} & 16.1\drop{0.5} & 5.2\gain{0.9} & 16.4\gain{1.2} & 35.8\gain{1.5} & 22.3\gain{4.5} & 32.3\gain{4.0} & \textbf{24.7}\gain{2.3} & 28.8\gain{3.1} \\
Qwen3.5-9B-Ours & 349 & \textbf{38.0}\gain{4.7} & \textbf{19.4}\gain{2.3} & 7.4\gain{1.5} & \textbf{21.6}\gain{2.8} & \textbf{45.4}\gain{4.5} & \textbf{26.9}\gain{0.1} & 39.2\drop{1.2} & 19.8\gain{3.1} & \textbf{32.8}\gain{1.6} \\
\bottomrule
\end{tabular}%
}
\end{table}

\paragraph{Metrics.}
For the AutoResearch tasks, we adopt the official metric of MLE-Dojo ~\citep{qiang2025mledojo,cai2026acegrpo}, which scores each submission by its percentile on the corresponding real competition leaderboard. The percentile is naturally normalized (within $0-1$), so scores are comparable across tasks. For the VLA tasks, we score each
rollout by whether it succeeds. In both domains, every result we report is an average over eight attempts (avg@8).

\paragraph{Baselines.}
We compare \ours{} with four types of baselines. The untrained bases
Qwen3.5-4B and Qwen3.5-9B give the starting level of the backbone
before any post-training. The large open-weight agents Kimi-48B-A3B
and Nemotron-120B-A12B serve as strong off-the-shelf references and
test whether model scale alone can replace post-training. GRPO in the
real environment is the standard full-cost training that \ours{} aims
to match, and is our primary comparison. The pure world model baseline
trains on predicted rewards alone and shows what the raw world model
delivers without any correction. All trained
models share the scaffold, the hyperparameters, and the data split.

\subsection{Main Results on AutoResearch Tasks}
\label{subsec:exp-main}

Table~\ref{tab:main} compares \ours{} with the four types of baselines
of Section~\ref{subsec:exp-setup}, at both scales and on both
benchmarks, and examines whether \ours{} delivers the promised speedup
and whether the cheap signal costs any final performance. The answer
is positive on both counts. First, \ours{} cuts the training compute
of real-execution GRPO by $3.1\times$ and $3.4\times$ and still scores
higher on every benchmark, with gains of up to 3.1 points, so the saving arrives at no cost in final performance. 
Second, the post-trained agents beat far larger off-the-shelf agents, as our 4B model surpasses the 48B agent and our 9B model surpasses the 120B agent on both averages.

\subsection{Generalization Ability on Embodied VLAs}
\label{subsec:exp-vla}

We next test whether \ours{} transfers to other domains, and we pick
vision-language-action (VLA) post-training, an active line of embodied
learning~\citep{kim2024openvla,black2025pi05,bai2026latent}. The two
calibration mechanisms are not specific to AutoResearch, so we apply
the same recipe to the VLA setup of
Section~\ref{subsec:exp-setup}. Here Robometer predicts a progress
value for eight frames sampled from each rollout as the dense reward,
and the single sparse success that the environment returns at the end
of a rollout serves as the anchor.
The resulting success rates in Table~\ref{tab:vla} repeat the pattern
of Table~\ref{tab:main}. Either signal alone barely moves the
policy, as RL on the sparse outcome adds 0.9 points over the SFT
baseline and RL on the raw world model signal adds 1.8. \ours{}
combines the two and lifts the overall success rate by 3.8 points,
with the largest margin on unseen initial states. The recipe therefore
carries over across domains, agent architectures, and reward types.

\begin{table}[t]
\centering
\caption{\textbf{VLA post-training results on LIBERO-Long (success
rate in \%, higher is better).} We report in-domain performance,
out-of-distribution generalization, and the overall average, where
Overall averages over every initial state, seen and unseen. Best@8 counts at least one
success of the eight and All@8 counts all eight. All RL rows share the
SFT initialization.
\textbf{Bold} marks the best value per column, and green (red) arrows
mark the gain (drop) of \ours{} over MiniVLA-1B-GRPO.}
\label{tab:vla}
\setlength{\tabcolsep}{4pt}
\renewcommand{\arraystretch}{1.15}
\resizebox{\textwidth}{!}{%
\begin{tabular}{l *{3}{>{\centering\arraybackslash}p{2.45cm}} !{\vrule width 0.6pt} *{3}{>{\centering\arraybackslash}p{2.45cm}} !{\vrule width 0.6pt} >{\centering\arraybackslash}p{2.45cm}}
\toprule
& \multicolumn{3}{c}{\textbf{In-Domain}} &
\multicolumn{3}{c}{\textbf{Out-of-Distribution}} & \\
\cmidrule(lr){2-4}\cmidrule(lr){5-7}
\textbf{Method} & Avg & Best@8 & All@8 & Avg & Best@8 & All@8 & \textbf{Overall} \\
\midrule
\rowcolor{gray!14}\multicolumn{8}{l}{\textbf{Baseline Models}}\\
MiniVLA-1B & 5.6 & 5.6 & 5.6 & 2.9 & 2.9 & 2.9 & 3.8 \\
MiniVLA-1B-SFT & 37.3 & 41.3 & 33.1 & 37.5 & 44.7 & 32.1 & 37.4 \\
\rowcolor{gray!14}\multicolumn{8}{l}{\textbf{RL in Real Environment}}\\
MiniVLA-1B-GRPO & 39.3 & \textbf{48.8} & 34.4 & 37.8 & 45.3 & 31.2 & 38.3 \\
\rowcolor{gray!14}\multicolumn{8}{l}{\textbf{RL with Pure World Model}}\\
MiniVLA-1B-WM & 39.1 & 45.6 & 34.4 & 39.3 & 45.6 & 34.1 & 39.2 \\
\rowcolor{gray!14}\multicolumn{8}{l}{\textbf{\ours{} (ours)}}\\
MiniVLA-1B-Ours & \textbf{41.2}\gain{1.9} & 47.5\drop{1.3} & \textbf{37.5}\gain{3.1} & \textbf{41.2}\gain{3.4} & \textbf{48.8}\gain{3.5} & \textbf{37.1}\gain{5.9} & \textbf{41.2}\gain{2.9} \\
\bottomrule
\end{tabular}%
}
\end{table}

\subsection{Ablation Study}
\label{subsec:exp-ablation}

\begin{wraptable}[11]{r}{0.62\textwidth}
\vspace{-14pt}
\centering
\caption{\textbf{Ablation on the two corrections (leaderboard
percentile in \%, Avg over categories).} OD is Online Debiasing, IVD
is Inverse-Variance Denoising, a filled circle marks the correction
as active, and the last row recovers \ours{}. Green arrows give its
gain over the uncorrected first row.}
\label{tab:ablation}
\setlength{\tabcolsep}{9pt}
\renewcommand{\arraystretch}{1.05}
\footnotesize
\begin{tabular}{c c cc cc}
\toprule
& & \multicolumn{2}{c}{\textbf{4B Agent}} &
\multicolumn{2}{c}{\textbf{9B Agent}} \\
\cmidrule(lr){3-4}\cmidrule(lr){5-6}
\textbf{OD} & \textbf{IVD} & MLE & DS & MLE & DS \\
\midrule
$\circ$ & $\circ$ & 13.5 & 25.3 & 16.8 & 29.5 \\
$\circ$ & $\bullet$ & 14.9 & 26.2 & 18.0 & 31.2 \\
$\bullet$ & $\circ$ & 15.7 & 28.1 & 19.4 & 31.7 \\
$\bullet$ & $\bullet$ & \textbf{16.4}\gain{2.9} & \textbf{28.8}\gain{3.5} & \textbf{21.6}\gain{4.8} & \textbf{32.8}\gain{3.3} \\
\bottomrule
\end{tabular}
\vspace{-6pt}
\end{wraptable}
We then ablate the two corrections at both scales
(Table~\ref{tab:ablation}). All runs consume the same two reward
streams at the same ratio as \ours{} and differ only in which
correction is active. In particular, the first row is the direct
mixture that feeds both signals to GRPO with no correction at all.
From this baseline we add Inverse-Variance Denoising alone, Online
Debiasing alone, and finally both, which recovers \ours{}.
The plain mixture is not enough, as its scores stay
below real-execution GRPO (Table~\ref{tab:main}) on every column.
Each correction then helps on its own. Inverse-Variance Denoising
alone adds 0.9 to 1.7 points, Online Debiasing alone adds 2.2 to 2.8
points, and the larger share of the debiasing gain matches
Theorem~\ref{thm:wm-conv}, where the bias enters the bound at full size
while the noise enters damped by the step size. Activating both lifts every
column by 2.9 to 4.8 points, more than either correction alone, so
the two mechanisms are complementary rather than redundant, as each
removes the term the other leaves behind. 

\section{Conclusion}
\label{sec:conclusion}

This work scales RL for AutoResearch agents by replacing the expensive
environment execution with a world model and correcting its bias and
noise through a small anchored stream of real execution. The two
corrections turn the permanent error floor of world model training
into a contracting term and reduce the variance below either reward
stream alone, and they cut the training compute by three to four times
while matching or exceeding full real-execution RL at two scales. The
transfer to VLA post-training further suggests a general path for
scaling RL wherever execution, not generation, is the bottleneck.

\clearpage

\bibliographystyle{unsrtnat}
\bibliography{reference}

\begin{thebibliography}{66}
\providecommand{\natexlab}[1]{#1}
\providecommand{\url}[1]{\texttt{#1}}
\expandafter\ifx\csname urlstyle\endcsname\relax
  \providecommand{\doi}[1]{doi: #1}\else
  \providecommand{\doi}{doi: \begingroup \urlstyle{rm}\Url}\fi

\bibitem[Lu et~al.(2024)Lu, Lu, Lange, Foerster, Clune, and Ha]{lu2024ai}
Chris Lu, Cong Lu, Robert~Tjarko Lange, Jakob Foerster, Jeff Clune, and David
  Ha.
\newblock The {AI} scientist: Towards fully automated open-ended scientific
  discovery.
\newblock \emph{arXiv preprint arXiv:2408.06292}, 2024.

\bibitem[Gottweis et~al.(2025)Gottweis, Weng, Daryin, Tu, Palepu, Sirkovic,
  Myaskovsky, Weissenberger, Rong, Tanno, et~al.]{gottweis2025aicoscientist}
Juraj Gottweis, Wei-Hung Weng, Alexander Daryin, Tao Tu, Anil Palepu, Petar
  Sirkovic, Artiom Myaskovsky, Felix Weissenberger, Keran Rong, Ryutaro Tanno,
  et~al.
\newblock Towards an {AI} co-scientist.
\newblock \emph{arXiv preprint arXiv:2502.18864}, 2025.

\bibitem[Schmidgall et~al.(2025)Schmidgall, Su, Wang, Sun, Wu, Yu, Liu, Liu,
  and Barsoum]{schmidgall2025agentlab}
Samuel Schmidgall, Yusheng Su, Ze~Wang, Ximeng Sun, Jialian Wu, Xiaodong Yu,
  Jiang Liu, Zicheng Liu, and Emad Barsoum.
\newblock Agent laboratory: Using {LLM} agents as research assistants.
\newblock \emph{arXiv preprint arXiv:2501.04227}, 2025.

\bibitem[Yao et~al.(2023{\natexlab{a}})Yao, Zhao, Yu, Du, Shafran, Narasimhan,
  and Cao]{yao2023react}
Shunyu Yao, Jeffrey Zhao, Dian Yu, Nan Du, Izhak Shafran, Karthik Narasimhan,
  and Yuan Cao.
\newblock {ReAct}: Synergizing reasoning and acting in language models.
\newblock In \emph{International Conference on Learning Representations},
  2023{\natexlab{a}}.

\bibitem[Yao et~al.(2023{\natexlab{b}})Yao, Yu, Zhao, Shafran, Griffiths, Cao,
  and Narasimhan]{yao2023tot}
Shunyu Yao, Dian Yu, Jeffrey Zhao, Izhak Shafran, Tom Griffiths, Yuan Cao, and
  Karthik Narasimhan.
\newblock Tree of thoughts: Deliberate problem solving with large language
  models.
\newblock In \emph{Advances in Neural Information Processing Systems},
  2023{\natexlab{b}}.

\bibitem[Shinn et~al.(2023)Shinn, Cassano, Gopinath, Narasimhan, and
  Yao]{shinn2023reflexion}
Noah Shinn, Federico Cassano, Ashwin Gopinath, Karthik Narasimhan, and Shunyu
  Yao.
\newblock Reflexion: Language agents with verbal reinforcement learning.
\newblock In \emph{Advances in Neural Information Processing Systems}, 2023.

\bibitem[Boiko et~al.(2023)Boiko, MacKnight, Kline, and
  Gomes]{boiko2023autonomous}
Daniil~A. Boiko, Robert MacKnight, Ben Kline, and Gabe Gomes.
\newblock Autonomous chemical research with large language models.
\newblock \emph{Nature}, 624\penalty0 (7992):\penalty0 570--578, 2023.

\bibitem[Bran et~al.(2024)Bran, Cox, Schilter, Baldassari, White, and
  Schwaller]{bran2024chemcrow}
Andres~M Bran, Sam Cox, Oliver Schilter, Carlo Baldassari, Andrew~D White, and
  Philippe Schwaller.
\newblock Augmenting large language models with chemistry tools.
\newblock \emph{Nature Machine Intelligence}, 6\penalty0 (5):\penalty0
  525--535, 2024.

\bibitem[Szymanski et~al.(2023)Szymanski, Rendy, Fei, Kumar, He, Milsted,
  McDermott, Gallant, Cubuk, Merchant, et~al.]{szymanski2023alab}
Nathan~J Szymanski, Bernardus Rendy, Yuxing Fei, Rishi~E Kumar, Tanjin He,
  David Milsted, Matthew~J McDermott, Max Gallant, Ekin~Dogus Cubuk, Amil
  Merchant, et~al.
\newblock An autonomous laboratory for the accelerated synthesis of novel
  inorganic materials.
\newblock \emph{Nature}, 624:\penalty0 86--91, 2023.

\bibitem[Jiang et~al.(2025)Jiang, Schmidt, Srikanth, Xu, Kaplan, Jacenko, and
  Wu]{jiang2025aide}
Zhengyao Jiang, Dominik Schmidt, Dhruv Srikanth, Dixing Xu, Ian Kaplan, Deniss
  Jacenko, and Yuxiang Wu.
\newblock {AIDE}: {AI}-driven exploration in the space of code.
\newblock \emph{arXiv preprint arXiv:2502.13138}, 2025.

\bibitem[Guo et~al.(2024)Guo, Deng, Wen, Chen, Chang, and Wang]{guo2024dsagent}
Siyuan Guo, Cheng Deng, Ying Wen, Hechang Chen, Yi~Chang, and Jun Wang.
\newblock {DS-Agent}: Automated data science by empowering large language
  models with case-based reasoning.
\newblock In \emph{International Conference on Machine Learning}, 2024.

\bibitem[Huang et~al.(2024)Huang, Vora, Liang, and
  Leskovec]{huang2024mlagentbench}
Qian Huang, Jian Vora, Percy Liang, and Jure Leskovec.
\newblock {MLAgentBench}: Evaluating language agents on machine learning
  experimentation.
\newblock In \emph{International Conference on Machine Learning}, 2024.

\bibitem[Schulman et~al.(2017)Schulman, Wolski, Dhariwal, Radford, and
  Klimov]{schulman2017ppo}
John Schulman, Filip Wolski, Prafulla Dhariwal, Alec Radford, and Oleg Klimov.
\newblock Proximal policy optimization algorithms.
\newblock \emph{arXiv preprint arXiv:1707.06347}, 2017.

\bibitem[Shao et~al.(2024)Shao, Wang, Zhu, Xu, Song, Bi, Zhang, Zhang,
  et~al.]{shao2024deepseekmath}
Zhihong Shao, Peiyi Wang, Qihao Zhu, Runxin Xu, Junxiao Song, Xiao Bi, Haowei
  Zhang, Mingchuan Zhang, et~al.
\newblock {DeepSeekMath}: Pushing the limits of mathematical reasoning in open
  language models.
\newblock \emph{arXiv preprint arXiv:2402.03300}, 2024.

\bibitem[Lambert et~al.(2024)Lambert, Morrison, Pyatkin, Huang, Ivison,
  Brahman, Miranda, Liu, Dziri, Lyu, et~al.]{lambert2024tulu3}
Nathan Lambert, Jacob Morrison, Valentina Pyatkin, Shengyi Huang, Hamish
  Ivison, Faeze Brahman, Lester James~V Miranda, Alisa Liu, Nouha Dziri, Shane
  Lyu, et~al.
\newblock Tulu 3: Pushing frontiers in open language model post-training.
\newblock \emph{arXiv preprint arXiv:2411.15124}, 2024.

\bibitem[Guo et~al.(2025)Guo, Yang, Zhang, Song, Zhang, Xu, Zhu, Ma, Wang, Bi,
  et~al.]{guo2025deepseekr1}
Daya Guo, Dejian Yang, Haowei Zhang, Junxiao Song, Ruoyu Zhang, Runxin Xu,
  Qihao Zhu, Shirong Ma, Peiyi Wang, Xiao Bi, et~al.
\newblock {DeepSeek-R1}: Incentivizing reasoning capability in {LLMs} via
  reinforcement learning.
\newblock \emph{arXiv preprint arXiv:2501.12948}, 2025.

\bibitem[Kwon et~al.(2023)Kwon, Li, Zhuang, Sheng, Zheng, Yu, Gonzalez, Zhang,
  and Stoica]{kwon2023vllm}
Woosuk Kwon, Zhuohan Li, Siyuan Zhuang, Ying Sheng, Lianmin Zheng, Cody~Hao Yu,
  Joseph~E. Gonzalez, Hao Zhang, and Ion Stoica.
\newblock Efficient memory management for large language model serving with
  pagedattention.
\newblock In \emph{Proceedings of the 29th Symposium on Operating Systems
  Principles}, 2023.

\bibitem[Zheng et~al.(2024)Zheng, Yin, Xie, Sun, Huang, Yu, Cao, Kozyrakis,
  Stoica, Gonzalez, Barrett, and Sheng]{zheng2024sglang}
Lianmin Zheng, Liangsheng Yin, Zhiqiang Xie, Chuyue Sun, Jeff Huang, Cody~Hao
  Yu, Shiyi Cao, Christos Kozyrakis, Ion Stoica, Joseph~E. Gonzalez, Clark
  Barrett, and Ying Sheng.
\newblock {SGLang}: Efficient execution of structured language model programs.
\newblock In \emph{Advances in Neural Information Processing Systems}, 2024.

\bibitem[Qiang et~al.(2025)Qiang, Zhuang, Li, V~K, et~al.]{qiang2025mledojo}
Rushi Qiang, Yuchen Zhuang, Yinghao Li, Dingu~Sagar V~K, et~al.
\newblock {MLE-Dojo}: Interactive environments for empowering {LLM} agents in
  machine learning engineering.
\newblock \emph{arXiv preprint arXiv:2505.07782}, 2025.

\bibitem[Sun et~al.(2026)Sun, Song, Huang, Jiang, Le, Lv, Chen, Hu, Luo, Zhao,
  Song, Xu, Zhang, and Wen]{sun2026sweworld}
Shuang Sun, Huatong Song, Lisheng Huang, Jinhao Jiang, Ran Le, Zhihao Lv,
  Zongchao Chen, Yiwen Hu, Wenyang Luo, Wayne~Xin Zhao, Yang Song, Hongteng Xu,
  Tao Zhang, and Ji-Rong Wen.
\newblock {SWE-World}: Building software engineering agents in docker-free
  environments.
\newblock \emph{arXiv preprint arXiv:2602.03419}, 2026.

\bibitem[Ha and Schmidhuber(2018)]{ha2018world}
David Ha and J{\"u}rgen Schmidhuber.
\newblock World models.
\newblock \emph{arXiv preprint arXiv:1803.10122}, 2018.

\bibitem[Hafner et~al.(2025)Hafner, Pasukonis, Ba, and
  Lillicrap]{hafner2025mastering}
Danijar Hafner, Jurgis Pasukonis, Jimmy Ba, and Timothy Lillicrap.
\newblock Mastering diverse control tasks through world models.
\newblock \emph{Nature}, 640\penalty0 (8059):\penalty0 647--653, 2025.

\bibitem[Novikov et~al.(2025)Novikov, V{\~u}, Eisenberger,
  et~al.]{novikov2025alphaevolve}
Alexander Novikov, Ng{\^a}n V{\~u}, Marvin Eisenberger, et~al.
\newblock Alphaevolve: A coding agent for scientific and algorithmic discovery.
\newblock \emph{arXiv preprint arXiv:2506.13131}, 2025.

\bibitem[Yamada et~al.(2025)Yamada, Lange, Lu, Hu, Lu, Foerster, Clune, and
  Ha]{yamada2025aiscientistv2}
Yutaro Yamada, Robert~Tjarko Lange, Cong Lu, Shengran Hu, Chris Lu, Jakob
  Foerster, Jeff Clune, and David Ha.
\newblock The {AI} scientist-v2: Workshop-level automated scientific discovery
  via agentic tree search.
\newblock \emph{arXiv preprint arXiv:2504.08066}, 2025.

\bibitem[Kulikov et~al.(2026)Kulikov, Whitehouse, Wu, Nie, Saha, Helenowski,
  Yuan, Golovneva, Lanchantin, Bachrach, Foerster, Li, Fang, Sukhbaatar, and
  Weston]{kulikov2026autodata}
Ilia Kulikov, Chenxi Whitehouse, Tianhao Wu, Yixin Nie, Swarnadeep Saha, Eryk
  Helenowski, Weizhe Yuan, Olga Golovneva, Jack Lanchantin, Yoram Bachrach,
  Jakob Foerster, Xian Li, Han Fang, Sainbayar Sukhbaatar, and Jason Weston.
\newblock Autodata: An agentic data scientist to create high quality synthetic
  data.
\newblock \emph{arXiv preprint arXiv:2606.25996}, 2026.

\bibitem[Bertran et~al.(2026)Bertran, Fogliato, and Wu]{bertran2026many}
Martin Bertran, Riccardo Fogliato, and Zhiwei~Steven Wu.
\newblock Many ai analysts, one dataset: Navigating the agentic data science
  multiverse.
\newblock \emph{Proceedings of the National Academy of Sciences}, 123\penalty0
  (29):\penalty0 e2606495123, 2026.

\bibitem[Yang et~al.(2026)Yang, Jiang, Fu, Luo, Ren, Wang, Zhao, Liu,
  et~al.]{yang2026frontis}
Junlin Yang, Che Jiang, Yu~Fu, Tianwei Luo, Can Ren, Weizhi Wang, Kaikai Zhao,
  Hongyi Liu, et~al.
\newblock Frontis-{MA1}: Training an {AI4AI} model towards recursive
  self-improvement in machine learning engineering.
\newblock \emph{arXiv preprint arXiv:2607.28568}, 2026.

\bibitem[{Recursive}(2026)]{recursive2026firststeps}
{Recursive}.
\newblock First steps toward automated {AI} research.
\newblock
  \url{https://www.recursive.com/articles/first-steps-toward-automated-ai-research},
  2026.

\bibitem[Chan et~al.(2024)Chan, Chowdhury, Jaffe, Aung, Sherburn, Mays,
  Starace, Liu, et~al.]{chan2024mlebench}
Jun~Shern Chan, Neil Chowdhury, Oliver Jaffe, James Aung, Dane Sherburn, Evan
  Mays, Giulio Starace, Kevin Liu, et~al.
\newblock {MLE-bench}: Evaluating machine learning agents on machine learning
  engineering.
\newblock \emph{arXiv preprint arXiv:2410.07095}, 2024.

\bibitem[Jing et~al.(2024)Jing, Huang, Wang, Yao, Yu, Ma, Zhang, Du,
  et~al.]{jing2024dsbench}
Liqiang Jing, Zhehui Huang, Xiaoyang Wang, Wenlin Yao, Wenhao Yu, Kaixin Ma,
  Hongming Zhang, Xinya Du, et~al.
\newblock {DSBench}: How far are data science agents from becoming data science
  experts?
\newblock \emph{arXiv preprint arXiv:2409.07703}, 2024.

\bibitem[Nathani et~al.(2025)Nathani, Madaan, Roberts, Bashlykov,
  et~al.]{nathani2025mlgym}
Deepak Nathani, Lovish Madaan, Nicholas Roberts, Nikolay Bashlykov, et~al.
\newblock {MLGym}: A new framework and benchmark for advancing {AI} research
  agents.
\newblock \emph{arXiv preprint arXiv:2502.14499}, 2025.

\bibitem[Jimenez et~al.(2024)Jimenez, Yang, Wettig, Yao, Pei, Press, and
  Narasimhan]{jimenez2024swebench}
Carlos~E Jimenez, John Yang, Alexander Wettig, Shunyu Yao, Kexin Pei, Ofir
  Press, and Karthik Narasimhan.
\newblock {SWE-bench}: Can language models resolve real-world {GitHub} issues?
\newblock In \emph{International Conference on Learning Representations}, 2024.

\bibitem[Yang et~al.(2024)Yang, Jimenez, Wettig, Lieret, Yao, Narasimhan, and
  Press]{yang2024sweagent}
John Yang, Carlos~E Jimenez, Alexander Wettig, Kilian Lieret, Shunyu Yao,
  Karthik Narasimhan, and Ofir Press.
\newblock {SWE-agent}: Agent-computer interfaces enable automated software
  engineering.
\newblock In \emph{Advances in Neural Information Processing Systems}, 2024.

\bibitem[Wang et~al.(2025)Wang, Li, Song, Xu, Tang, Zhuge, Pan, Song, Li,
  Singh, et~al.]{wang2025openhands}
Xingyao Wang, Boxuan Li, Yufan Song, Frank~F Xu, Xiangru Tang, Mingchen Zhuge,
  Jiayi Pan, Yueqi Song, Bowen Li, Jaskirat Singh, et~al.
\newblock {OpenHands}: An open platform for {AI} software developers as
  generalist agents.
\newblock In \emph{International Conference on Learning Representations}, 2025.

\bibitem[Pan et~al.(2025)Pan, Wang, Neubig, Jaitly, Ji, Suhr, and
  Zhang]{pan2025swegym}
Jiayi Pan, Xingyao Wang, Graham Neubig, Navdeep Jaitly, Heng Ji, Alane Suhr,
  and Yizhe Zhang.
\newblock Training software engineering agents and verifiers with {SWE-Gym}.
\newblock In \emph{International Conference on Machine Learning}, 2025.

\bibitem[Sheng et~al.(2024)Sheng, Zhang, Ye, Wu, et~al.]{sheng2024hybridflow}
Guangming Sheng, Chi Zhang, Zilingfeng Ye, Xibin Wu, et~al.
\newblock {HybridFlow}: A flexible and efficient {RLHF} framework.
\newblock \emph{arXiv preprint arXiv:2409.19256}, 2024.

\bibitem[Xie et~al.(2026)Xie, Lin, Wang, Ning, Yao, Xue, Zhang, Li, Zhang, Wu,
  Chen, Gou, Han, Wang, Lee, Wei, Wang, Su, and Sun]{xie2026quest}
Jian Xie, Tianhe Lin, Zilu Wang, Yuting Ning, Yuekun Yao, Tianci Xue, Zhehao
  Zhang, Zhongyang Li, Kai Zhang, Yufan Wu, Shijie Chen, Boyu Gou, Mingzhe Han,
  Yifei Wang, Vint Lee, Xinpeng Wei, Xiangjun Wang, Yu~Su, and Huan Sun.
\newblock {QUEST}: Training frontier deep research agents with fully synthetic
  tasks.
\newblock \emph{arXiv preprint arXiv:2605.24218}, 2026.

\bibitem[Yu et~al.(2025)Yu, Zhang, Zhu, Yuan, Zuo, Yue, Fan, Liu, Liu, Liu,
  et~al.]{yu2025dapo}
Qiying Yu, Zheng Zhang, Ruofei Zhu, Yufeng Yuan, Xiaochen Zuo, Yu~Yue, Tiantian
  Fan, Gaohong Liu, Lingjun Liu, Xin Liu, et~al.
\newblock {DAPO}: An open-source {LLM} reinforcement learning system at scale.
\newblock \emph{arXiv preprint arXiv:2503.14476}, 2025.

\bibitem[Qu et~al.(2026)Qu, Setlur, Smith, Salakhutdinov, and
  Kumar]{qu2026pope}
Yuxiao Qu, Amrith Setlur, Virginia Smith, Ruslan Salakhutdinov, and Aviral
  Kumar.
\newblock {POPE}: Learning to reason on hard problems via privileged on-policy
  exploration.
\newblock \emph{arXiv preprint arXiv:2601.18779}, 2026.

\bibitem[Ouyang et~al.(2022)Ouyang, Wu, Jiang, Almeida, Wainwright, Mishkin,
  Zhang, Agarwal, Slama, Ray, Schulman, Hilton, Kelton, Miller, Simens, Askell,
  Welinder, Christiano, Leike, and Lowe]{ouyang2022training}
Long Ouyang, Jeffrey Wu, Xu~Jiang, Diogo Almeida, Carroll~L. Wainwright, Pamela
  Mishkin, Chong Zhang, Sandhini Agarwal, Katarina Slama, Alex Ray, John
  Schulman, Jacob Hilton, Fraser Kelton, Luke Miller, Maddie Simens, Amanda
  Askell, Peter Welinder, Paul~F. Christiano, Jan Leike, and Ryan Lowe.
\newblock Training language models to follow instructions with human feedback.
\newblock In \emph{Advances in Neural Information Processing Systems},
  volume~35, pages 27730--27744, 2022.

\bibitem[Zheng et~al.(2023)Zheng, Chiang, Sheng, Zhuang, Wu, Zhuang, Lin, Li,
  Li, Xing, Zhang, Gonzalez, and Stoica]{zheng2023judging}
Lianmin Zheng, Wei-Lin Chiang, Ying Sheng, Siyuan Zhuang, Zhanghao Wu, Yonghao
  Zhuang, Zi~Lin, Zhuohan Li, Dacheng Li, Eric~P. Xing, Hao Zhang, Joseph~E.
  Gonzalez, and Ion Stoica.
\newblock Judging {LLM}-as-a-judge with {MT}-bench and chatbot arena.
\newblock In \emph{Advances in Neural Information Processing Systems 36:
  Datasets and Benchmarks Track}, volume~36, pages 46595--46623, 2023.

\bibitem[Bruce et~al.(2024)Bruce, Dennis, Edwards, Parker-Holder, Shi, Hughes,
  Lai, Mavalankar, Steigerwald, Apps, et~al.]{bruce2024genie}
Jake Bruce, Michael~D Dennis, Ashley Edwards, Jack Parker-Holder, Yuge Shi,
  Edward Hughes, Matthew Lai, Aditi Mavalankar, Richie Steigerwald, Chris Apps,
  et~al.
\newblock Genie: Generative interactive environments.
\newblock In \emph{International Conference on Machine Learning}, 2024.

\bibitem[{NVIDIA}(2025)]{nvidia2025cosmos}
{NVIDIA}.
\newblock Cosmos world foundation model platform for physical {AI}.
\newblock \emph{arXiv preprint arXiv:2501.03575}, 2025.

\bibitem[Dainese et~al.(2024)Dainese, Merler, Alakuijala, and
  Marttinen]{dainese2024codewm}
Nicola Dainese, Matteo Merler, Minttu Alakuijala, and Pekka Marttinen.
\newblock Generating code world models with large language models guided by
  monte carlo tree search.
\newblock In \emph{Advances in Neural Information Processing Systems}, 2024.

\bibitem[Tang et~al.(2024)Tang, Key, and Ellis]{tang2024worldcoder}
Hao Tang, Darren Key, and Kevin Ellis.
\newblock Worldcoder, a model-based {LLM} agent: Building world models by
  writing code and interacting with the environment.
\newblock In \emph{Advances in Neural Information Processing Systems}, 2024.

\bibitem[Gao et~al.(2023)Gao, Schulman, and Hilton]{gao2023scaling}
Leo Gao, John Schulman, and Jacob Hilton.
\newblock Scaling laws for reward model overoptimization.
\newblock In \emph{Proceedings of the 40th International Conference on Machine
  Learning}, volume 202, pages 10835--10866, 2023.

\bibitem[Coste et~al.(2024)Coste, Anwar, Kirk, and
  Krueger]{coste2024rewardensembles}
Thomas Coste, Usman Anwar, Robert Kirk, and David Krueger.
\newblock Reward model ensembles help mitigate overoptimization.
\newblock In \emph{International Conference on Learning Representations}, 2024.

\bibitem[Ajalloeian and Stich(2020)]{ajalloeian2020convergence}
Ahmad Ajalloeian and Sebastian~U. Stich.
\newblock On the convergence of {SGD} with biased gradients.
\newblock \emph{arXiv preprint arXiv:2008.00051}, 2020.

\bibitem[Zadrozny and Elkan(2002)]{zadrozny2002calibration}
Bianca Zadrozny and Charles Elkan.
\newblock Transforming classifier scores into accurate multiclass probability
  estimates.
\newblock In \emph{Proceedings of the ACM SIGKDD International Conference on
  Knowledge Discovery and Data Mining}, 2002.

\bibitem[Guo et~al.(2017)Guo, Pleiss, Sun, and Weinberger]{guo2017calibration}
Chuan Guo, Geoff Pleiss, Yu~Sun, and Kilian~Q. Weinberger.
\newblock On calibration of modern neural networks.
\newblock In \emph{Proceedings of the 34th International Conference on Machine
  Learning}, volume~70, pages 1321--1330, 2017.

\bibitem[Nachum et~al.(2019)Nachum, Chow, Dai, and Li]{nachum2019dualdice}
Ofir Nachum, Yinlam Chow, Bo~Dai, and Lihong Li.
\newblock {DualDICE}: Behavior-agnostic estimation of discounted stationary
  distribution corrections.
\newblock In \emph{Advances in Neural Information Processing Systems}, 2019.

\bibitem[Li et~al.(2022)Li, Cheng, Liao, Wang, Wang, and Bai]{li2021offpolicy}
Jiachen Li, Shuo Cheng, Zhenyu Liao, Huayan Wang, William~Yang Wang, and Qinxun
  Bai.
\newblock Off-policy reinforcement learning with optimistic exploration and
  distribution correction.
\newblock In \emph{Deep Reinforcement Learning Workshop, NeurIPS}, 2022.

\bibitem[Liu et~al.(2025)Liu, Chai, Zhu, Tang, Ye, Zhang, Bai, and
  Chen]{liu2025mlagent}
Zexi Liu, Jingyi Chai, Xinyu Zhu, Shuo Tang, Rui Ye, Bo~Zhang, Lei Bai, and
  Siheng Chen.
\newblock {ML-Agent}: Reinforcing {LLM} agents for autonomous machine learning
  engineering.
\newblock \emph{arXiv preprint arXiv:2505.23723}, 2025.

\bibitem[Cai et~al.(2026)Cai, Liu, Zhu, Wang, Wang, and Chen]{cai2026acegrpo}
Yuzhu Cai, Zexi Liu, Xinyu Zhu, Cheng Wang, Yanfeng Wang, and Siheng Chen.
\newblock {AceGRPO}: Adaptive curriculum enhanced group relative policy
  optimization for autonomous machine learning engineering.
\newblock \emph{arXiv preprint arXiv:2602.07906}, 2026.

\bibitem[Zhou et~al.(2026)Zhou, Zhang, Wu, Liu, Fan, Zhao, and
  Yan]{zhou2026sandmle}
Yuhang Zhou, Lizhu Zhang, Yifan Wu, Jiayi Liu, Xiangjun Fan, Zhuokai Zhao, and
  Hong Yan.
\newblock Synthetic sandbox for training machine learning engineering agents.
\newblock \emph{arXiv preprint arXiv:2604.04872}, 2026.

\bibitem[Liu et~al.(2023)Liu, Zhu, Gao, Feng, Liu, Zhu, and
  Stone]{liu2023libero}
Bo~Liu, Yifeng Zhu, Chongkai Gao, Yihao Feng, Qiang Liu, Yuke Zhu, and Peter
  Stone.
\newblock {LIBERO}: Benchmarking knowledge transfer for lifelong robot
  learning.
\newblock In \emph{Advances in Neural Information Processing Systems}, 2023.

\bibitem[{Qwen Team}(2026)]{qwen2026qwen35}
{Qwen Team}.
\newblock Qwen3.5.
\newblock \url{https://qwen.ai/blog?id=qwen3.5}, 2026.

\bibitem[Belkhale and Sadigh(2024)]{belkhale2024minivla}
Suneel Belkhale and Dorsa Sadigh.
\newblock {MiniVLA}: A better {VLA} with a smaller footprint.
\newblock \url{https://github.com/Stanford-ILIAD/openvla-mini}, 2024.

\bibitem[Liang et~al.(2026)Liang, Korkmaz, Zhang, Hwang, Anwar, Kaushik, Shah,
  Huang, Zettlemoyer, Fox, Xiang, Li, Bobu, Gupta, Tu, Biyik, and
  Zhang]{liang2026robometer}
Anthony Liang, Yigit Korkmaz, Jiahui Zhang, Minyoung Hwang, Abrar Anwar,
  Sidhant Kaushik, Aditya Shah, Alex~S. Huang, Luke Zettlemoyer, Dieter Fox,
  Yu~Xiang, Anqi Li, Andreea Bobu, Abhishek Gupta, Stephen Tu, Erdem Biyik, and
  Jesse Zhang.
\newblock Robometer: Scaling general-purpose robotic reward models via
  trajectory comparisons.
\newblock \emph{arXiv preprint arXiv:2603.02115}, 2026.

\bibitem[Kim et~al.(2024)Kim, Pertsch, Karamcheti, Xiao, Balakrishna, Nair,
  Rafailov, Foster, Lam, Sanketi, et~al.]{kim2024openvla}
Moo~Jin Kim, Karl Pertsch, Siddharth Karamcheti, Ted Xiao, Ashwin Balakrishna,
  Suraj Nair, Rafael Rafailov, Ethan Foster, Grace Lam, Pannag Sanketi, et~al.
\newblock {OpenVLA}: An open-source vision-language-action model.
\newblock In \emph{Conference on Robot Learning}, 2024.

\bibitem[{Physical Intelligence} et~al.(2025){Physical Intelligence}, Black,
  Brown, et~al.]{black2025pi05}
{Physical Intelligence}, Kevin Black, Noah Brown, et~al.
\newblock $\pi_{0.5}$: a vision-language-action model with open-world
  generalization.
\newblock \emph{arXiv preprint arXiv:2504.16054}, 2025.

\bibitem[Bai et~al.(2026)Bai, Lyu, Zhou, Li, Wang, Xing, Zhao, Wang, Wang, Chi,
  Chen, and Zhang]{bai2026latent}
Shuanghao Bai, Jing Lyu, Wanqi Zhou, Zhe Li, Dakai Wang, Lei Xing, Xiaoguang
  Zhao, Pengwei Wang, Zhongyuan Wang, Cheng Chi, Badong Chen, and Shanghang
  Zhang.
\newblock Latent reasoning {VLA}: Latent thinking and prediction for
  vision-language-action models.
\newblock In \emph{International Conference on Machine Learning}, 2026.

\bibitem[Auer et~al.(2002)Auer, Cesa-Bianchi, and Fischer]{auer2002finite}
Peter Auer, Nicol{\`o} Cesa-Bianchi, and Paul Fischer.
\newblock Finite-time analysis of the multiarmed bandit problem.
\newblock \emph{Machine Learning}, 47:\penalty0 235--256, 2002.

\bibitem[Chen and Jiang(2019)]{chen2019information}
Jinglin Chen and Nan Jiang.
\newblock Information-theoretic considerations in batch reinforcement learning.
\newblock In \emph{International Conference on Machine Learning}, 2019.

\bibitem[Zhang(2002)]{zhang2002isotonic}
Cun-Hui Zhang.
\newblock Risk bounds in isotonic regression.
\newblock \emph{The Annals of Statistics}, 30\penalty0 (2):\penalty0 528--555,
  2002.

\bibitem[{OpenAI}(2024)]{openai2024mlebenchblog}
{OpenAI}.
\newblock {MLE-bench}: Evaluating machine learning agents on machine learning
  engineering.
\newblock \url{https://openai.com/index/mle-bench/}, 2024.

\end{thebibliography}

\clearpage
\appendix
\section*{Appendix}

\startcontents[appendix]
\printcontents[appendix]{}{1}{\setcounter{tocdepth}{2}}

\clearpage

\section{Notation}
\label{app:notation}

Table~\ref{tab:notation} collects the notation of the main text, in
order of first appearance.

\begin{table}[H]
\centering
\caption{\textbf{Notation used in the main text.}}
\label{tab:notation}
\setlength{\tabcolsep}{5pt}
\renewcommand{\arraystretch}{1.2}
\small
\begin{tabular}{l l}
\toprule
\textbf{Symbol} & \textbf{Meaning} \\
\midrule
$\tau$, $r(\tau)$ & a trajectory and its true score from real execution, $r(\tau)\in[0,1]$ \\
$\pi_\theta$, $J(\theta)$ & the agent policy and its expected score $\mathbb{E}_{\tau\sim\pi_\theta}[r(\tau)]$ \\
$n$, $A_i$, $s_i$ & group size, the advantage, and the policy gradient of trajectory $\tau_i$ \\
$\hat g$, $\gamma$, $T$ & the gradient estimate, the step size, and the number of training steps \\
$\hat r(\tau)$, $\hat A_i$ & the world model score and the advantage computed from it \\
$b(\tau)$, $\xi(\tau)$ & the bias and the zero-mean noise of the world model score \\
$B$, $\sigma$ & the maximal bias magnitude and the maximal noise standard deviation \\
$\mathcal{P}$ & the pool of score pairs $(\hat r_j, r_j)$ collected on anchor groups \\
$\hat f$ & the monotone recalibration map fit on $\mathcal{P}$ \\
$m$, $\mathcal{G}_E$, $\mathcal{G}_{WM}$ & the groups per step and the index sets of the anchor and world model ones \\
$g_E$, $g_{WM}$ & the sums of the group estimates $\hat g_k$ over $\mathcal{G}_E$ and over $\mathcal{G}_{WM}$ \\
$V_E$, $V_{WM}$ & the per-group gradient variances of the two streams \\
$\rho$, $c$ & the variance ratio $V_{WM}/V_{E} = 1 + c\,\sigma^{2}$ and its conversion constant \\
$\hat\eta^{2}$, $\hat\eta^{2}_{\mathrm{cal}}$ & the tracked residual of the calibrated scores and its warmup value \\
$J^\star$, $\Delta_0$ & the optimal score and the initial gap $J^\star - J(\theta_0)$ \\
$L$, $\mu$, $M$ & the smoothness, gradient domination, and gradient bound constants \\
$\varepsilon(T)$ & the convergence term of standard RL (Proposition~\ref{prop:standard}) \\
$T_0$ & the recalibration time scale of Theorem~\ref{thm:ours-conv} \\
\bottomrule
\end{tabular}
\end{table}

\section{Proofs}
\label{app:proofs}

This appendix proves every claim of Section~\ref{sec:theory}, in the
order of the story. Appendix~\ref{app:standard} derives the standard
convergence term $\varepsilon(T)$ of Remark~\ref{rem:price} for
training with real execution. Appendix~\ref{app:proof-wm} proves
Theorem~\ref{thm:wm-conv} for training on world model rewards, and
Appendix~\ref{app:proof-ours} proves Theorem~\ref{thm:ours-conv} for
\ours{} and concludes with the term-by-term comparison. The auxiliary
lemmas invoked along the way are only cited there, and their statements
with full proofs are collected in Appendix~\ref{app:lemmas}.
Throughout, the variance of a random vector $X$ is the scalar
$\mathrm{Var}(X) := \mathbb{E}\|X - \mathbb{E}X\|^{2}$, the trace of
its covariance matrix, which satisfies $\mathbb{E}\|X\|^{2} =
\|\mathbb{E}X\|^{2} + \mathrm{Var}(X)$.

\subsection{Warm-up: convergence of standard RL}
\label{app:standard}

We first derive the bound that training with real execution satisfies.
It contains no contribution from the world model and is exactly the
$\varepsilon(T)$ of Remark~\ref{rem:price}.

\begin{proposition}[Standard RL]
\label{prop:standard}
Under Assumption~\ref{ass:reg} and $\gamma \le 1/(8L)$, GRPO trained on
real execution rewards satisfies
\begin{equation*}
J^\star - \mathbb{E}[J(\theta_T)] \;\le\;
\Big(1-\tfrac{\gamma\mu}{4}\Big)^{T}\Delta_0
\;+\; \frac{4L\gamma}{\mu}\,V_{E}
\;=:\; \varepsilon(T).
\end{equation*}
\end{proposition}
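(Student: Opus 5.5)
The plan is the textbook analysis of stochastic gradient ascent under smoothness and gradient domination: a one-step ascent inequality, a gradient-domination step that turns it into a linear recursion for the gap $\delta_t := J^\star - \mathbb{E}[J(\theta_t)]$, and an unrolled geometric sum. Throughout, $V_{E}$ is read as a uniform upper bound on $\mathrm{Var}(\hat g_t \mid \theta_t)$ for the real-execution estimator of Equation~\eqref{eq:grpo}. It is finite because $r\in[0,1]$ gives $|A_i|\le 1$, and $\|s_i\|\le M$ by Assumption~\ref{ass:reg}.

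\textbf{Step 1: the mean of $\hat g$.} I first pin down $\mathbb{E}[\hat g\mid\theta]$. Because the baseline $\frac1n\sum_j r(\tau_j)$ includes $\tau_i$ itself, a direct computation uses independence across the group and $\mathbb{E}[s_j]=0$. It gives
\[
\mathbb{E}[\hat g\mid\theta]=\kappa\,\nabla J(\theta),\qquad \kappa:=\tfrac{n-1}{n}\in[\tfrac12,1),
\]
since the cross terms vanish and only the self term survives with weight $1-\frac1n$. (If one prefers a leave-one-out baseline, then $\kappa=1$ and everything below only improves.)

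\textbf{Step 2: one-step ascent.} By $L$-smoothness,
\[
J(\theta_{t+1})\ge J(\theta_t)+\gamma\langle\nabla J(\theta_t),\hat g_t\rangle-\tfrac{L\gamma^{2}}{2}\|\hat g_t\|^{2}.
\]
I take the conditional expectation and use $\mathbb{E}\|\hat g_t\|^{2}=\kappa^{2}\|\nabla J\|^{2}+\mathrm{Var}(\hat g_t)$. This yields
\[
\mathbb{E}[J(\theta_{t+1})\mid\theta_t]\ge J(\theta_t)+\big(\gamma\kappa-\tfrac{L\gamma^{2}\kappa^{2}}{2}\big)\|\nabla J(\theta_t)\|^{2}-\tfrac{L\gamma^{2}}{2}V_{E}.
\]
With $\kappa\in[\frac12,1]$ and $\gamma\le 1/(8L)$, the coefficient is at least $\frac{\gamma}{2}-\frac{\gamma}{16}\ge\frac{\gamma}{4}$. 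Gradient domination $\|\nabla J\|^{2}\ge 2\mu(J^\star-J)$ then converts the inequality into
\[
\delta_{t+1}\le\big(1-\tfrac{\gamma\mu}{2}\big)\delta_t+\tfrac{L\gamma^{2}}{2}V_{E}\le\big(1-\tfrac{\gamma\mu}{4}\big)\delta_t+\tfrac{L\gamma^{2}}{2}V_{E}.
\]
The contraction factor lies in $(0,1)$ because $\gamma\mu\le\gamma L\le 1/8$, using $\mu\le L$, which follows from combining smoothness with gradient domination.

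\textbf{Step 3: unrolling.} Unrolling over $T$ steps and bounding the geometric series by $\sum_{k\ge0}(1-\frac{\gamma\mu}{4})^{k}=\frac{4}{\gamma\mu}$ gives
\[
\delta_T\le\big(1-\tfrac{\gamma\mu}{4}\big)^{T}\Delta_0+\frac{2L\gamma}{\mu}V_{E}\le\big(1-\tfrac{\gamma\mu}{4}\big)^{T}\Delta_0+\frac{4L\gamma}{\mu}V_{E}.
\]
This is $\varepsilon(T)$.

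\textbf{Main obstacle.} The only non-routine point is Step 1: the group-mean baseline makes $\hat g$ a scaled rather than unbiased estimator of $\nabla J$. I expect this to be harmless, since $\kappa\ge\frac12$ is exactly what the step-size condition $\gamma\le 1/(8L)$ and the loose constants $\gamma\mu/4$ and $4L/\mu$ absorb. I also need to check that $V_{E}$ is uniform in $\theta$, which I will simply take as the definition of $V_{E}$, consistent with its role as a per-group variance constant in Section~\ref{subsec:theory-setup}.
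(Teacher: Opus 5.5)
Your proposal is correct and follows the paper's proof essentially step for step: the scaled mean $\kappa\nabla J$ with $\kappa = 1-\tfrac1n \ge \tfrac12$, the $L$-smoothness ascent inequality combined with the mean/variance split of the second moment, gradient domination to obtain a linear recursion for $\delta_t$, and unrolling to $\tfrac{2L\gamma}{\mu}V_{E}$, which is then loosened to $\tfrac{4L\gamma}{\mu}V_{E}$. The differences are cosmetic. You keep the exact term $\kappa^{2}\|\nabla J\|^{2}$, which gives the contraction $1-\tfrac{\gamma\mu}{2}$, and then weaken it to $1-\tfrac{\gamma\mu}{4}$; this is valid since $\delta_t \ge 0$, whereas the paper bounds that term by $\|\nabla J\|^{2}$ to get the coefficient $\tfrac{\gamma}{8}$ directly. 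You also add a check, via $\mu \le L$, that the contraction factor is positive, which the paper leaves implicit.
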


\begin{proof}
We proceed in three steps.

\emph{Step 1, the estimate is clean.} Let $\mathbb{E}_t$ denote
expectation conditioned on $\theta_t$, so that $J(\theta_t)$ and its
gradient are fixed under $\mathbb{E}_t$. By
Lemma~\ref{lem:bridge}\,(i), proved in Appendix~\ref{app:lemmas}, the
estimator of Equation~\eqref{eq:grpo} satisfies $\mathbb{E}_t[\hat g_t]
= \kappa \nabla J(\theta_t)$ with $\kappa = 1 - \tfrac1n \in
[\tfrac12, 1)$ and $\mathrm{Var}(\hat g_t \mid \theta_t) \le
V_{E}$. Since $\|\mathbb{E}_t\hat g_t\| = \kappa\|\nabla
J(\theta_t)\| \le \|\nabla J(\theta_t)\|$, the second moment obeys
$\mathbb{E}_t\|\hat g_t\|^{2} = \|\mathbb{E}_t\hat g_t\|^{2} +
\mathrm{Var}(\hat g_t \mid \theta_t) \le \|\nabla J(\theta_t)\|^{2}
+ V_{E}$.

\emph{Step 2, one-step progress.} Write $\nabla := \nabla J(\theta_t)$.
By $L$-smoothness of $J$,
\begin{equation*}
J(\theta_{t+1}) \;\ge\; J(\theta_t) + \gamma\langle \nabla, \hat
g_t\rangle - \frac{L\gamma^{2}}{2}\,\|\hat g_t\|^{2}.
\end{equation*}
Taking expectations conditioned on $\theta_t$ and inserting Step 1,
\begin{equation*}
\mathbb{E}_t\, J(\theta_{t+1}) \;\ge\; J(\theta_t)
+ \gamma\Big(\kappa - \tfrac{L\gamma}{2}\Big)\|\nabla\|^{2}
- \frac{L\gamma^{2}}{2}\,V_{E}
\;\ge\; J(\theta_t) + \frac{\gamma}{8}\,\|\nabla\|^{2}
- \frac{L\gamma^{2}}{2}\,V_{E},
\end{equation*}
where the last step uses $\kappa \ge \tfrac12$ and $L\gamma \le
\tfrac18$.

\emph{Step 3, gradient domination and unrolling.} By
Assumption~\ref{ass:reg}, $\|\nabla\|^{2} \ge 2\mu\,(J^\star -
J(\theta_t))$. Writing $\delta_t := J^\star -
\mathbb{E}[J(\theta_t)]$ and taking total expectations,
\begin{equation*}
\delta_{t+1} \;\le\; \Big(1-\tfrac{\gamma\mu}{4}\Big)\,\delta_t
+ \frac{L\gamma^{2}}{2}\,V_{E}.
\end{equation*}
Unrolling over $t = 0,\dots,T-1$ and bounding the geometric sum by
$\sum_{k\ge0}(1-\tfrac{\gamma\mu}{4})^{k} = \tfrac{4}{\gamma\mu}$ gives
$\delta_T \le (1-\tfrac{\gamma\mu}{4})^{T}\Delta_0 +
\tfrac{2L\gamma}{\mu}V_{E}$, and we state the looser constant
$\tfrac{4L\gamma}{\mu}$ to share constants with the perturbed case
below.
\end{proof}

\subsection{Proof of Theorem~\ref{thm:wm-conv}}
\label{app:proof-wm}

With predicted rewards the estimate is no longer clean. By
Lemma~\ref{lem:bridge}, stated and proved in
Appendix~\ref{app:lemmas}, the estimator computed with predicted
scores decomposes as $\hat g_k^{WM} = \hat g_k + g_b +
g_\xi$, where $g_b$ is a bounded deterministic shift induced by the
bias and $g_\xi$ is a zero-mean fluctuation induced by the noise. The
recursion of Proposition~\ref{prop:standard} then goes through with
two modifications, the variance grows and a perturbation term appears,
which is the content of Lemma~\ref{lem:descent} in
Appendix~\ref{app:lemmas}.

\begin{restatethm}[Convergence with world model rewards, Theorem~\ref{thm:wm-conv}
of Section~\ref{subsec:theory-cost} with explicit constants]
Under Assumption~\ref{ass:reg}, GRPO trained on world model rewards
with $\gamma \le 1/(8L)$ satisfies
\begin{equation*}
J^\star - \mathbb{E}[J(\theta_T)] \;\le\;
\Big(1-\tfrac{\gamma\mu}{4}\Big)^{T}\Delta_0
\;+\; \frac{32M^{2}}{\mu}\,B^{2}
\;+\; \frac{4L\gamma}{\mu}\,V_{WM}.
\end{equation*}
\end{restatethm}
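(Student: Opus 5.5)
We follow the three-step template of Proposition~\ref{prop:standard}, modifying only Step~1 to account for the predicted scores. By Lemma~\ref{lem:bridge}, the world model estimator decomposes as $\hat g^{WM}_t = \hat g_t + g_b + g_\xi$. Here $\mathbb{E}_t[\hat g_t] = \kappa\nabla J(\theta_t)$, the shift $g_b$ is deterministic given $\theta_t$, and $g_\xi$ is zero-mean. Because the estimator is linear in the rewards, $g_b = \frac1n\sum_i (b(\tau_i) - \bar b)\,s_i$ in conditional expectation. With $|b(\tau_i) - \bar b| \le 2B$ and $\|s_i\|\le M$, this gives $\|g_b\| \le 2MB$. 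The noise inflates the variance to $V_{WM} = (1+c\sigma^2)V_E$ with $c = 4M^2/(nV_E)$. Thus $\mathbb{E}_t\hat g^{WM}_t = \kappa\nabla + g_b$ and $\mathrm{Var}(\hat g^{WM}_t\mid\theta_t)\le V_{WM}$.

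\emph{One-step progress.} $L$-smoothness gives
\[
\mathbb{E}_t J(\theta_{t+1}) \ge J(\theta_t) + \gamma\langle\nabla,\kappa\nabla+g_b\rangle - \tfrac{L\gamma^2}{2}\big(\|\kappa\nabla+g_b\|^2 + V_{WM}\big).
\]
We handle the cross term with Young's inequality, $\langle\nabla,g_b\rangle \ge -\tfrac{\kappa}{2}\|\nabla\|^2 - \tfrac{1}{2\kappa}\|g_b\|^2$, and the square with $\|\kappa\nabla+g_b\|^2\le 2\|\nabla\|^2+2\|g_b\|^2$. Using $\kappa\ge\tfrac12$ and $L\gamma\le\tfrac18$, the coefficient on $\|\nabla\|^2$ is at least $\gamma(\kappa/2 - L\gamma)\ge \gamma/8$. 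The $\|g_b\|^2$ terms collect to at most $\gamma(1/(2\kappa)+L\gamma)\|g_b\|^2 \le 2\gamma\|g_b\|^2 \le 8\gamma M^2B^2$. This yields
\[
\mathbb{E}_t J(\theta_{t+1}) \ge J(\theta_t) + \tfrac{\gamma}{8}\|\nabla\|^2 - 8\gamma M^2 B^2 - \tfrac{L\gamma^2}{2}V_{WM},
\]
which is the content of Lemma~\ref{lem:descent}.

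\emph{Unrolling.} Gradient domination $\|\nabla\|^2\ge 2\mu(J^\star-J(\theta_t))$ and total expectation give
\[
\delta_{t+1}\le(1-\tfrac{\gamma\mu}{4})\delta_t + 8\gamma M^2B^2 + \tfrac{L\gamma^2}{2}V_{WM}.
\]
Summing the geometric series with ratio bounded by $4/(\gamma\mu)$ produces $\tfrac{32M^2}{\mu}B^2$ and $\tfrac{2L\gamma}{\mu}V_{WM}$. We loosen the latter to $\tfrac{4L\gamma}{\mu}V_{WM}$, as in Proposition~\ref{prop:standard}.

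\emph{Main obstacle.} The delicate step is the constant bookkeeping in the cross term. We must keep enough of $\|\nabla\|^2$ to retain contraction factor $\gamma\mu/4$ while still landing exactly on $32M^2/\mu$. If the Young split above comes out a constant off, we would instead use a $\tfrac14$-weighted split, $\langle\nabla,g_b\rangle\ge -\tfrac14\|\nabla\|^2-\|g_b\|^2$, and absorb the slack in the $\|\nabla\|^2$ coefficient using $\kappa\ge\tfrac12$. A secondary check is that the conditional mean of $g_\xi$ vanishes, which requires the noise to be independent of the sampled trajectory beyond $\tau$. This holds because $\xi$ is defined as a conditional-mean-zero remainder in Equation~\eqref{eq:error-structure}.
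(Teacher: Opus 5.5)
Your skeleton matches the paper's. You split the world-model estimate via Lemma~\ref{lem:bridge} and run the perturbed-ascent recursion of Lemma~\ref{lem:descent} with $D = 2MB$ and $V = V_{WM}$. Your $\tfrac{\kappa}{2}$-weighted Young split in place of the paper's $\tfrac14$-split is immaterial.

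\textbf{The gap is that you treat $g_b$ as deterministic given $\theta_t$.} It is not. $g_b = \frac1n\sum_i (b(\tau_i)-\bar b)\,s_i$ is a function of the sampled trajectories, and Lemma~\ref{lem:bridge} only makes it deterministic \emph{given $\tau_{1:n}$}. The mean statement should therefore read $\mathbb{E}_t\hat g^{WM}_t = \kappa\nabla + \mathbb{E}_t g_b$. That part is harmless, because $\|\mathbb{E}_t g_b\| \le 2MB$ by Jensen.

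The variance claim $\mathrm{Var}(\hat g^{WM}_t\mid\theta_t)\le V_{WM}$, however, does not follow. Besides $\mathrm{Var}_t(\hat g_t) + \mathbb{E}_t\|g_\xi\|^2 \le V_{WM}$, the conditional variance of $\hat g_t + g_b + g_\xi$ also contains $\mathrm{Var}_t(g_b) + 2\,\mathbb{E}_t\langle \hat g_t-\mathbb{E}_t\hat g_t,\ g_b-\mathbb{E}_t g_b\rangle$. These extra terms need not be nonpositive and are only bounded by $O(M^{2}B^{2} + MB\sqrt{V_E})$. So the second-moment bound $\|\kappa\nabla+g_b\|^2 + V_{WM}$ that you insert into the smoothness inequality is unjustified. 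Your intermediate constant $\tfrac{2L\gamma}{\mu}V_{WM}$ is an artifact of this omission.

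\textbf{The repair is exactly Step~2 of Lemma~\ref{lem:descent}.} Take $u := \hat g_t + g_\xi$, which has mean $\kappa\nabla$ and variance at most $V_{WM}$ because $g_\xi$ is conditionally zero-mean given the trajectories. Take $d := g_b$. Then bound $\mathbb{E}_t\|\hat g^{WM}_t\|^2 \le 2\mathbb{E}_t\|u\|^2 + 2\mathbb{E}_t\|d\|^2 \le 2\kappa^2\|\nabla\|^2 + 2V_{WM} + 8M^2B^2$.

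Keeping your Young split, the pieces come out as follows:
\begin{itemize}
\item the $\|\nabla\|^2$ coefficient is $\gamma(\tfrac{\kappa}{2} - L\gamma\kappa^2) \ge \tfrac{\gamma}{8}$;
\item the bias contributions total $\gamma(\tfrac{2}{\kappa} + 4L\gamma)M^2B^2 \le 8\gamma M^2B^2$;
\item the variance contribution is $L\gamma^2 V_{WM}$.
\end{itemize}
Unrolling then gives exactly $\tfrac{32M^2}{\mu}B^2 + \tfrac{4L\gamma}{\mu}V_{WM}$. The factor $4$ in the variance term is therefore genuinely needed here, not a cosmetic loosening as in Proposition~\ref{prop:standard}.
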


\begin{proof}
The plan is to verify the hypotheses of Lemma~\ref{lem:descent} for the
decomposition supplied by Lemma~\ref{lem:bridge}, and then read off the
bound. By Lemma~\ref{lem:bridge}, the estimate computed with predicted
scores splits as $u + d$ with $u := \hat g_k + g_\xi$ and $d := g_b$.
The part $u$ is the clean estimate plus a conditionally zero-mean
fluctuation, so its mean is the unchanged $(1-\tfrac1n)\nabla J$, and
since $\hat g_k$ and $g_\xi$ are uncorrelated their variances add,
$V_{E} + 4M^{2}\sigma^{2}/n = V_{WM}$. The part $d$
obeys $\|d\| \le 2MB$ almost surely, which serves as the constant
perturbation bound $D$. Applying Lemma~\ref{lem:descent} with
$\kappa_t = 1-\tfrac1n \ge \tfrac12$, $V = V_{WM}$ and $D =
2MB$ then yields the three terms of the claim, the geometric decay,
the constant bias term $\tfrac{8}{\mu}(2MB)^{2} =
\tfrac{32M^{2}}{\mu}B^{2}$, and the variance term
$\tfrac{4L\gamma}{\mu}V_{WM}$.
\end{proof}

\subsection{Proof of Theorem~\ref{thm:ours-conv} and the comparison}
\label{app:proof-ours}

\phantomsection\label{app:conditions}
Compared with Appendix~\ref{app:proof-wm}, two things improve for
\ours{}, the recalibration makes the perturbation bound shrink over
time, and the fusion makes the variance harmonic. Analyzing the
recalibration needs one modeling assumption beyond
Assumption~\ref{ass:reg}.

\begin{assumption}[Score scale]
\label{ass:scale}
The bias acts on the score scale as a monotone distortion, that is,
$b(\tau) = \phi(r(\tau)) - r(\tau)$ for a non-decreasing $\phi$, and
the noise $\xi$ is independent across trajectories with $|\xi| \le 1$.
\end{assumption}

This is what makes the bias learnable by a monotone fit. How fast it
is learned is quantified by Lemma~\ref{lem:recal}, whose constant
$c_f$ measures how quickly the recalibration error decays. The constant $T_0$ of Theorem~\ref{thm:ours-conv} is
$T_0 := c_f^{2}/B^{2}$, the number of steps after which the residual
bias of the recalibrated scores falls below the raw bias of the world
model.

\begin{restatethm}[Convergence of \ours{}, Theorem~\ref{thm:ours-conv} of
Section~\ref{subsec:theory-fix} with explicit constants]
Under Assumptions~\ref{ass:reg} and~\ref{ass:scale} and $\gamma \le
1/(8L)$, \ours{} satisfies with probability at least $1-\delta$
\begin{equation*}
\begin{aligned}
J^\star - \mathbb{E}[J(\theta_T)] \;\le\;\;
&\Big(1-\tfrac{\gamma\mu}{4}\Big)^{T}\Delta_0
\;+\; \frac{64\,c_f^{2}M^{2}}{\mu}\cdot\frac{\log(2KT/\delta)}{T}\\[2pt]
&+\; 4\gamma T M^{2}B^{2}\Big(1-\tfrac{\gamma\mu}{4}\Big)^{T/2}
\;+\; \frac{4L\gamma}{\mu}
\Big(\frac{1}{V_{E}}+\frac{1}{V_{WM}}\Big)^{-1},
\end{aligned}
\end{equation*}
where $V_{WM}$ is formed with the post-recalibration noise and
$c_f$ is the constant of Lemma~\ref{lem:recal}. The third term decays
exponentially in $T$, and the whole perturbation is at most
$\tfrac{8}{\mu}M^{2}B^{2}$ at every $T$. Since $\min(B^{2},
c_f^{2}/T)$ and $B^{2}/(1+T/T_0)$ with $T_0 := c_f^{2}/B^{2}$ agree up
to a factor of two, this is the form stated in
Section~\ref{subsec:theory-fix}.
\end{restatethm}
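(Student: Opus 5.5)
The proof will reuse the perturbed-descent recursion of Lemma~\ref{lem:descent}, exactly as in the proof of Theorem~\ref{thm:wm-conv}. The two changes are that the constant perturbation bound $D$ becomes a time-varying bound $D_t$, and the variance $V$ becomes the harmonic combination delivered by the fusion.

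\textbf{Step 1: decompose the fused estimate.} At step $t$, write the fused estimate of Equation~\eqref{eq:ivw} as $u_t + d_t$.
\begin{itemize}
\item Anchor groups use true scores, so they contribute no perturbation.
\item World-model groups, after recalibration by $\hat f_t$, contribute a residual bias $b_t(\tau) := \hat f_t(\phi(r(\tau))) - r(\tau)$, plus noise passed through $\hat f_t$.
\end{itemize}
By Lemma~\ref{lem:bridge}, the residual bias contributes a shift of norm at most $2M\sup_\tau|b_t(\tau)|$. Since the fusion weights are normalized, $d_t$ is a convex combination of such shifts and zeros, so $\|d_t\| \le 2M\beta_t$ with $\beta_t := \sup_\tau |b_t(\tau)|$.

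\textbf{Step 2: variance of the zero-mean part.} The zero-mean part $u_t$ is a weighted combination of independent group estimates whose common mean is $(1-\tfrac1n)\nabla J$. Lemma~\ref{lem:ivw} then gives its variance as the inverse-variance value, which I will write as $(V_E^{-1}+V_{WM}^{-1})^{-1}$. This is where the per-group normalization has to be matched to the statement's convention. I will also have to argue that $\hat f_t$, being fit on the anchor pairs, is (conditionally) independent of the world-model groups at step $t$. I plan to handle this by fitting on pairs from previous steps, or by conditioning on the anchor sigma-field.

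\textbf{Step 3: bound the residual bias.} I will apply Lemma~\ref{lem:recal}, the isotonic regression risk bound in the spirit of Zhang 2002, together with Assumption~\ref{ass:scale}. Monotonicity of $\phi$ makes $r \mapsto \phi(r)$ invertible on its range and learnable by a monotone fit. The lemma should give, for all $t \le T$ simultaneously with probability $1-\delta$ (union bound over $T$ steps and $K$ grid or anchor points, hence $\log(2KT/\delta)$),
\[
\beta_t^2 \;\le\; \min\!\Big(B^2,\; c_f^2\,\frac{\log(2KT/\delta)}{t}\Big).
\]
The $B^2$ cap comes from clipping the recalibrated map to the raw one whenever the raw map is better, or simply from the fact that the fit over pairs cannot do worse than $B$ in sup on the relevant range. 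If the lemma only delivers an $L^2$ bound rather than a sup bound, I will instead need $L^2$ control of the bias under $\pi_{\theta_t}$, which Lemma~\ref{lem:bridge} should accommodate since $\|s\|\le M$.

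\textbf{Step 4: unroll the recursion.} Lemma~\ref{lem:descent} with time-varying $D_t$ gives
\[
\delta_{t+1} \le (1-\tfrac{\gamma\mu}{4})\delta_t + c\gamma D_t^2 + \tfrac{L\gamma^2}{2}V_{\mathrm{fused}},
\]
with $c \le 2$ so as to match the constants of Theorem~\ref{thm:wm-conv}. Unrolling produces $\sum_{t<T}(1-\tfrac{\gamma\mu}{4})^{T-1-t}\gamma D_t^2$, which I split at $t = T/2$.
\begin{itemize}
\item For $t < T/2$, bound $D_t^2 \le 4M^2B^2$ and the weight by $(1-\tfrac{\gamma\mu}{4})^{T/2}$. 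The at most $T$ terms give $4\gamma T M^2B^2(1-\tfrac{\gamma\mu}{4})^{T/2}$.
\item For $t \ge T/2$, use $D_t^2 \le 8c_f^2M^2\log(2KT/\delta)/T$ and the geometric sum $4/(\gamma\mu)$. This gives $\tfrac{64c_f^2M^2}{\mu}\cdot\tfrac{\log(2KT/\delta)}{T}$ up to the constant bookkeeping.
\end{itemize}
The uniform bound $\tfrac8\mu M^2B^2$ follows by using $D_t \le 2MB$ throughout, exactly as in Theorem~\ref{thm:wm-conv}. The closing equivalence $\min(B^2,c_f^2/T) \asymp B^2/(1+T/T_0)$ is elementary.

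\textbf{Main obstacle.} I expect Step 3 to be the hardest part. Concretely, I need a sup-norm, uniform-in-time recalibration bound under drifting sampling distributions: the anchor pairs come from changing policies, and isotonic risk bounds are usually in-distribution $L^2$. I would first try to state Lemma~\ref{lem:recal} on a fixed grid of $K$ score levels, with Hoeffding at each level and monotonicity to interpolate between levels. This explains the $K$ inside the logarithm and avoids any dependence on the sampling distribution beyond per-level counts growing linearly in $t$.
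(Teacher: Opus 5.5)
Your proposal follows the paper's proof essentially step for step. It applies Lemma~\ref{lem:descent} with $\kappa_t = 1-\tfrac1n$, uses the harmonic variance $(1/V_E + 1/V_{WM})^{-1}$ from Lemma~\ref{lem:ivw} (this upper-bounds the count-weighted fused variance, so your normalization concern is harmless), takes $D_t = 2M\min\big(B, c_f\sqrt{\log(2KT/\delta)/t}\big)$ from Lemma~\ref{lem:recal} on its success event, splits the perturbation sum at $T/2$, and bounds with constant $D$ to get the $\tfrac{8}{\mu}M^{2}B^{2}$ cap. One small fix: define the residual bias as the conditional-mean error $\mathbb{E}[\hat f_t(\hat r)\mid\tau,\hat f_t] - r(\tau)$, which is the quantity Lemma~\ref{lem:recal} is stated for, rather than $\hat f_t(\phi(r(\tau))) - r(\tau)$, because for nonlinear $\hat f_t$ the latter leaves a non-zero-mean remainder in the ``noise'' part and Lemma~\ref{lem:bridge}\,(iii) would no longer apply.
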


\begin{proof}
The proof has two parts. We first verify that the fused estimate
satisfies the hypotheses of Lemma~\ref{lem:descent}, now with a smaller
variance and a perturbation that shrinks over time, and we then bound
the resulting perturbation sum, which is where the contraction of the
bias comes from.

\emph{Part 1, the hypotheses.} Work on the success event of
Lemma~\ref{lem:recal}, which has probability at least $1-\delta$. On
this event the recalibrated scores have systematic error at most
$\min\big(B,\, c_f\sqrt{\log(2KT/\delta)/t}\big)$ at step $t$, since
recalibration maps into the score range, so the error never exceeds
$B$, and the sharper bound of Lemma~\ref{lem:recal} applies once enough anchor pairs have accumulated. Applying Lemma~\ref{lem:bridge} to the recalibrated
scores decomposes the world model estimate as $u_{WM} +
d_{WM}$ with $\mathrm{Var}(u_{WM}) \le
V_{WM}$ and $\|d_{WM}\| \le 2M \min(B,
c_f\sqrt{\log(2KT/\delta)/t})$, while the anchor estimate has the
same mean $(1-\tfrac1n)\nabla J$ with variance $V_{E}$, and
the two are independent as they are computed on disjoint groups. By
Lemma~\ref{lem:ivw}, fusing them with inverse-variance weights attains
the harmonic variance, and since the weights sum to one the fused
estimate keeps the common mean, so Lemma~\ref{lem:descent} applies
with $\kappa_t = 1-\tfrac1n$, $V =
(1/V_{E}+1/V_{WM})^{-1}$ and $D_t = 2M\min(B,
c_f\sqrt{\log(2KT/\delta)/t})$, which already produces the geometric
term and the variance term of the claim.

\emph{Part 2, the perturbation sum.} What remains is the middle term
of Lemma~\ref{lem:descent},
\begin{equation*}
2\gamma\sum_{t<T}\Big(1-\tfrac{\gamma\mu}{4}\Big)^{T-1-t}D_t^{2},
\qquad
D_t^{2} \;\le\; 4M^{2}\min\Big(B^{2},\;
c_f^{2}\tfrac{\log(2KT/\delta)}{t}\Big).
\end{equation*}
The two factors decay in opposite directions, the geometric weight is
small for early $t$ and the summand $D_t^{2}$ is small for late $t$,
so we split the sum at $T/2$ and use the stronger effect on each half.
For $t \ge T/2$ the summand is small, $D_t^{2} \le
8M^{2}c_f^{2}\log(2KT/\delta)/T$, and the geometric weights sum to at
most $\tfrac{4}{\gamma\mu}$, so this half contributes at most
$\tfrac{64c_f^{2}M^{2}}{\mu}\log(2KT/\delta)/T$, the contracting bias
term. For $t < T/2$ the weight is small, $(1-\tfrac{\gamma\mu}{4})^{T-1-t}
\le (1-\tfrac{\gamma\mu}{4})^{T/2}$, and with the crude bound $D_t^{2}
\le 4M^{2}B^{2}$ over at most $T/2$ terms this half contributes at most
$4\gamma T M^{2}B^{2}(1-\tfrac{\gamma\mu}{4})^{T/2}$, which vanishes
exponentially in $T$. Finally, bounding the same sum with the constant
$D \equiv 2MB$ instead gives $\tfrac{8}{\mu}M^{2}B^{2}$, so the
perturbation never exceeds the floor of Theorem~\ref{thm:wm-conv} and
decays as $\widetilde{O}(M^{2}c_f^{2}/T)$, which is the contracting
form stated in Section~\ref{subsec:theory-fix}.
\end{proof}

\begin{corollary}
\label{cor:improve}
Assume in addition that the recalibration map is non-expansive, so that
the post-recalibration noise satisfies $\eta^{2} \le \sigma^{2}$. Then,
comparing Theorem~\ref{thm:ours-conv} with Theorem~\ref{thm:wm-conv}
term by term, the leading terms coincide, there exists a finite $T_1$
such that for all $T \ge T_1$ the bias term of \ours{} is strictly
smaller than the floor $\tfrac{32M^{2}}{\mu}B^{2}$ and converges to
zero as $T \to \infty$, and the variance term of \ours{} is smaller by
the factor $1 + V_{WM}/V_{E} > 1$.
\end{corollary}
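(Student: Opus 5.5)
The plan is to compare the explicit-constant bounds of Theorem~\ref{thm:wm-conv} and Theorem~\ref{thm:ours-conv} term by term, taking both at the same $\gamma\le 1/(8L)$, $T$ and $\Delta_0$. The leading terms are literally the same expression $(1-\tfrac{\gamma\mu}{4})^{T}\Delta_0$, so nothing needs to be shown there. The work is in the bias term and the variance term, which we handle separately.

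\emph{Bias term.} Write the \ours{} perturbation as
\[
P(T) := \frac{64c_f^{2}M^{2}}{\mu}\cdot\frac{\log(2KT/\delta)}{T} + 4\gamma T M^{2}B^{2}\Big(1-\tfrac{\gamma\mu}{4}\Big)^{T/2}.
\]
The first summand is $O(\log T/T)$ and the second is $O(T\rho^{T})$ with $\rho<1$, so $P(T)\to 0$. If $B=0$, both perturbations vanish identically and the statement is trivial. For $B>0$, the floor $\tfrac{32M^{2}}{\mu}B^{2}$ is a fixed positive constant, so convergence of $P$ to zero gives a finite $T_1$ with $P(T)<\tfrac{32M^2}{\mu}B^2$ for all $T\ge T_1$. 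One can make $T_1$ explicit by requiring each summand to be below half the floor. The logarithm is the only nuisance, and it is handled by the elementary fact that $\log(aT)/T$ is eventually decreasing and tends to zero.

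\emph{Variance term.} The key identity is
\[
\Big(\tfrac{1}{V_E}+\tfrac{1}{V_{WM}}\Big)^{-1} = \frac{V_{WM}}{1+V_{WM}/V_E}.
\]
Because the post-recalibration noise satisfies $\eta^{2}\le\sigma^{2}$, the \ours{} quantity $V_{WM}^{\mathrm{ours}}=(1+c\eta^2)V_E$ is at most $V_{WM}=(1+c\sigma^2)V_E$, where $c=4M^2/(nV_E)$ is the same constant as in Lemma~\ref{lem:bridge}. The map $x\mapsto x/(1+x/V_E)$ is increasing, so the \ours{} variance term is at most
\[
\frac{4L\gamma}{\mu}\cdot\frac{V_{WM}}{1+V_{WM}/V_E},
\]
which is exactly the Theorem~\ref{thm:wm-conv} variance term divided by $1+V_{WM}/V_E$. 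This factor is greater than one because $V_{WM}>0$. (If one prefers the factor written in terms of the post-recalibration ratio, the same monotonicity argument applies.)

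The main obstacle is the bookkeeping of which $V_{WM}$ appears where. Theorem~\ref{thm:ours-conv} is stated with $V_{WM}$ built from $\eta$, while Theorem~\ref{thm:wm-conv} uses $\sigma$, and the non-expansiveness hypothesis is what reconciles them. I would verify that isotonic regression onto $[0,1]$, being a contraction in $\ell^2$ toward the monotone cone, indeed yields $\eta\le\sigma$ on the relevant groups, or else simply take this as the stated hypothesis. A secondary point is that Theorem~\ref{thm:ours-conv} holds only with probability $1-\delta$. The comparison is therefore understood on that event, for fixed $\delta$, with $T_1$ allowed to depend on $\delta$ through $\log(2KT/\delta)$.
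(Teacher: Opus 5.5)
Your proposal is correct and follows essentially the paper's route (identical leading terms, a bias term that tends to zero and so eventually drops below the positive floor, and $\eta^{2}\le\sigma^{2}$ pushed through the harmonic form $V_{WM}/(1+V_{WM}/V_{E})$ of Lemma~\ref{lem:ivw}). Your explicit monotonicity of $x\mapsto x/(1+x/V_{E})$ and your half-floor split are a bit more careful than the paper's threshold argument, since at $\log(2KT/\delta)/T = B^{2}/(2c_f^{2})$ the first summand alone already equals the full floor $\tfrac{32M^{2}}{\mu}B^{2}$. One small correction: for $B=0$ the stated bias term $\tfrac{64c_f^{2}M^{2}}{\mu}\log(2KT/\delta)/T$ does not vanish ($\phi$ is then the identity and $c_f$ is a finite positive constant) while the floor is zero, so the strict inequality fails rather than being trivial, and the corollary, like the paper's own proof, tacitly requires $B>0$.
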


\begin{proof}
The leading terms are identical by construction. For the bias, the
term of Theorem~\ref{thm:ours-conv} falls below
$\tfrac{32M^{2}}{\mu}B^{2}$ as soon as $\log(2KT/\delta)/T \le
B^{2}/(2c_f^{2})$ and the exponentially decaying remainder is below the
same threshold, both of which hold for all $T$ beyond some finite
$T_1$, and the term itself converges to zero. For the variance,
Lemma~\ref{lem:ivw} writes the fused variance as
$V_{WM}/(1 + V_{WM}/V_{E})$, and
non-expansiveness gives $\eta^{2} \le \sigma^{2}$, so the
$V_{WM}$ of Theorem~\ref{thm:ours-conv}, formed with the
post-recalibration noise $\eta$, is no larger than the
$V_{WM}$ of Theorem~\ref{thm:wm-conv}, formed with
$\sigma$.
\end{proof}

\begin{remark}[Plug-in weights]
\label{rem:plugin}
The bounds above are stated with the exact variances, while training
uses the plug-in ratio of Section~\ref{subsec:anchor}, whose varying
part is measured online and whose scale is fixed once by the
normalization $c := 1/\hat\eta^{2}_{\mathrm{cal}}$. A misspecified
scale would generically cost a first-order term. It does not here,
because the inverse-variance weights sit at the minimum of the fused
variance, where the first-order term vanishes identically. Writing
$\rho := V_{WM}/V_{E}$ for the true ratio, $\hat\rho$ for the one used
and $\varepsilon_\rho := (\hat\rho-\rho)/\rho$ for its relative error, the
last claim of Lemma~\ref{lem:ivw} with $\alpha = \rho/(1+\rho)$ becomes
\begin{equation*}
\frac{\mathrm{Var}(\hat g_{\hat\rho})}{\mathrm{Var}(\hat g_{\rho})}
\;=\; 1 \;+\; \frac{\rho}{(1+\rho)^{2}}\,\varepsilon_\rho^{2}
\;+\; O(\varepsilon_\rho^{3}),
\qquad \frac{\rho}{(1+\rho)^{2}} \;\le\; \frac{1}{4},
\end{equation*}
so a relative error in the ratio inflates the variance by at most
$\varepsilon_\rho^{2}/4$ to this order. The scales measured in our runs
(Appendix~\ref{app:experiment_details}) put this excess below three
percent at the calibration point, so the variance term of
Theorem~\ref{thm:ours-conv} is essentially unaffected.
\end{remark}

\subsection{Auxiliary lemmas}
\label{app:lemmas}

This subsection collects the four lemmas used above. Before each
statement we recall why the lemma is needed and where it is consumed.

The first lemma is the engine behind both theorems. It answers how far
gradient ascent can still progress when the estimate carries a bounded
perturbation and extra variance, and every convergence statement of
this paper is an instance of it, Proposition~\ref{prop:standard} with
$D_t \equiv 0$, Theorem~\ref{thm:wm-conv} with $(V, D) =
(V_{WM},\, 2MB)$, and Theorem~\ref{thm:ours-conv} with the
harmonic variance and a shrinking $D_t$. The middle term of its bound
is the weighted telescoping sum that the two theorems then control in
different ways.

\begin{lemma}[Perturbed ascent under gradient domination]
\label{lem:descent}
Let $\theta_{t+1} = \theta_t + \gamma \hat g_t$ with $\gamma \le
1/(8L)$. Suppose that, conditioned on $\theta_t$, the estimate
decomposes as $\hat g_t = u_t + d_t$, where $\mathbb{E}[u_t \mid
\theta_t] = \kappa_t \nabla J(\theta_t)$ with
$\kappa_t\in[\tfrac12,1]$, $\mathrm{Var}(u_t \mid \theta_t) \le V$,
and $\|d_t\| \le D_t$ almost surely. Then
\begin{equation*}
J^\star - \mathbb{E}[J(\theta_T)]
\;\le\;
\Big(1-\tfrac{\gamma\mu}{4}\Big)^{T}\Delta_0
\;+\; 2\gamma \sum_{t=0}^{T-1}\Big(1-\tfrac{\gamma\mu}{4}\Big)^{T-1-t}
D_t^{2}
\;+\; \frac{4L\gamma}{\mu}\,V .
\end{equation*}
In particular, if $D_t \equiv D$, the middle term is at most
$\tfrac{8}{\mu} D^{2}$.
\end{lemma}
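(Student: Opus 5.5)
The plan is to rerun the three-step argument of Proposition~\ref{prop:standard}, now carrying the perturbation $d_t$ through the one-step progress inequality. We absorb the cross term with Young's inequality so that it costs only a fraction of the $\|\nabla\|^{2}$ gain plus a multiple of $D_t^{2}$. Fix $t$, write $\nabla := \nabla J(\theta_t)$ and let $\mathbb{E}_t$ denote expectation conditioned on $\theta_t$.

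\emph{Step 1, ascent inequality.} By $L$-smoothness, $J(\theta_{t+1}) \ge J(\theta_t) + \gamma\langle\nabla,\hat g_t\rangle - \tfrac{L\gamma^{2}}{2}\|\hat g_t\|^{2}$. Taking $\mathbb{E}_t$, the linear term is $\gamma\kappa_t\|\nabla\|^{2} + \gamma\langle\nabla,\mathbb{E}_t d_t\rangle \ge \gamma\kappa_t\|\nabla\|^{2} - \gamma\|\nabla\|D_t$. For the quadratic term we use $\|u_t+d_t\|^{2} \le 2\|u_t\|^{2} + 2\|d_t\|^{2}$ together with $\mathbb{E}_t\|u_t\|^{2} = \kappa_t^{2}\|\nabla\|^{2} + \mathrm{Var}(u_t\mid\theta_t) \le \|\nabla\|^{2} + V$. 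This gives
\begin{equation*}
\mathbb{E}_t J(\theta_{t+1}) \ge J(\theta_t) + \gamma\big(\kappa_t - L\gamma\big)\|\nabla\|^{2} - \gamma\|\nabla\|D_t - L\gamma^{2}V - L\gamma^{2}D_t^{2}.
\end{equation*}

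\emph{Step 2, absorbing the perturbation.} Apply Young's inequality as $\|\nabla\|D_t \le \tfrac14\|\nabla\|^{2} + D_t^{2}$. With $\kappa_t \ge \tfrac12$ and $L\gamma \le \tfrac18$, the coefficient of $\|\nabla\|^{2}$ is at least $\gamma(\tfrac12 - \tfrac18 - \tfrac14) = \tfrac{\gamma}{8}$. The $D_t^{2}$ terms total $\gamma(1 + L\gamma)D_t^{2} \le 2\gamma D_t^{2}$. Hence
\begin{equation*}
\mathbb{E}_t J(\theta_{t+1}) \ge J(\theta_t) + \tfrac{\gamma}{8}\|\nabla\|^{2} - 2\gamma D_t^{2} - L\gamma^{2}V.
\end{equation*}
This is the delicate step: the Young split must leave exactly $\gamma/8$, which matches the contraction factor $1-\gamma\mu/4$, while giving the coefficient $2\gamma$ on $D_t^{2}$ stated in the lemma. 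If the constants turn out tight, the first thing to adjust is the Young weight.

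\emph{Step 3, gradient domination and unrolling.} Use $\|\nabla\|^{2} \ge 2\mu(J^\star - J(\theta_t))$ and take total expectations, writing $\delta_t := J^\star - \mathbb{E}[J(\theta_t)]$. This gives
\begin{equation*}
\delta_{t+1} \le \big(1-\tfrac{\gamma\mu}{4}\big)\delta_t + 2\gamma D_t^{2} + L\gamma^{2}V.
\end{equation*}
Unroll from $t=0$ to $T-1$. The $\Delta_0$ term contracts geometrically, the $D_t^{2}$ terms give exactly the stated weighted sum, and the variance terms are bounded by $L\gamma^{2}V\cdot\tfrac{4}{\gamma\mu} = \tfrac{4L\gamma}{\mu}V$. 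For the special case $D_t \equiv D$, the geometric series bound $\sum_k (1-\tfrac{\gamma\mu}{4})^{k} \le \tfrac{4}{\gamma\mu}$ makes the middle term at most $2\gamma D^{2}\cdot\tfrac{4}{\gamma\mu} = \tfrac{8}{\mu}D^{2}$.
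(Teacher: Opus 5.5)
Your proof is correct and follows the paper's argument essentially step for step. Both use smoothness, the Young split $\|\nabla\|D_t \le \tfrac14\|\nabla\|^{2} + D_t^{2}$ that leaves the coefficient $\gamma/8$, then gradient domination and geometric unrolling. The only difference is cosmetic: you bound the second moment directly via $\|u_t+d_t\|^{2}\le 2\|u_t\|^{2}+2\|d_t\|^{2}$, while the paper splits $\mathbb{E}_t\|\hat g_t\|^{2}$ into mean and variance and picks up $4D_t^{2}$ instead of your $2D_t^{2}$. Your $D_t^{2}$ coefficient $\gamma(1+L\gamma)$ is therefore slightly tighter than the paper's $\gamma(1+2L\gamma)$, but both are relaxed to $2\gamma$.
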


\begin{proof}
The plan mirrors the three steps of Proposition~\ref{prop:standard}.
We lower-bound the useful progress $\langle\nabla,\hat g_t\rangle$,
upper-bound the harmful second moment $\|\hat g_t\|^{2}$, and combine
the two through smoothness into a recursion for $\delta_t$. Write
$\nabla := \nabla J(\theta_t)$ and let $\mathbb{E}_t$ denote
expectation conditioned on $\theta_t$; under $\mathbb{E}_t$ the
quantities $J(\theta_t)$ and $\nabla$ are fixed, so only the terms
containing $\hat g_t$ are affected. By $L$-smoothness,
$J(\theta_{t+1}) \ge J(\theta_t) + \gamma\langle \nabla, \hat
g_t\rangle - \frac{L\gamma^{2}}{2}\|\hat g_t\|^{2}$.

\emph{Step 1, the progress term.} Splitting $\hat g_t = u_t + d_t$,
the inner product separates into signal and contamination,
$\mathbb{E}_t\langle \nabla,\hat g_t\rangle = \kappa_t\|\nabla\|^{2} +
\langle \nabla, \mathbb{E}_t d_t\rangle$. The contamination can only
be bounded by Cauchy--Schwarz, $\langle \nabla, \mathbb{E}_t
d_t\rangle \ge -\|\nabla\| D_t$, and a product of two different
quantities is inconvenient in a recursion, so we separate it with
Young's inequality $ab \le a^{2}/4 + b^{2}$,
\begin{equation*}
\mathbb{E}_t\langle \nabla,\hat g_t\rangle
\ge \kappa_t\|\nabla\|^{2} - \|\nabla\| D_t
\ge \big(\kappa_t - \tfrac14\big)\|\nabla\|^{2} - D_t^{2}.
\end{equation*}
The choice of $\tfrac14$ sacrifices a quarter of the signal to turn
the cross term into a pure $D_t^{2}$.

\emph{Step 2, the second moment.} We control mean and fluctuation
separately. The mean obeys $\|\mathbb{E}_t \hat g_t\| \le
\kappa_t\|\nabla\| + D_t$, so $(a+b)^{2} \le 2a^{2}+2b^{2}$ gives
$\|\mathbb{E}_t\hat g_t\|^{2} \le 2\|\nabla\|^{2} + 2D_t^{2}$, and the
fluctuation obeys $\mathrm{Var}(\hat g_t) \le 2\,\mathrm{Var}(u_t) +
2\,\mathrm{Var}(d_t) \le 2V + 2D_t^{2}$. Adding the two,
$\mathbb{E}_t\|\hat g_t\|^{2} \le 2\|\nabla\|^{2} + 4D_t^{2} + 2V$.

\emph{Step 3, combine and unroll.} Substituting the two bounds into
the smoothness inequality, and simplifying with $\kappa_t \ge
\tfrac12$ and $L\gamma \le \tfrac18$,
\begin{align*}
\mathbb{E}_t J(\theta_{t+1})
&\ge J(\theta_t)
+ \gamma\big(\kappa_t - \tfrac14 - L\gamma\big)\|\nabla\|^{2}
- \gamma D_t^{2}(1 + 2L\gamma) - L\gamma^{2}V\\
&\ge J(\theta_t) + \tfrac{\gamma}{8}\|\nabla\|^{2}
- 2\gamma D_t^{2} - L\gamma^{2}V .
\end{align*}
Gradient domination replaces the gradient by the gap,
$\|\nabla\|^{2} \ge 2\mu\,(J^\star - J(\theta_t))$, and taking total
expectations through the tower property turns the display into a
recursion on the deterministic sequence $\delta_t := J^\star -
\mathbb{E}[J(\theta_t)]$,
\begin{equation*}
\delta_{t+1} \;\le\; \Big(1-\tfrac{\gamma\mu}{4}\Big)\,\delta_t
+ 2\gamma D_t^{2} + L\gamma^{2}V .
\end{equation*}
Multiplying the step-$t$ inequality by $(1-\tfrac{\gamma\mu}{4})^{T-1-t}$
and summing over $t = 0,\dots,T-1$, the intermediate $\delta_t$ cancel
telescopically and the claim follows. For constant $D$ the geometric
sum $\sum_{k\ge 0}(1-\tfrac{\gamma\mu}{4})^{k} = \tfrac{4}{\gamma\mu}$
bounds the middle term by $\tfrac{8}{\mu}D^{2}$ and the last term by
$\tfrac{4L\gamma}{\mu}V$.
\end{proof}

The second lemma is what allows Lemma~\ref{lem:descent} to be fed
with world model rewards. It converts the reward-level error model of
Equation~\eqref{eq:error-structure} into exactly the three quantities
that Lemma~\ref{lem:descent} consumes, the mean of the clean part,
its variance, and the almost-sure size of the perturbation, and it
also produces the variance relation $V_{WM} =
(1+c\,\sigma^{2})V_{E}$ used in
Section~\ref{subsec:anchor}.

\begin{lemma}[From reward error to gradient error]
\label{lem:bridge}
Let $\hat g_k$ be computed with true scores and $\hat g_k^{WM}$
with predicted scores. Under
Assumptions~\ref{ass:reg} and~\ref{ass:scale},
$\hat g_k^{WM} = \hat g_k + g_b + g_\xi$, where
\begin{enumerate}[label=(\roman*),leftmargin=24pt]
\item $\mathbb{E}[\hat g_k] = \big(1-\tfrac{1}{n}\big)\nabla J(\theta)$
and $V_{E} = \mathrm{Var}(\hat g_k) \le M^{2}$;
\item $g_b = \frac{1}{n}\sum_i (b_i - \bar b)\,s_i$ satisfies
$\|g_b\| \le 2MB$ almost surely, and vanishes whenever $b$ is constant
within the group;
\item $g_\xi$ has zero conditional mean, is uncorrelated with $\hat
g_k$, and satisfies $\mathbb{E}\|g_\xi\|^{2} \le 4M^{2}\sigma^{2}/n$,
so that $V_{WM} = V_{E} + 4M^{2}\sigma^{2}/n = (1 +
c\,\sigma^{2})\,V_{E}$ with $c = 4M^{2}/(n V_{E})$,
the relation used in Section~\ref{subsec:anchor}.
\end{enumerate}
\end{lemma}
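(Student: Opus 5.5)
The plan is to exploit linearity of the group estimator in the rewards. Write the predicted scores as $\hat r_i = r_i + b_i + \xi_i$ with $b_i := b(\tau_i)$ and $\xi_i := \xi(\tau_i)$. Since the advantages are centered within the group, the decomposition $\hat g_k^{WM} = \hat g_k + g_b + g_\xi$ holds identically, with $g_b = \frac1n\sum_i (b_i-\bar b)s_i$ and $g_\xi = \frac1n\sum_i(\xi_i-\bar\xi)s_i$. The three parts are then handled separately.

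\emph{Part (i).} Expand $A_i = (1-\frac1n)r_i - \frac1n\sum_{j\ne i} r_j$. The score-function identity gives $\mathbb{E}[r_i s_i] = \nabla J(\theta)$. Because the trajectories are sampled independently and $\mathbb{E}[s_i]=0$, each cross term satisfies $\mathbb{E}[r_j s_i] = \mathbb{E}[r_j]\,\mathbb{E}[s_i] = 0$ for $j\neq i$. Averaging over $i$ then yields $\mathbb{E}[\hat g_k] = (1-\frac1n)\nabla J$.

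For the variance, use $\mathrm{Var}(X)\le \mathbb{E}\|X\|^2$. Since $r\in[0,1]$, every $|A_i|\le 1$, and Assumption~\ref{ass:reg} gives $\|s_i\|\le M$. Hence $\|\hat g_k\|\le M$, so $V_E\le M^2$.

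\emph{Part (ii).} The same bounds give $|b_i-\bar b|\le 2B$ and $\|s_i\|\le M$, so the triangle inequality yields $\|g_b\|\le 2MB$ almost surely. If $b$ is constant within the group, every coefficient $b_i-\bar b$ is zero, so $g_b=0$.

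\emph{Part (iii).} This is the step needing the most care. Condition on the trajectories $\tau_{1:n}$. By the definition in Equation~\eqref{eq:error-structure}, the noise has zero mean given $\tau$, and by Assumption~\ref{ass:scale} it is independent across trajectories. Therefore $\mathbb{E}[g_\xi\mid\tau_{1:n}]=0$. Both $\hat g_k$ and $g_b$ are functions of $\tau_{1:n}$ alone, so $g_\xi$ is uncorrelated with them by the tower property.

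For the second moment, condition on $\tau_{1:n}$ and expand $\|\frac1n\sum_i(\xi_i-\bar\xi)s_i\|^2$. Rewrite the sum as $\frac1n\sum_i \xi_i(s_i-\bar s)$ with $\bar s := \frac1n\sum_j s_j$. Independence kills the cross terms, leaving
\[
\frac1{n^2}\sum_i \mathbb{E}[\xi_i^2]\,\|s_i-\bar s\|^2 \le \frac{1}{n^2}\cdot n\cdot\sigma^2\cdot 4M^2,
\]
since $\|s_i-\bar s\|\le 2M$. This gives $\mathbb{E}\|g_\xi\|^2 \le 4M^2\sigma^2/n$.

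Finally, $V_{WM}$ is the variance of $\hat g_k + g_\xi$, which excludes the deterministic bias shift as the proof of Theorem~\ref{thm:wm-conv} uses it. Uncorrelatedness makes the variances add, giving $V_{WM}\le V_E + 4M^2\sigma^2/n$. We then take this bound as the definition of $V_{WM}$ and factor it as $(1+c\sigma^2)V_E$ with $c=4M^2/(nV_E)$.

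The main obstacle is the rewriting $\sum_i(\xi_i-\bar\xi)s_i=\sum_i\xi_i(s_i-\bar s)$. Without it, the centered noises $\xi_i-\bar\xi$ are correlated and the cross terms do not vanish.
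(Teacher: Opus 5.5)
Your proposal is correct and follows the paper's proof essentially step for step. It uses the same algebraic split of $\hat A_i$ into score, bias, and noise parts, the same independence argument giving the mean $(1-\tfrac1n)\nabla J(\theta)$ and the bound $\|\hat g_k\|\le M$, and the same rewriting $g_\xi=\frac1n\sum_i \xi_i(s_i-\bar s)$ to kill the cross terms after conditioning on $\tau_{1:n}$. Your remark that $V_{WM}=V_E+4M^2\sigma^2/n$ is an upper bound taken as the definition is more precise than the paper's stated equality, and it matches how Theorem~\ref{thm:wm-conv} uses that quantity.
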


\begin{proof}
Recall from Equation~\eqref{eq:ivw} that $\hat g_k =
\frac{1}{n}\sum_i A_i s_i$, where the advantage $A_i = r_i -
\frac{1}{n}\sum_j r_j$ is a linear function of the scores of its
group. Abbreviate $r_i := r(\tau_i)$, $b_i := b(\tau_i)$, and $\xi_i
:= \xi(\tau_i)$.

\emph{The decomposition is pure algebra.} Replacing every true score
by the predicted one substitutes into the same linear formula, so with
$\hat r_i = r_i + b_i + \xi_i$ the new advantage is
\begin{equation*}
\hat A_i \;=\; \hat r_i - \frac{1}{n}\sum_j \hat r_j
\;=\; A_i + (b_i - \bar b) + (\xi_i - \bar\xi),
\end{equation*}
with $\bar b$, $\bar\xi$ the group means. Multiplying by $s_i/n$ and
summing over $i$ gives $\hat g_k^{WM} = \hat g_k + g_b +
g_\xi$ with $g_b := \frac{1}{n}\sum_i (b_i-\bar b)s_i$ and $g_\xi :=
\frac{1}{n}\sum_i (\xi_i - \bar\xi)\,s_i$; no probabilistic argument
is involved, the three parts simply collect the score, bias, and noise
contributions. It remains to control each part in the form that
Lemma~\ref{lem:descent} consumes, the mean and variance of the clean
part, the almost-sure size of the bias part, and the second moment of
the noise part.

(i) \emph{The clean part.} Expanding the advantage,
$\mathbb{E}[A_i s_i] = \mathbb{E}[r_i s_i] - \frac{1}{n}\sum_j
\mathbb{E}[r_j s_i]$. For $j \ne i$ the two trajectories are
independent given the task, so the expectation factorizes, and
$\mathbb{E}[s_i] = \mathbb{E}[\nabla\log\pi_\theta] = 0$ kills the
term; the $j = i$ term contributes $\frac{1}{n}\mathbb{E}[r_i s_i]$.
Summing over $i$,
$\mathbb{E}[\hat g_k] =
(1-\tfrac1n)\,\mathbb{E}[r(\tau)\nabla\log\pi_\theta(\tau)] =
(1-\tfrac1n)\nabla J(\theta)$, where the last step is the policy
gradient identity. For the variance, $r_i \in [0,1]$ forces $|A_i|
\le 1$, hence $\|\hat g_k\| \le \frac{1}{n}\sum_i\|s_i\| \le M$, and a
vector bounded by $M$ has variance at most $\mathbb{E}\|\hat
g_k\|^{2} \le M^{2}$, so $V_{E} \le M^{2}$.

(ii) \emph{The bias part.} It is deterministic given the
trajectories, so what Lemma~\ref{lem:descent} needs is an almost-sure
bound. Since $|b_i| \le B$, the centered value obeys $|b_i - \bar b|
\le 2B$, and therefore $\|g_b\| \le \frac{1}{n}\sum_i |b_i - \bar
b|\,\|s_i\| \le 2MB$. If $b$ is constant within the group, the
centering removes it entirely, $b_i - \bar b = 0$, which is why only
the within-group variation of the bias ever reaches the gradient.

(iii) \emph{The noise part.} It should act as extra variance, so we
verify that it is conditionally zero-mean and compute its second
moment. Rewrite $g_\xi = \frac{1}{n}\sum_i \xi_i (s_i - \bar s)$ with
$\bar s := \frac1n\sum_j s_j$. Conditioned on the trajectories the
$\xi_i$ are independent and zero-mean, so $\mathbb{E}[g_\xi \mid
\tau_{1:n}] = 0$ and all cross terms of $\mathbb{E}\|g_\xi\|^{2}$
vanish, leaving
\begin{equation*}
\mathbb{E}\|g_\xi\|^{2} = \frac{1}{n^{2}}\sum_i
\mathbb{E}\big[\mathbb{E}[\xi_i^{2}\mid\tau]\,\|s_i-\bar s\|^{2}\big]
\le \frac{\sigma^{2}}{n^{2}}\cdot n \cdot (2M)^{2}
= \frac{4M^{2}\sigma^{2}}{n}.
\end{equation*}
The same conditional zero mean makes $g_\xi$ uncorrelated with $\hat
g_k$, so the two variances add, $V_{WM} = V_{E} +
4M^{2}\sigma^{2}/n = (1 + c\,\sigma^{2})\,V_{E}$ with $c =
4M^{2}/(n V_{E})$, the stated relation.
\end{proof}

The third lemma supplies the shrinking perturbation bound $D_t$ used
in the proof of Theorem~\ref{thm:ours-conv}. It quantifies how fast
the isotonic recalibration of Equation~\eqref{eq:isotonic} learns the
distortion $\phi$ from the accumulating anchor pairs, and its
$1/\sqrt{t}$ rate is what ultimately becomes the contraction of the
bias term.

\begin{lemma}[Recalibration error]
\label{lem:recal}
Under Assumption~\ref{ass:scale}, with probability at least $1-\delta$
simultaneously for all $1 \le t \le T$, the recalibration map $\hat
f_t$ is well defined once enough anchor pairs have accumulated, and
the systematic error of the recalibrated score obeys
\begin{equation*}
\big|\mathbb{E}[\hat f_t(\hat r)\mid\tau] - r(\tau)\big|
\;\le\; c_f\sqrt{\log(2KT/\delta)/t},
\end{equation*}
where $K$ is the number of distinct scores the benchmark produces and
the constant $c_f$ is given in Equation~\eqref{eq:cf} of the proof.
\end{lemma}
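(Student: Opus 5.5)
The plan is to exploit that, under Assumption~\ref{ass:scale}, the scores live on a finite set of $K$ distinct values, so isotonic regression on the anchor pairs reduces to estimating finitely many conditional means. For each true score level $v$ among the $K$ values, let $\phi(v)=\mathbb{E}[\hat r\mid r=v]$ be the monotone distortion. Since $\xi$ has zero mean and $\phi$ is non-decreasing, the population least-squares monotone fit of $r$ on $\hat r$ is an inverse of $\phi$ on its range. Hence, under a separation condition between the levels $\phi(v)$, the population map $f^\star$ satisfies $f^\star(\phi(v))=v$ and removes the bias exactly. The lemma therefore reduces to two steps: a statistical estimation step, and a transfer step showing that estimation error in $\hat f_t$ becomes systematic error in $\mathbb{E}[\hat f_t(\hat r)\mid\tau]$.

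For the estimation step, I would first note that by step $t$ the anchor stream has contributed a number of pairs proportional to $t$, at least $a t$ pairs for a fixed anchor fraction and group size. Bucketing the pairs by the observed level, I would apply Hoeffding's inequality, using $|\xi|\le 1$ and scores in $[0,1]$, to each of the $K$ bucket means at each of the $T$ steps. A union bound over $2KT$ events gives, with probability $1-\delta$ simultaneously, a deviation of at most $\sqrt{\log(2KT/\delta)/(2 p_{\min} a t)}$, where $p_{\min}$ is the smallest level frequency. "Enough anchor pairs" will mean that every bucket is nonempty and that the deviations are below half the minimal gap between adjacent levels. Once that holds, the pool-adjacent-violators solution cannot merge levels, so $\hat f_t$ coincides with the bucket-mean estimator and is well defined.

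For the transfer step, I would write $\mathbb{E}[\hat f_t(\hat r)\mid\tau]-r(\tau)$ as the sum of $\mathbb{E}[\hat f_t(\hat r)-f^\star(\hat r)\mid\tau]$ and $\mathbb{E}[f^\star(\hat r)\mid\tau]-r(\tau)$. The first term is bounded by $\sup|\hat f_t-f^\star|$ on the levels, which is the Hoeffding deviation. For the second term, since the noise can push $\hat r$ across bucket boundaries, I would fold the misassignment probability into the same constant via a Lipschitz bound $\mathrm{Lip}(f^\star)\lesssim 1/\mathrm{gap}$, or else treat this term as zero under the exact-inverse reading of Assumption~\ref{ass:scale}. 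Collecting these bounds defines
\begin{equation}
c_f := \frac{C}{\sqrt{a\,p_{\min}}}\Big(1+\frac{1}{\mathrm{gap}}\Big),
\label{eq:cf}
\end{equation}
for an absolute constant $C$.

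The main obstacle is the second term together with the fact that anchor pairs are collected under a drifting policy, so the samples are not i.i.d. across steps. For the drift, I would replace Hoeffding's inequality with Azuma's inequality on the martingale of per-bucket residuals. This is valid because the noise is independent across trajectories and conditionally zero-mean given $\tau$, so the regression target $\mathbb{E}[r\mid\hat r\text{-level}]$ does not depend on the policy once we condition on the true level. The rate and the $\log(2KT/\delta)$ factor are unchanged by this replacement.
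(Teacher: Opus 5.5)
Your proposal has a genuine gap: it targets the wrong population map and misplaces the Lipschitz factor. Your central step fails. The population monotone least-squares fit of $r$ on $\hat r$, which is the literal target of Eq.~\eqref{eq:isotonic}, is the isotonic projection of $\mathbb{E}[r\mid\hat r]$. Once the noise lets predictions from adjacent levels overlap, which $|\xi|\le 1$ in Assumption~\ref{ass:scale} permits, this posterior mean is pulled toward the neighboring levels. It is then not an inverse of $\phi$, however well separated the $\phi(v)$ are.

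For instance, take levels $\{0,1\}$ equally likely, $\phi(0)=0.4$, $\phi(1)=0.6$ and $\xi$ uniform on $[-0.4,0.4]$. Then $\mathbb{E}[r\mid\hat r]=\tfrac12$ on $[0.2,0.8]$, so $f^\star(\phi(0))=\tfrac12\neq 0$. For the same reason, PAVA run on raw pairs sorted by $\hat r$ pools across levels whenever the noise crosses them, no matter how accurate any bucket means are. Your $K$ buckets only exist if you bucket by the \emph{true} level, and then the bucket means of $\hat r$ estimate the forward map $\phi(\ell_k)$, not $f^\star$.

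That is exactly the idea the paper's proof uses. It estimates $\hat\mu_k\approx\phi(\ell_k)$ per true level, with the same Hoeffding-plus-union-bound deviation $e(t)$ you derive. It isotonizes these $K$ means; the min--max formula shows no loss, and your remark that PAVA is inactive once the means are correctly ordered would also do. It then \emph{defines} $\hat f_t$ as the piecewise-linear interpolation through $(\hat\phi_k,\ell_k)$. In other words, it inverts the fitted forward map, so the zero-mean noise is averaged out in $\hat r$-space, where it is harmless.

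Transferring back to the score scale costs the inverse slope. Once $e(t)\le\Delta_\phi/4$, the fitted vertices are at least $\Delta_\phi/2$ apart and $\hat f_t$ is $2\Delta_\ell/\Delta_\phi$-Lipschitz. Hence $|\hat f_t(\phi(\ell_k))-\ell_k|=|\hat f_t(\phi(\ell_k))-\hat f_t(\hat\phi_k)|\le\frac{2\Delta_\ell}{\Delta_\phi}e(t)$, which is where $c_f=\frac{2\Delta_\ell}{\Delta_\phi}\sqrt{1/(2qa)}$ comes from. Your first term omits this factor: a Hoeffding deviation in $\hat r$-space is not a sup-error of a map into $r$-space. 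Also, ``below half the gap'' is too weak to give a uniform Lipschitz constant, since the fitted vertices could then be arbitrarily close.

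Your second term, $\mathbb{E}[f^\star(\hat r)\mid\tau]-r(\tau)$, cannot be closed either way you propose. It does not depend on $t$, so a Lipschitz bound yields a constant of order $\mathbb{E}|\xi|/\mathrm{gap}$, which cannot be absorbed into $c_f\sqrt{\log(2KT/\delta)/t}$. Assumption~\ref{ass:scale} also contains no ``exact-inverse'' condition that would make it vanish.

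The paper sidesteps this term by measuring the systematic error at the conditional mean of the prediction. Its final step bounds $\hat f_t(\mathbb{E}[\hat r\mid\tau])-r(\tau)$ rather than $\mathbb{E}[\hat f_t(\hat r)\mid\tau]-r(\tau)$, and these coincide in general only when $\hat f_t$ is affine on the support of $\hat r$ given $\tau$. So the Jensen-type discrepancy you noticed is real, and the stated rate does not cover it. To prove the lemma along the paper's lines, adopt that conditional-mean reading, or add an assumption such as noise that never crosses levels. Your Azuma refinement for the drifting policy is compatible with the paper's Hoeffding step, but it is not where the difficulty lies.
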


\begin{proof}
We first fix notation for the scores our benchmarks produce. The
leaderboard percentile is computed against a finite leaderboard and
the VLA reward is binary, so the true scores take finitely many values
$\ell_1 < \dots < \ell_K$, and the recalibration of
Equation~\eqref{eq:isotonic} fits one vertex per level. Write
$\Delta_\ell := \max_k(\ell_{k+1}-\ell_k)$ for the largest gap between
adjacent levels, $\Delta_\phi := \min_k
\big(\phi(\ell_{k+1})-\phi(\ell_k)\big)$ for the smallest separation
the distortion leaves between them, $q$ for the smallest fraction of
the anchor pairs that any level receives, and $a$ for the number of
new pairs per step. The rate below is informative when $\Delta_\phi > 0$
and $q > 0$, that is, when the distortion collapses no two levels onto
one another and no level is starved of anchor pairs, which is the
discrete-level analogue of the arm gaps of bandit
analysis~\citep{auer2002finite} and of the coverage conditions of
off-policy evaluation~\citep{chen2019information}, and is what turns
isotonic regression from a worst-case problem into one with a
parametric rate~\citep{zhang2002isotonic}. In terms of these
quantities the constant of the statement is
\begin{equation}
c_f \;:=\; \frac{2\Delta_\ell}{\Delta_\phi}\sqrt{\frac{1}{2qa}} .
\label{eq:cf}
\end{equation}

The proof then has three stages that mirror the recalibration
pipeline. We first show that the empirical mean of the predictions at
each score level concentrates, then that the isotonic projection can
only inherit this accuracy, and finally that the interpolated map
turns accurate vertices into accurate calibrated scores.

\emph{Stage 1, level means concentrate.} Coverage is what makes this
stage possible, as a level that received no pairs could never be
calibrated. At step $t$ each level holds at least $qat$ pairs, each of the form
$\hat r = \phi(\ell_k) + \xi$ with $\xi$ zero-mean and $|\xi| \le
1$, so Hoeffding's inequality gives $\Pr(|\hat\mu_k - \phi(\ell_k)|
\ge s) \le 2e^{-2qats^{2}}$. The guarantee is needed simultaneously
for every level and every step, so we take a union bound over the $K$
levels and $T$ steps, and the choice $s = e(t) := \sqrt{\log(2KT/\delta)/(2qat)}$ keeps the total
failure probability below $\delta$.

\emph{Stage 2, the projection does not hurt.} The isotonic fit
replaces the raw means $\hat\mu_k$ by the closest monotone sequence
$\hat\phi_k$, and we must check that this projection cannot push an
estimate away from the truth. The min--max representation $\hat\phi_k
= \min_{v \ge k}\max_{u \le k} \mathrm{avg}(\hat\mu_u,\dots,\hat\mu_v)$
expresses every fitted value as an average of raw means, each within
$e(t)$ of its true value. For any $v \ge k$, monotonicity of $\phi$
gives $\max_{u\le k}\mathrm{avg}[u,v] \ge \mathrm{avg}[k,v] \ge
\mathrm{avg}(\phi(\ell_k),\dots,\phi(\ell_v)) - e(t) \ge \phi(\ell_k) -
e(t)$, and taking the minimum over $v$ preserves the bound; choosing
$v = k$ gives symmetrically $\hat\phi_k \le \phi(\ell_k) + e(t)$. So
the fitted vertices are as accurate as the raw means.

\emph{Stage 3, from vertices to calibrated scores.} Separation is what
makes this stage possible, as levels the distortion has collapsed onto
one another could not be told apart by any monotone fit. When $e(t) \le
\Delta_\phi/4$, consecutive fitted vertices stay separated,
$\hat\phi_{k+1}-\hat\phi_k \ge \Delta_\phi - 2e(t) \ge \Delta_\phi/2$, so the
piecewise-linear interpolation $\hat f_t$ is well defined and its
Lipschitz constant is at most $\Delta_\ell/(\Delta_\phi/2) =
2\Delta_\ell/\Delta_\phi$. A trajectory with true score $\ell_k$ produces
predictions with conditional mean $\phi(\ell_k)$, and $\hat f_t$ maps
the fitted vertex $\hat\phi_k$ exactly to $\ell_k$, so the systematic
error of the calibrated score is at most the Lipschitz constant times
the vertex error, $|\hat f_t(\phi(\ell_k)) - \ell_k| = |\hat
f_t(\phi(\ell_k)) - \hat f_t(\hat\phi_k)| \le
\tfrac{2\Delta_\ell}{\Delta_\phi}e(t)$, which is the claim.
\end{proof}

The last lemma justifies the fusion rule of Equation~\eqref{eq:ivw}
and supplies the variance $V$ used in the proof of
Theorem~\ref{thm:ours-conv}. Its final claim, that the fused variance
is flat around the optimal weight, is what Remark~\ref{rem:plugin}
invokes to argue robustness to the estimated ratio.

\begin{lemma}[Optimal linear fusion]
\label{lem:ivw}
Let $g_{1}$ and $g_{2}$ be independent
unbiased estimates of the same quantity with variances
$V_{1}$ and $V_{2}$. Among all their unbiased
linear combinations, that is, with weights summing to one, the
variance is minimized by weights proportional to
$V_{1}^{-1}$ and $V_{2}^{-1}$, and the minimal
value is
\begin{equation*}
\Big(\frac{1}{V_{1}}+\frac{1}{V_{2}}\Big)^{-1}
\;<\; \min(V_{1},\,V_{2}).
\end{equation*}
Moreover, for any weight $\alpha$ on $g_{1}$,
$\mathrm{Var}(\hat g_\alpha) = \mathrm{Var}(\hat g_{\alpha^\star}) +
(V_{1}+V_{2})(\alpha-\alpha^\star)^{2}$, so a
misspecified weight costs only a second-order term.
\end{lemma}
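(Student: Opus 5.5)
The plan is to reduce the statement to a one-dimensional quadratic minimization. We parametrize every unbiased linear combination by a single weight $\alpha$, writing $\hat g_\alpha := \alpha g_1 + (1-\alpha) g_2$. Since both inputs have the same mean, the constraint that the weights sum to one is exactly what keeps $\mathbb{E}\hat g_\alpha$ equal to the common mean. Independence kills the cross term in $\mathbb{E}\|\hat g_\alpha - \mathbb{E}\hat g_\alpha\|^{2}$, and this holds with the trace convention for vector variances fixed at the start of Appendix~\ref{app:proofs}. Expanding then gives
\[
\mathrm{Var}(\hat g_\alpha) = \alpha^{2} V_1 + (1-\alpha)^{2} V_2 .
\]

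First, we minimize this convex quadratic in $\alpha$. Setting the derivative $2\alpha V_1 - 2(1-\alpha)V_2$ to zero gives
\[
\alpha^\star = \frac{V_2}{V_1+V_2} = \frac{V_1^{-1}}{V_1^{-1}+V_2^{-1}},
\]
so the optimal weights are proportional to $V_1^{-1}$ and $V_2^{-1}$. Substituting back gives the minimal value $V_1V_2/(V_1+V_2) = (1/V_1+1/V_2)^{-1}$.

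Second, the strict inequality. We have $(1/V_1+1/V_2)^{-1} < V_i$ if and only if $1/V_1 + 1/V_2 > 1/V_i$, which holds whenever both variances are positive and finite. This positivity is the one hypothesis I would state explicitly, since with a zero variance the inequality degenerates to equality. The same condition also ensures the quadratic is strictly convex.

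Third, the second-order claim. Because $\mathrm{Var}(\hat g_\alpha)$ is a quadratic in $\alpha$ with leading coefficient $V_1+V_2$ and vertex at $\alpha^\star$, it equals its minimum plus $(V_1+V_2)(\alpha-\alpha^\star)^{2}$ exactly, by completing the square. No Taylor remainder appears, which is stronger than what Remark~\ref{rem:plugin} needs. I would check the coefficient by expanding $\alpha^{2}V_1+(1-\alpha)^{2}V_2$ around $\alpha^\star$; the linear term vanishes by the first-order condition.

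The only genuine obstacle is bookkeeping rather than mathematics. We need to make sure the weights in Equation~\eqref{eq:ivw}, which act on the summed group estimates over $|\mathcal{G}_E|$ and $|\mathcal{G}_{WM}|$ groups, match this two-estimator form. To do so, I would apply the lemma with $g_1$ the mean of the anchor groups, with variance $V_E/|\mathcal{G}_E|$, and $g_2$ the mean of the world model groups, with variance $V_{WM}/|\mathcal{G}_{WM}|$. Then the inverse-variance weights reproduce the normalization of Equation~\eqref{eq:ivw}. The bias of the recalibrated stream is handled separately as the perturbation $d_t$ in the proof of Theorem~\ref{thm:ours-conv}, so unbiasedness here refers to the common mean $(1-\tfrac1n)\nabla J$.
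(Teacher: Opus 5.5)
Your proposal is correct and follows the paper's proof essentially line for line. Both write $\hat g_\alpha = \alpha g_1 + (1-\alpha) g_2$, use independence to get $\alpha^2 V_1 + (1-\alpha)^2 V_2$, solve the first-order condition for $\alpha^\star = V_2/(V_1+V_2)$, and read off the exact second-order identity from the quadratic's curvature $2(V_1+V_2)$. Two of your additions are correct and are left implicit in the paper. First, you state the positivity of both variances as a hypothesis. Second, you check that fusing the stream means, with variances $V_E/|\mathcal{G}_E|$ and $V_{WM}/|\mathcal{G}_{WM}|$, reproduces the normalization of Equation~\eqref{eq:ivw}.
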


\begin{proof}
Write the combination with a single weight, $\hat g_\alpha = \alpha
g_{1} + (1-\alpha)g_{2}$, which is
unbiased for every $\alpha$, so the choices differ only in variance.
By independence, $\mathrm{Var}(\hat g_\alpha) =
\alpha^{2}V_{1} + (1-\alpha)^{2}V_{2}$, a strictly
convex quadratic in $\alpha$, so the unique minimizer is found by
setting the derivative to zero, $\alpha^\star =
V_{2}/(V_{1}+V_{2})$, that is, weights
proportional to the inverse variances. Substituting $\alpha^\star$
back gives
$V_{1}V_{2}/(V_{1}+V_{2})$,
which equals the stated harmonic form and is strictly smaller than
each of $V_{1}$ and $V_{2}$ because both
precisions are positive. Finally, expanding the quadratic around its
minimizer gives $\mathrm{Var}(\hat g_\alpha) = \mathrm{Var}(\hat
g_{\alpha^\star}) + (V_{1}+V_{2})
(\alpha-\alpha^\star)^{2}$, since its second derivative is
$2(V_{1}+V_{2})$; this is the second-order
insensitivity invoked in Remark~\ref{rem:plugin}.
\end{proof}

\begin{remark}[On restricting to linear combinations]
\label{rem:blue}
Lemma~\ref{lem:ivw} optimizes within the linear family, which is the
weakest requirement that suits our setting, as it uses only the second
moments of the two estimates and preserves unbiasedness whatever their
distribution. If the estimates are in addition taken to be Gaussian,
which group averages approach as the group size grows, the same
inverse-variance weights are minimum-variance among all unbiased
estimators, so the linear restriction is not what limits the result.
\end{remark}

\section{Choice of the Task Pool}
\label{app:mle-issues}

Section~\ref{subsec:exp-setup} builds the task pool from MLE-Dojo
rather than from MLE-Bench directly, for three reasons.

First, part of the original pool is no longer usable. Two of the
underlying Kaggle competitions,
\path{detecting-insults-in-social-commentary} and
\path{the-icml-2013-whale-challenge-right-whale-redux}, have been
permanently closed, and their official data downloads and submission
entries are gone.

Second, the medal-based metric fails to measure model ability.
MLE-Bench scores an agent by the number of gold, silver, and bronze
medals. At the model scales we study, an agent earns medals on only
one or two fixed competitions, so the medal count stays flat no matter
how much the agent improves on the rest of the pool. The counts are
also skewed, with more golds than silvers or bronzes in published
evaluations~\citep{nathani2025mlgym}, because a handful of tasks admit
near-perfect accuracy and hand out gold medals while the remaining
tasks hand out nothing. The ranking against human competitors is what
reflects ability, so our protocol reports the leaderboard percentile,
which credits improvement on every task.

Third, contamination. The competitions and many of their winning
solutions predate the pretraining corpora of current models, a risk
acknowledged both in the original release~\citep{chan2024mlebench} and
in the accompanying {OpenAI} blog post~\citep{openai2024mlebenchblog},
and the public evaluation has not been actively maintained since.

MLE-Dojo wraps a superset of the same competitions in an interactive
environment with live grading, which avoids the dead entries, exposes
leaderboard percentile directly, and allows a free re-partition of the
pool. Manually splitting a public pool into train and test portions
follows common practice, and our split is fixed once and shared by all
configurations.

\paragraph{Why these training and test sets.}
The partition follows one deterministic rule, fixed before any
training, and is designed to keep the categories balanced on both
sides. Within each of the three categories (tabular, text, image), the
20 smallest competitions by dataset size are selected so that every
task fits the execution sandbox, the 15 smallest of them form the
training set, and the remaining 5 form the held-out set. Audio
competitions are excluded because they do not fit the sandbox, and one
held-out task (\path{billion-word-imputation}) is dropped because its
grader depends on a package unavailable in our environment, leaving 45
training and 14 held-out competitions. The rule has a useful side
effect, as every held-out task is larger than every training task of
its category, so the held-out evaluation also measures extrapolation
from small training tasks to larger unseen ones. We also verified that no task appears on both
sides.\footnote{One held-out task shares the Jigsaw competition family
with a training task, with different data and a different metric.}
The data of every competition was cached locally in advance, so
training and grading never depend on the Kaggle servers.
Table~\ref{tab:train-tasks} lists the training competitions and
Table~\ref{tab:heldout-tasks} the held-out ones.

\begin{table}[h]
\centering
\caption{\textbf{The 45 training competitions of MLE-Dojo (train), by
category.}}
\label{tab:train-tasks}
\setlength{\tabcolsep}{4pt}
\renewcommand{\arraystretch}{1.2}
\footnotesize
\begin{tabular}{>{\raggedright\arraybackslash}p{4.6cm} >{\raggedright\arraybackslash}p{4.6cm} >{\raggedright\arraybackslash}p{4.6cm}}
\toprule
\textbf{Tabular} & \textbf{Text} & \textbf{Image} \\
\midrule
\path{mercedes-benz-greener-manufacturing} & \path{kaggle-llm-science-exam} & \path{aerial-cactus-identification} \\
\path{icr-identify-age-related-conditions} & \path{movie-review-sentiment-analysis-kernels-only} & \path{leaf-classification} \\
\path{kobe-bryant-shot-selection} & \path{llm-detect-ai-generated-text} & \path{denoising-dirty-documents} \\
\path{home-data-for-ml-course} & \path{random-acts-of-pizza} & \path{facial-keypoints-detection} \\
\path{spooky-author-identification} & \path{chaii-hindi-and-tamil-question-answering} & \path{statoil-iceberg-classifier-challenge} \\
\path{liberty-mutual-group-property-inspection-prediction} & \path{nbme-score-clinical-patient-notes} & \path{tgs-salt-identification-challenge} \\
\path{detecting-insults-in-social-commentary} & \path{20-newsgroups-ciphertext-challenge} & \path{global-wheat-detection} \\
\path{walmart-recruiting-store-sales-forecasting} & \path{word2vec-nlp-tutorial} & \path{whale-categorization-playground} \\
\path{prudential-life-insurance-assessment} & \path{jigsaw-toxic-comment-classification-challenge} & \path{dog-breed-identification} \\
\path{unimelb} & \path{text-normalization-challenge-english-language} & \path{plant-pathology-2020-fgvc7} \\
\path{nomad2018-predict-transparent-conductors} & \path{wsdm-cup-multilingual-chatbot-arena} & \path{dogs-vs-cats-redux-kernels-edition} \\
\path{amazon-employee-access-challenge} & \path{text-normalization-challenge-russian-language} & \path{petfinder-pawpularity-score} \\
\path{poker-rule-induction} & \path{llm-classification-finetuning} & \path{plant-seedlings-classification} \\
\path{allstate-purchase-prediction-challenge} & \path{stumbleupon} & \path{shopee-product-matching} \\
\path{dont-call-me-turkey} & \path{linking-writing-processes-to-writing-quality} & \path{the-nature-conservancy-fisheries-monitoring} \\
\bottomrule
\end{tabular}
\end{table}

\begin{table}[h]
\centering
\caption{\textbf{The 14 held-out competitions of MLE-Dojo (test).}}
\label{tab:heldout-tasks}
\setlength{\tabcolsep}{5pt}
\renewcommand{\arraystretch}{1.15}
\small
\begin{tabular}{>{\raggedright\arraybackslash}p{6.6cm} l >{\raggedright\arraybackslash}p{6.0cm}}
\toprule
\textbf{Competition} & \textbf{Category} & \textbf{Task} \\
\midrule
\path{GiveMeSomeCredit} & Tabular & Predict two-year default risk of borrowers \\
\path{forest-cover-type-kernels-only} & Tabular & Predict forest cover type from cartographic features \\
\path{novozymes-enzyme-stability-prediction} & Tabular & Rank thermostability of enzyme variants \\
\path{integer-sequence-learning} & Tabular & Predict the next term of integer sequences \\
\path{afsis-soil-properties} & Tabular & Predict five soil properties from infrared spectra \\
\path{quora-question-pairs} & Text & Decide whether two questions are duplicates \\
\path{AI4Code} & Text & Recover the order of markdown cells in notebooks \\
\path{jigsaw-unintended-bias-in-toxicity-classification} & Text & Detect toxicity while controlling identity bias \\
\path{quora-insincere-questions-classification} & Text & Flag insincere questions \\
\path{uw-madison-gi-tract-image-segmentation} & Image & Segment stomach and bowel in MRI scans \\
\path{invasive-species-monitoring} & Image & Detect an invasive plant species in photos \\
\path{kuzushiji-recognition} & Image & Detect and transcribe cursive Japanese characters \\
\path{bengaliai-cv19} & Image & Classify components of handwritten Bengali graphemes \\
\path{cassava-leaf-disease-classification} & Image & Classify cassava leaf diseases \\
\bottomrule
\end{tabular}
\end{table}

\paragraph{DSBench.}
The transfer suite is the data-modeling split of
DSBench~\citep{jing2024dsbench}, whose tasks are all drawn from Kaggle
(74 tasks per the paper, 75 task directories in the released data). We
keep the 60 tasks that run end to end in our sandbox. The other 15 are
excluded because the training data is too large for the sandbox (seven
tasks), the sample submission is missing (eight tasks), or the
submission format exceeds the scaffold's column limit (one task). The category columns of
Table~\ref{tab:main} contain 19 binary classification, 10 multi-class
classification, and 23 regression tasks, and the remaining 8 tasks
with non-standard metrics form the Other column. We also checked these 60 tasks against the 45 training competitions
with exact, normalized, and fuzzy name matching, and found no overlap.

\section{Experimental Details}
\label{app:experiment_details}

\paragraph{Training configuration.}
Both scales train with GRPO, eight groups per step and $n=8$
trajectories per group, with up to four interaction turns per
trajectory, at most 4096 generated tokens per turn, and observations
truncated to 1024 tokens. Optimization uses a constant learning rate
of $1\times10^{-6}$, a KL coefficient of $0.04$, and an entropy
coefficient of $0.002$, and rollouts are sampled at temperature $1.0$
with top-$p$ $1.0$. Real execution runs each solution in an
isolated environment with a 1200-second budget. The world model serves
the agent backbone with a 12k-token context and returns its prediction
within a 1024-token budget, and every step at least one group is
graded by real execution to feed the anchor pool.

\paragraph{Sandbox environment.}
Each solution executes in an isolated sandbox built as a Python 3.11
virtual environment on Linux, with a single \textasciitilde{}40\,GB
NVIDIA GPU and the path contract of the agent prompt, reading inputs
from \path{DATA_DIR} and writing the submission to
\path{SUBMISSION_PATH}. The environment holds 122 packages including
transitive dependencies, and installing the directly requested set of
Table~\ref{tab:sandbox} into a clean virtual environment reproduces
it.

\begin{table}[H]
\centering
\caption{\textbf{Libraries preinstalled in the execution sandbox}, by
category, with the exact versions.}
\label{tab:sandbox}
\setlength{\tabcolsep}{6pt}
\renewcommand{\arraystretch}{1.25}
\small
\begin{tabular}{l >{\raggedright\arraybackslash}p{0.66\textwidth}}
\toprule
\textbf{Category} & \textbf{Libraries (version)} \\
\midrule
Scientific computing & numpy 1.26.4, pandas 2.2.3, scipy 1.17.1,
statsmodels 0.14.6, sympy 1.14.0, networkx 3.6.1 \\
Classical ML & scikit-learn 1.4.2, xgboost 3.2.0, lightgbm 4.6.0,
catboost 1.2.10, joblib 1.5.3 \\
Deep learning (GPU) & torch 2.12.0 (CUDA 13.0), torchvision 0.27.0,
tensorflow 2.21.0, keras 3.14.1 \\
NLP & transformers 5.12.1, tokenizers 0.22.2, nltk 3.9.4 \\
Imaging and plotting & Pillow 12.2.0, matplotlib 3.10.9,
seaborn 0.13.2, plotly 6.6.0 \\
Utilities & h5py 3.14.0, tqdm 4.67.3, requests 2.34.2, regex \\
\bottomrule
\end{tabular}
\end{table}

\paragraph{Prompting the world model.}
\label{app:wm-prompt}
The world model is never fine-tuned and works by prompting alone. Its
prompt contains the same task description that the agent sees and the
agent's current solution, and it is instructed to simulate the
execution and report the outcome in the same format as the real
environment, from which the score is aggregated. The context is capped at
12k tokens and the prediction at 1024 tokens.
Appendix~\ref{app:prompts} sketches the structure of this prompt and
of the agent prompt.

\paragraph{Recalibration.}
The monotone map of Equation~\eqref{eq:isotonic} is the identity until
200 anchor pairs have accumulated, is first fit at that point, and is
refit after every 64 new pairs so that the correction follows the
drift of the world model.

\paragraph{Calibrating the reference disagreement.}
The disagreement $\hat\eta^{2}$ is normalized by the within-group
spread of the true scores, and this normalization is what lets a
reward-level statistic estimate a gradient-level ratio. The sampling
variance $V_{E}$ and the noise-injected excess of
$V_{WM}$ arise from the same estimator applied to the same
trajectories, so the conversion from score fluctuation to gradient
fluctuation largely cancels in their ratio, and what remains is the
noise variance measured against the natural spread of the scores,
which is $\hat\eta^{2}$ up to a conversion factor of order one shared
between the two streams. A single measurement at the end of warmup
fixes this factor, $c = 1/\hat\eta^{2}_{\mathrm{cal}}$, and we find the
calibrated value in one of our actual runs to be $0.96$, confirming that
it is indeed of order one. The fused variance is additionally
second-order insensitive to the weight (Lemma~\ref{lem:ivw}), so the
precision of this single measurement is not critical.

\paragraph{Bounded anchor weight.}
For training stability, we bound the anchor factor of
Equation~\eqref{eq:ivw} within $[1, w_{\max}]$ with $w_{\max}=4$,
which acts as a regularization on the fused gradient. The lower bound
keeps anchor groups from being down-weighted below the predicted
groups, and the upper bound prevents a transient spike of the measured
disagreement from letting a few anchor groups dominate a batch. The
tracked noise is likewise estimated over a recent window of the pool
$\mathcal{P}$ rather than the full history, so that the measurement
tracks the current policy.

\paragraph{VLA setup.}
The policy is MiniVLA-1B~\citep{belkhale2024minivla}, a
Prismatic-style model with a Qwen2.5-0.5B language backbone and a
vector-quantized action head that encodes an eight-step action chunk
into seven discrete tokens, pretrained on LIBERO-90. All models first
run supervised fine-tuning on the 50 official demonstrations per task,
and then GRPO for 40 steps. Each step samples 64 rollouts per task,
and each rollout takes at most 520 environment steps, the benchmark's
standard horizon. Training uses the first 16 official initial states of each
task while evaluation uses all 50, so most evaluated initial states
are never seen in training. Evaluation is greedy, with eight repetitions per initial state, since
the simulator's physics is stochastic.


\section{Prompts}
\label{app:prompts}

This appendix documents the interfaces of every model we prompt, in
both evaluation domains: the AutoResearch agent and its world model in
Appendix~\ref{app:prompts-ar}, and the VLA policy and its world model
in Appendix~\ref{app:prompts-vla}. The AutoResearch prompts are too
long to reproduce in full, so we sketch their structure, quote the
load-bearing passages verbatim, and mark every omission with
[\ldots]. The VLA interfaces are short enough to give in full.

\subsection{AutoResearch Agent}
\label{app:prompts-ar}

\paragraph{The agent prompt.}
Each turn, the AutoResearch agent sees a fixed system prompt, a user
prompt holding an automatically generated task overview, and the full
history of its previous attempts with their execution feedback. The
system prompt consists of the seven segments sketched below. The user
prompt is the first user message of the conversation, as the later
turns simply accumulate the interaction history on top of it, and the
task overview inside is generated by pure introspection over the task
files, with no manual per-task information. Since that overview is
what makes the prompt concrete, we instantiate it on one task and
carry the same task through the rest of this appendix, namely the
Kaggle competition \texttt{jigsaw-toxic-comment-classification\-challenge},
a six-label text classification task from our training pool. The
sections below are the real ones the builder emits, abridged with
[\ldots] where the full text is long.

\begin{promptbox}{System prompt of the AutoResearch agent}
\footnotesize
\textbf{Role and objective.}\\
\texttt{You are an expert ML engineer competing to MAXIMIZE your
leaderboard Position Score over UP TO K attempts (Position Score =
your percentile rank, higher = better; earned ONLY when your code runs
and writes a VALID submission). [\ldots]}

\medskip
\textbf{Two-phase strategy.}\\
\texttt{PHASE 1 --- LAND A VALID SUBMISSION FIRST (turn 1): write a
SIMPLE solution you are confident RUNS end-to-end and writes
SUBMISSION\_PATH in the exact required format. A valid submission is
your safety net. [\ldots]}\\
\texttt{PHASE 2 --- IMPROVE INCREMENTALLY (later turns): START FROM
YOUR LAST WORKING CODE, copy it, and change EXACTLY ONE thing per
turn. [\ldots]}\\
\texttt{If a change ERRORS or scores WORSE, discard it and try a
DIFFERENT single change. Never rewrite from scratch once something
works.}

\medskip
\textbf{Environment contract.}\\
\texttt{ENVIRONMENT CONTRACT (these Python variables are ALREADY
defined when your code runs): [\ldots]}\\
\texttt{DATA\_DIR: absolute path to the folder with the data files
listed below. [\ldots]}\\
\texttt{SUBMISSION\_PATH: absolute path to write your submission CSV
to. [\ldots]}\\
\texttt{A modern NVIDIA GPU (\textasciitilde{}40 GB) is available.
[\ldots]}\\
\texttt{Do NOT hardcode '/kaggle/...', '/content/...', or any other
path.}

\medskip
\textbf{Sandbox runtime.}\\
\texttt{Runtime: Python 3.11 on Linux (this is the sandbox where YOUR
code executes). Installed libraries you may import (name version):
numpy 1.26, pandas 2.2, scikit-learn 1.4, xgboost 3.2, torch 2.12
(CUDA), transformers 5.12 [\ldots]}\\
\texttt{Anything NOT in this list is not installed (e.g. no opencv, no
jax).}

\medskip
\textbf{Response format.}\\
\texttt{EACH TURN, respond in TWO parts, in this order: [\ldots]}\\
\texttt{1) ANALYSIS (2-4 short sentences of plain text, FIRST): name
exactly what went wrong and the ONE change that fixes it, or the ONE
change you will make to raise the score. [\ldots]}\\
\texttt{2) CODE: a SINGLE fenced Python block --- a complete,
self-contained script.}

\medskip
\textbf{Feedback loop.}\\
\texttt{After your code runs, the environment returns your print()
output and --- if a valid submission was written --- your leaderboard
Position and Raw Score; otherwise the full error traceback. [\ldots]}

\medskip
\textbf{Rules.}\\
\texttt{Keep a valid submission as your fallback; only replace it with
code that RUNS and scores BETTER. [\ldots]}\\
\texttt{The submission CSV must match the required format EXACTLY.
Keep code efficient so it finishes within the time limit.}
\end{promptbox}

\begin{promptbox}{User prompt of the AutoResearch agent}
\footnotesize
\textbf{Competition.}\\
\texttt{jigsaw-toxic-comment-classification-challenge}

\medskip
\textbf{Description} {\normalfont(head of the official competition
text, capped at 900 tokens; if the cap cuts off the evaluation
section, its head is force-appended)}\textbf{.}\\
\texttt{Build a multi-headed model that detects different types of
toxicity --- toxic, severe\_toxic, obscene, threat, insult, and
identity\_hate --- in Wikipedia talk-page comments. [\ldots]}\\
\texttt{Evaluation: mean column-wise ROC AUC over the six label
columns.}

\medskip
\textbf{Data inventory} {\normalfont(every file under
\texttt{DATA\_DIR}, with the full schema of each table)}\textbf{.}\\
\texttt{train.csv: 159571 rows $\times$ 8 cols: [id (str),
comment\_text (str), toxic (int), severe\_toxic (int), obscene (int),
threat (int), insult (int), identity\_hate (int)]}\\
\texttt{\phantom{0000}string-column examples -> id: e.g. '0000997932d777bf';
comment\_text: e.g. "Explanation Why the edits made"}\\
\texttt{test.csv: 153164 rows $\times$ 2 cols: [id (str),
comment\_text (str)]}\\
\texttt{sample\_submission.csv: 153164 rows $\times$ 7 cols: [id
(str), toxic (float), severe\_toxic (float), obscene (float), threat
(float), insult (float), identity\_hate (float)]}

\medskip
\textbf{Target columns} {\normalfont(present in train but not in
test)}\textbf{.}\\
\texttt{['toxic', 'severe\_toxic', 'obscene', 'threat', 'insult',
'identity\_hate']}

\medskip
\textbf{Submission contract} {\normalfont(written exactly this way to
\texttt{SUBMISSION\_PATH})}\textbf{.}\\
\texttt{columns = ['id', 'toxic', 'severe\_toxic', 'obscene',
'threat', 'insult', 'identity\_hate']}\\
\texttt{FILL the prediction column(s) with your model output:
['toxic', \ldots, 'identity\_hate']. COPY the column(s) ['id']
straight from the test set (id / passthrough).}\\
\texttt{Example rows (cells truncated for display):}\\
\texttt{\phantom{0000}00001cee341fdb12~~0.5~~0.5~~0.5~~0.5~~0.5~~0.5}\\
\texttt{\phantom{0000}0000247867823ef7~~0.5~~0.5~~0.5~~0.5~~0.5~~0.5}

\medskip
\textbf{Trigger.}\\
\texttt{Begin.}
\end{promptbox}

\paragraph{The world model prompt.}
The world model shares one simulator system prompt across eight
parallel single-aspect checks, which cover imports and names, API
calls against the installed versions, file and column references
against the data inventory, shapes and dtypes, the compute budget,
remaining runtime errors, submission validity, and solution quality.
All eight checks share the same system prompt. The user prompt of each
check holds its specific instruction followed by the case block,
namely the task overview above and the agent's code with 1-indexed
line numbers, and the check returns a verdict in strict JSON. The
verdicts are then aggregated into the scalar score $\hat r$, where
failures cap the score and a valid solution is scored by the quality
check. The user prompt below is that of the file and column reference
check, instantiated on the same task as above and on a real turn-one
attempt at it. That attempt assigns the literal strings
\texttt{"DATA\_DIR"} and \texttt{"SUBMISSION\_PATH"} to the two
variables instead of reading them from the environment, so the first
\texttt{read\_csv} resolves to a path that does not exist. This is
exactly the failure the check is meant to catch, and we print the
verdict it returns underneath the prompt.

\begin{promptbox}{System prompt of the world model}
\footnotesize
\textbf{Role.}\\
\texttt{You are a precise execution-and-grading SIMULATOR for
Kaggle/MLE Python solutions. You do NOT execute code. You predict ---
by careful static analysis --- exactly what the real sandbox would
report.}

\medskip
\textbf{Environment brief.}\\
\texttt{Sandbox: Python 3.11 on Linux, ONE \textasciitilde{}40GB GPU
available. Installed (name version): numpy 1.26, pandas 2.2
[\ldots]}\\
\texttt{Execution is killed at 1200s. The code reads data from
os.environ['DATA\_DIR'] and MUST write predictions to
os.environ['SUBMISSION\_PATH']. [\ldots]}

\medskip
\textbf{Inputs.}\\
\texttt{You are given the TASK OVERVIEW (with a DATA INVENTORY: real
files, columns, row counts, dtypes, and the sample\_submission format)
and the agent's CODE (its lines are 1-indexed as shown). Analyze ONLY
the failure mode this check asks about.}

\medskip
\textbf{Calibration policy.}\\
\texttt{CRITICAL --- DEFAULT TO PASS. Real submissions usually RUN
FINE; most code does NOT trigger this check's failure mode. [\ldots]}\\
\texttt{Return verdict='fail' ONLY when you can cite a SPECIFIC line
and a CONCRETE, CERTAIN reason it raises this exact error. [\ldots]}\\
\texttt{A false 'fail' on working code is WORSE than a missed error.
When in doubt, PASS.}

\medskip
\textbf{Error specificity.}\\
\texttt{If you do fail it, be concrete: identify the EXACT offending
source line, its line number, the precise Python exception class, and
the real error message. [\ldots]}\\
\texttt{The agent will READ your env\_feedback to fix its code next
turn, so env\_feedback MUST look byte-similar to the real sandbox
output. [\ldots]}

\medskip
\textbf{Output format.}\\
\texttt{Output STRICT JSON only, no prose around it:}\\
\texttt{\{"reason": "<= 2 sentences, the specific cause>", "verdict":
"<pass|fail>", "confidence": <float 0..1>, "error\_type": [\ldots],
"error\_line": [\ldots], "env\_feedback": [\ldots]\}}
\end{promptbox}

\begin{promptbox}{User prompt of the world model}
\footnotesize
\textbf{Check instruction.}\\
\texttt{CHECK: Will this code raise FileNotFoundError or
KeyError(column) --- i.e. does it reference a file/path/column that
does not match the DATA INVENTORY? [\ldots]}

\medskip
\textbf{Decision rule.}\\
\texttt{verdict=fail if a FileNotFoundError or KeyError is certain;
env\_feedback = the exact message.}

\medskip
\textbf{Task.}\\
\texttt{jigsaw-toxic-comment-classification-challenge}

\medskip
\textbf{Task overview and data inventory} {\normalfont(the same
overview the agent received, reproduced verbatim)}\textbf{.}\\
\texttt{Build a multi-headed model that detects different types of
toxicity [\ldots]}\\
\texttt{train.csv: 159571 rows $\times$ 8 cols: [id (str),
comment\_text (str), toxic (int), \ldots] [\ldots]}

\medskip
\textbf{Agent code} {\normalfont(the attempt under review, lines
1-indexed)}\textbf{.}\\
\texttt{\phantom{00}1~~import os}\\
\texttt{\phantom{00}2~~import pandas as pd}\\
\texttt{\phantom{00}3~~import numpy as np}\\
\texttt{\phantom{00000}[\ldots]}\\
\texttt{\phantom{00}9~~\# Paths (using the provided environment variables)}\\
\texttt{\phantom{0}10~~DATA\_DIR = "DATA\_DIR"}\\
\texttt{\phantom{0}11~~SUBMISSION\_PATH = "SUBMISSION\_PATH"}\\
\texttt{\phantom{0}12~}\\
\texttt{\phantom{0}13~~\# Load data}\\
\texttt{\phantom{0}14~~train\_df = pd.read\_csv(os.path.join(DATA\_DIR, "train.csv"))}\\
\texttt{\phantom{0}15~~test\_df = pd.read\_csv(os.path.join(DATA\_DIR, "test.csv"))}\\
\texttt{\phantom{00000}[\ldots]}
\end{promptbox}

\begin{promptbox}{Verdict returned by the world model}
\footnotesize
\texttt{\{}\\
\texttt{\phantom{00}\textbf{"reason"}: "DATA\_DIR is the literal string
\textbackslash{}"DATA\_DIR\textbackslash{}" instead of
os.environ['DATA\_DIR'], so read\_csv looks for 'DATA\_DIR/train.csv'
which doesn't exist.",}\\
\texttt{\phantom{00}\textbf{"verdict"}: "fail",}\\
\texttt{\phantom{00}\textbf{"confidence"}: 0.95,}\\
\texttt{\phantom{00}\textbf{"error\_type"}: "FileNotFoundError",}\\
\texttt{\phantom{00}\textbf{"error\_line"}: 14,}\\
\texttt{\phantom{00}\textbf{"env\_feedback"}: "=== Code Execution
Results ===\textbackslash{}nExecution failed: Traceback (most recent
call last):\textbackslash{}n~~File
\textbackslash{}"/tmp/tmph32t8gzr.py\textbackslash{}", line 28, in
<module>\textbackslash{}n~~~~train\_df =
pd.read\_csv(os.path.join(DATA\_DIR,
\textbackslash{}"train.csv\textbackslash{}")) [\ldots]"}\\
\texttt{\}}
\end{promptbox}

\subsection{VLA Agent}
\label{app:prompts-vla}

\paragraph{The agent prompt.}
At every control step the VLA agent is prompted from scratch, as the
policy is memoryless and no interaction history accumulates on top of
the first message. MiniVLA-1B is built on the \texttt{openvla-mini}
(Prismatic) codebase and we reuse its prompt builder unchanged, so the
prompt is the stock Qwen-2 chat template of the backbone, with one
system turn and one user turn. We add no prompt engineering of our
own: the system turn is the stock backbone line, and the user turn
carries only the current camera image and the task instruction,
lower-cased. Keeping the template byte-identical to the upstream one
matters because the checkpoint we start from is pretrained behind
exactly this template, so any deviation in wording, casing, or the
chat scaffold would move the policy off its pretraining distribution
before RL even starts. We instantiate it below on task 8 of
LIBERO-Long, \emph{put both moka pots on the stove}, and carry that
task through the rest of this subsection.

\begin{promptbox}{System prompt of the VLA agent}
\footnotesize
\textbf{Role} {\normalfont(the stock backbone line, unchanged)}\textbf{.}\\
\texttt{<|im\_start|>system}\\
\texttt{You are Qwen, created by Alibaba Cloud. You are a helpful
assistant.<|im\_end|>}
\end{promptbox}

\begin{promptbox}{User prompt of the VLA agent}
\footnotesize
\textbf{Visual observation} {\normalfont(the third-person agentview
frame the simulator renders at the current step, resized to
$224\times224$; it enters the sequence as projected patch embeddings,
not as text)}\textbf{.}\\[3pt]
\hspace*{1em}\includegraphics[width=2.1cm]{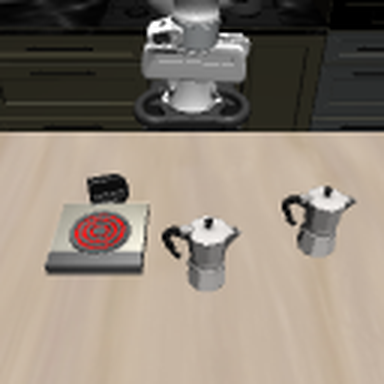}

\medskip
\textbf{Instruction.}\\
\texttt{<|im\_start|>user}\\
\texttt{What action should the robot take to put both moka pots on the
stove?<|im\_end|>}\\
\texttt{<|im\_start|>assistant}

\medskip
\textbf{Response format.}\\
\texttt{<exactly 7 discrete action tokens, one code from each of the 7
residual codebooks of the action head; no free-form text is
generated>}
\end{promptbox}

\paragraph{The world model prompt.}
The world model of this domain is Robometer-4B, a full fine-tune of
Qwen3-VL-4B-Instruct on robot-progress data. Unlike the AutoResearch
world model it is never asked to simulate an execution it has not run:
it reads the frames the policy actually produced and estimates how far
the task has progressed at each of them. It also carries no system
prompt at all. One rollout becomes one single-turn conversation, whose
user message holds the instruction text below followed by the frames
the policy observed, uniformly subsampled to eight, the window the
model was trained on. The reply is a per-frame progress value together
with a per-frame success probability.

\begin{promptbox}{User prompt of the world model}
\footnotesize
\textbf{Instruction} {\normalfont(verbatim; the task string is
inserted)}\textbf{.}\\
\texttt{The task for the robot is 'put both moka pots on the stove'.
Given the trajectory video, predict the task progress at each frame,
how far along the robot is towards completing the task, a float
between 0 and 1, where 0 is the starting state and 1 is when the task
is completed. If the robot is not performing the same task, predict 0
progress.}

\medskip
\textbf{Rollout frames} {\normalfont(8 frames uniformly subsampled
from the rollout; 4 of them shown, spanning start to end of the
episode)}\textbf{.}\\[3pt]
\hspace*{1em}\includegraphics[width=1.9cm]{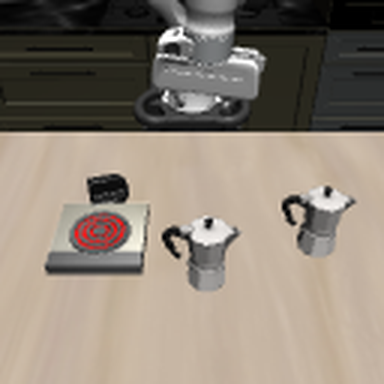}\hspace{5pt}%
\includegraphics[width=1.9cm]{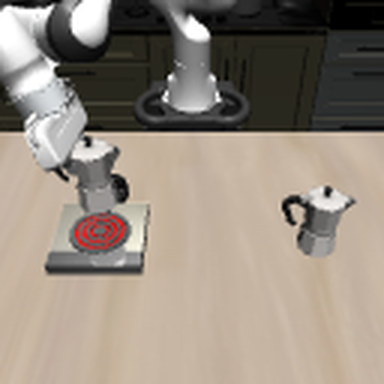}\hspace{5pt}%
\includegraphics[width=1.9cm]{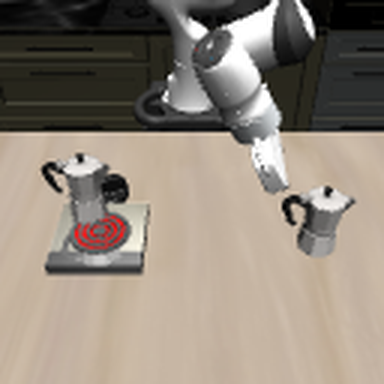}\hspace{5pt}%
\includegraphics[width=1.9cm]{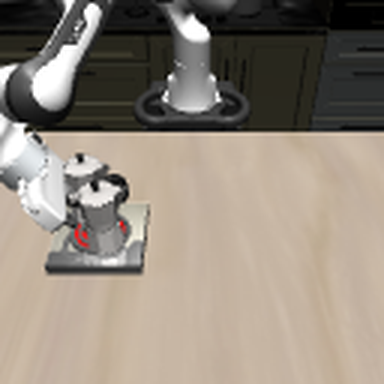}

\medskip
\textbf{Response format.}\\
\texttt{<a progress value in [0,1] and a success probability in [0,1]
for each of the 8 frames, read out by the discrete progress head and
the success head>}
\end{promptbox}

\end{document}